\documentclass{article} % For LaTeX2e
\usepackage{iclr2027_conference,times}
\usepackage{amsmath,amsfonts,bm}

\def\eqref#1{equation~\ref{#1}}
\def\1{\bm{1}}

\DeclareMathAlphabet{\mathsfit}{\encodingdefault}{\sfdefault}{m}{sl}
\SetMathAlphabet{\mathsfit}{bold}{\encodingdefault}{\sfdefault}{bx}{n}

\usepackage{amssymb,amsthm,array,booktabs}
\usepackage{graphicx}
\usepackage{hyperref}
\usepackage{url}
\usepackage{xcolor}
\definecolor{PFMBlue}{HTML}{4666E5}
\definecolor{PFMOrange}{HTML}{D85A00}
\newcommand{\pfmVelocity}[1]{{\color{PFMBlue}#1}}
\newcommand{\pfmDiffusion}[1]{{\color{PFMOrange}#1}}

\usepackage[normalem]{ulem}
\usepackage{titletoc}

\newif\ifpfmMethodApplications

\AtBeginDocument{%
  \setlength{\abovedisplayskip}{4pt plus 1pt minus 2pt}%
  \setlength{\belowdisplayskip}{4pt plus 1pt minus 2pt}%
  \setlength{\abovedisplayshortskip}{0pt plus 1pt}%
  \setlength{\belowdisplayshortskip}{3pt plus 1pt minus 1pt}%
}

\newtheorem{proposition}{Proposition}[section]
\theoremstyle{definition}
\newtheorem{pfmalgorithm}{Algorithm}
\theoremstyle{plain}
\title{Posterior Flow Matching}
\author{%
\textbf{Chenrui Ma$^1$ \quad Xi Xiao$^2$ \quad Lin Zhao$^3$ \quad Xinru Jiang$^3$}\\
\textbf{Tianyang Wang$^2$ \quad Yanning Shen$^4$ \quad Nando Fioretto$^1$}\\
{\small\normalfont $^1$\texttt{\{zzz8fa,fioretto\}@virginia.edu} \quad
 $^2$\texttt{\{xxiao,tw2\}@uab.edu}}\\
{\small\normalfont $^3$\texttt{\{zhao.lin1,jiang.xinru\}@northeastern.edu} \quad
 $^4$\texttt{yannings@uci.edu}}%
}
\iclrfinalcopy

\begin{document}
\maketitle
\ificlrfinal
  \lhead{Preprint} % The enabled switch selects preprint, not an accepted-paper header.
  \vspace{-12pt} % Share the space saving across the abstract's surrounding gaps.
\fi

\begin{abstract}
\ificlrfinal\vspace{-6pt}\fi
Flow matching learns how to transport noise toward data, but its mean prediction does not directly report the endpoint uncertainty remaining at an intermediate state. This paper introduces \emph{Posterior Flow Matching} (PFM), a general generative modeling framework that learns conditional velocity means and componentwise variances with a Gaussian posterior model. The same model evaluation supplies both a transport prediction and endpoint uncertainty, making posterior diagnostics available throughout generation. These learned moments support a mean-driven ordinary differential equation (ODE) and a posterior-driven stochastic differential equation (SDE), whose local noise scales adapt to the predicted uncertainty. Across image, point-cloud, molecular, and action generation, PFM improves generation quality and provides endpoint-posterior readouts throughout generation.
\end{abstract}

\ificlrfinal\vspace{-6pt}\fi
\vspace{-6pt}
\section{Introduction}
\label{sec:introduction}

Diffusion models~\citep{ho2020denoising,song2021score,cai2025diffusion,ma2024sit,rombach2022highresolutionimagesynthesislatent,hoogeboom2023simple,yu2024representation,zheng2025diffusion,wang2024rectified} and flow matching (FM)~\citep{lipman2023flow,albergo2023building,esser2024scaling,liu2023flow,jin2025pyramidal,klein2023equivariant,ma2026transition,ma2026drift} generate data by progressively transforming noise into structured outputs~\citep{karras2022elucidating}. Diffusion learns a score for reversing a noising process~\citep{lai2025principlesdiffusionmodels}, while FM learns a velocity field along a reference path~\citep{lipman2024flowmatchingguidecode,lee2024improving,kornilov2024optimal}. \textit{These predictions guide generation but do not directly report the endpoint uncertainty remaining at an intermediate state.}

This uncertainty arises because several source-data pairs can be compatible with the current intermediate state, each prescribing a different velocity toward its data endpoint~\citep{guo2025variational,zhang2025towards,zhang2025hierarchical,ma2025learningstraightflowsvariational}. Standard FM regression estimates only the conditional mean of these velocity targets~\citep{lipman2023flow}, which supplies the transport direction but does not report how widely the compatible endpoints are dispersed. Learning this dispersion alongside the mean would reveal which parts of a developing output remain uncertain and provide local scales for stochastic perturbations, connecting uncertainty estimation directly to generation.

We realize this idea with \emph{Posterior Flow Matching} (PFM), which extends FM regression to learn the conditional velocity mean and componentwise variance with a Gaussian posterior model. The same model evaluation supplies both the transport prediction and endpoint-posterior readouts. The velocity mean defines a mean-driven ordinary differential equation (PFM ODE), while the velocity covariance sets the local noise scales of a posterior-driven stochastic differential equation (PFM SDE). Under either sampler, the endpoint-posterior readouts describe the evolving output and its uncertainty without a separate diagnostic model.
We evaluate PFM on image generation, 3D point-cloud generation, molecular conformation generation, and action generation for robotics control. Across these tasks, PFM SDE improves generation quality over PFM ODE and other baselines. Covariance ablations further support using the learned posterior variance to guide sampling. Both samplers use the same trained posterior model and provide endpoint uncertainty estimates throughout generation.

The paper provides three key contributions. {\em First}, we extend flow matching regression to posterior learning by jointly fitting the conditional velocity mean and componentwise variance with one Gaussian negative log-likelihood (NLL) objective. An affine relation converts these moments into endpoint-posterior readouts at essentially no additional cost.
{\em Second}, we derive the mean-driven ODE and posterior-driven SDE samplers from these moments. 
Both provably preserve the reference marginals under exact dynamics, and the SDE dissipates the relative entropy of density perturbations, weighting each coordinate by its predicted velocity variance.
% Both provably preserve the reference marginals under exact dynamics, and the SDE dissipates the relative entropy of density perturbations with each coordinate's contribution to dissipation weighted by its predicted velocity variance.
{\em Third}, we show that this single formulation transfers across image, point-cloud, molecular, and action generation, where it improves generation quality while its posterior diagnostics quantitatively describe endpoint uncertainty throughout generation.
Through this work, we call for treating uncertainty as an integral part of generation rather than a post-hoc diagnostic, laying a principled foundation for reliable and interpretable generative models.

\vspace{-4pt}
\section{Related Work}
\label{sec:related-work}

Our work connects flow matching and diffusion models with posterior uncertainty estimation and variance-aware generation.\\
\textbf{Flow matching and diffusion models.}
Diffusion probabilistic models learn to reverse gradual data corruption through denoising objectives \citep{ho2020denoising}. Score-based modeling connects reverse SDEs to probability-flow ODEs, enabling stochastic and deterministic sampling within one model \citep{song2021score}. FM learns velocity fields by regressing conditional targets along reference paths \citep{lipman2023flow,tong2023improving,pooladian2023multisample,albergo2024stochastic,ma2025stochastic}. Stochastic interpolants place transport and diffusion in a common framework, where a density path admits dynamics with adjustable diffusion \citep{albergo2025stochastic,ma2024sit}. PFM builds on this correspondence: its learned posterior mean supplies transport, and its covariance selects local diffusion scales within the compatible family.\\ 
\textbf{Uncertainty quantification in generative models.}
BayesDiff propagates uncertainty from a last-layer Laplace approximation through diffusion \citep{kou2024bayesdiff}, while Generative Uncertainty combines Bayesian model uncertainty with a semantic likelihood for sample assessment \citep{jazbec2025generative}. Posterior moments can also be obtained from denoiser derivatives \citep{manor2024posterior} or velocity derivatives in flow models \citep{lv2026singlepass}. UA-Flow learns heteroscedastic velocity uncertainty with supervision corrected toward a marginal-velocity objective and propagates it for assessment and guidance \citep{han2026uncertainty}; Flowing with Confidence propagates input-dependent network noise along trajectories \citep{dekruiff2026confidence}. PFM instead directly fits the conditional moments of sampled FM velocity targets. The affine endpoint relation then gives endpoint-posterior readouts at the current state, without propagating uncertainty along the trajectory or computing model derivatives to read out the variance.\\
\textbf{Posterior modeling and variance-aware generation.}
Improved DDPM learns reverse-process variances \citep{nichol2021improved}, and Analytic-DPM estimates optimal reverse variances from a pretrained score model \citep{bao2022analytic}. These describe reverse transitions, whereas PFM models conditional velocity and endpoint dispersion. Pixel-wise aleatoric uncertainty has been used to guide diffusion sampling \citep{devita2024aleatoric}. Posterior Augmented Flow Matching reduces training-target variance through posterior-weighted candidate averages \citep{stoica2026posterior}. PFM preserves the sampled targets and fits their conditional moments with one Gaussian NLL. The same covariance supplies endpoint diagnostics and the SDE diffusion tensor, paired with the score and divergence corrections required for compatible transport.

\vspace{-4pt}
\section{Preliminaries}
\label{sec:preliminaries}

We introduce the reference targets used for learning, the score-based view of generation, and the correspondence used to construct compatible samplers. Throughout, \pfmVelocity{blue} highlights velocity targets and posterior means; \pfmDiffusion{orange} highlights posterior variance parameters and diffusion coefficients.

\vspace{-4pt}
\subsection{Reference paths and flow matching}
\label{sec:reference-path}
\vspace{-4pt}

We consider continuous states in $\mathbb R^d$, with generation time $t$ increasing from noise at $t\!=\!0$ to data at $t\!=\!1$. Uppercase letters denote random variables; Table~\ref{tab:notation} in Appendix~\ref{app:notation} collects the notation used throughout. Let $X_0\sim p_0$ and $X_1\sim p_1=p_{\mathrm{data}}$ be independent, and define the reference path and its velocity target by
\begin{equation}
 X_t=(1-t)X_0+tX_1,\qquad \pfmVelocity{V}=X_1-X_0,\qquad Z=(X_0,X_1).
 \label{eq:reference-path}
\end{equation}
Each pair $Z$ fixes an interpolation, while the marginal velocity averages compatible targets:
\begin{equation}
 \pfmVelocity{v}(x,t)=\mathbb E[\pfmVelocity{V}\mid X_t=x],\qquad
 \partial_t p_t=-\nabla_x\!\cdot(\pfmVelocity{v}p_t),
 \label{eq:marginal-velocity}
\end{equation}
where $p_t$ is the law of $X_t$. When this law has a differentiable positive density, its score is $s(x,t)=\nabla_x\log p_t(x)$. A sampler $Y_t$ with $dY_t=\pfmVelocity{v}(Y_t,t)dt$ and $Y_0\sim p_0$ follows these marginals when the flow and density evolution are well posed.

Conditional flow matching (CFM)~\citep{lipman2023flow,albergo2023building} trains a neural velocity $\pfmVelocity{v_\eta}$ using targets conditioned on $Z$:
\begin{equation}
 \mathcal L_{\mathrm{CFM}}(\eta)
 =\frac12\mathbb E_{t,X_0,X_1}\|\pfmVelocity{V}-\pfmVelocity{v_\eta}(X_t,t)\|^2,
 \qquad t\sim r,\quad t<1.
 \label{eq:cfm-loss}
\end{equation}
The time law $r$ is fixed independently of the parameters. This sample-based loss differs from regression to $\pfmVelocity{v}$ by a parameter-independent conditional-variance term, so both objectives have the same gradient~\citep{lipman2024flowmatchingguidecode} under the differentiation conditions in Appendix~\ref{app:fm}.
This regression identifies the conditional mean needed for transport, linking the sample-based objective to the marginal velocity field that drives generation; PFM extends this regression by also learning the conditional variance of the same targets. In conditional generation, $C$ specifies a class, observation, or reference structure held fixed within a trajectory, whereas $Z$ specifies the sampled reference interpolation; thus, PFM conditions on $C$ while its learned variance captures variation across paths $Z$ compatible with $(X_t,C)$. We omit $C$ from the main formulation and present the complete conditional treatment in Appendix~\ref{app:conditional}.
% , covering reference construction, posterior learning, generation, and diagnostics.

\vspace{-4pt}
\subsection{Score-based diffusion}
\label{sec:diffusion}
\vspace{-4pt}

Differently from FM, diffusion starts from a prescribed noising process~\citep{ho2020denoising,song2021score}. Writing noising time as $\tau=1-t$, let
\begin{equation}
 d\bar X_\tau=f(\bar X_\tau,\tau)d\tau+\pfmDiffusion{g}(\tau)dW_\tau,
 \qquad \text{and} \qquad 
 \bar X_0\sim p_{\mathrm{data}},
 \label{eq:forward-diffusion}
\end{equation}
where $f$ is the noising drift, $\pfmDiffusion{g}$ a scalar noise coefficient, and $W$ standard $d$-dimensional Brownian motion. Let $\bar p_\tau$ be the law of $\bar X_\tau$ and $\bar s(x,\tau)=\nabla_x\log\bar p_\tau(x)$ its score. At generation time, a network $s_\phi(x,t)$ estimates $\bar s(x,1\!-\!t)$, through denoising score matching with a known perturbation kernel~\citep{song2021score,lai2025principlesdiffusionmodels} (see~\ref{app:diffusion-training}). In increasing generation time, the reverse SDE and probability-flow ODE are~\citep{song2021score}
\begin{align}
 dY_t&=[-f(Y_t,1-t)+\pfmDiffusion{g}(1-t)^2\bar s(Y_t,1-t)]dt+\pfmDiffusion{g}(1-t)dW_t,
 \label{eq:diffusion-sde}\\
 dY_t&=[-f(Y_t,1-t)+\tfrac12\pfmDiffusion{g}(1-t)^2\bar s(Y_t,1-t)]dt.
 \label{eq:diffusion-ode}
\end{align}
They share the noising-path marginals with the exact score and initialization from the terminal noising law, while in practice, both use $s_\phi$. This ODE--SDE pairing is a special case of the correspondence below, which also permits state-dependent diffusion. The parameters $\eta$, $\phi$, and $\theta$ index FM, diffusion, and PFM, respectively.

\vspace{-4pt}
\subsection{A shared ODE--SDE correspondence}
\label{sec:correspondence}
\vspace{-4pt}

To compare deterministic and stochastic generation within a common framework, we first determine when an ODE and an SDE induce the same evolving density. This correspondence will allow PFM to use its learned posterior variance to control stochasticity while preserving the reference marginals under exact dynamics. We write the stochastic sampler as an It\^o SDE,
\begin{equation*}
 dY_t=\underbrace{b(Y_t,t)dt}_{\text{drift increment}}
 +\underbrace{\pfmDiffusion{B}(Y_t,t)dW_t}_{\text{Brownian noise increment}}.
\end{equation*}
Here, $b$ is the full drift and $\pfmDiffusion{B}$ maps a Brownian increment into state-space noise, so the It\^o form separates directed motion from a zero-mean random perturbation. To quantify this perturbation, define the diffusion tensor $\pfmDiffusion{a}=\pfmDiffusion{B}\pfmDiffusion{B}^\top$, whose matrix divergence is $(\nabla_x\!\cdot \pfmDiffusion{a})_i=\sum_j\partial_{x_j}\pfmDiffusion{a_{ij}}$. The tensor $\pfmDiffusion{a}$ is a local covariance rate, so an increment of duration $h$ has conditional covariance $h\,\pfmDiffusion{a}(x,t)+o(h)$.

The remaining question is how the drift $b$ must change when $\pfmDiffusion{a}$ is introduced so that the sampler continues to follow a prescribed density path $p_t$. Writing $s=\nabla_x\log p_t$ for its score and rewriting the Fokker--Planck equation as a continuity equation gives~\citep{song2021score,albergo2025stochastic}
\begin{equation}
 \begin{aligned}
 \pfmVelocity{v}=b-\frac12(\nabla_x\!\cdot \pfmDiffusion{a}+\pfmDiffusion{a}\,s),
 \qquad
 b=\underbrace{\pfmVelocity{v}}_{\text{ODE velocity}}
 +\underbrace{\frac12(\nabla_x\!\cdot \pfmDiffusion{a}+\pfmDiffusion{a}\,s)}_{\text{drift correction}}.
 \end{aligned}
 \label{eq:correspondence}
\end{equation}
Equation~\ref{eq:correspondence} provides the desired correspondence: after choosing a positive-semidefinite $\pfmDiffusion{a}$, the correction added to $\pfmVelocity{v}$ compensates for the probability flux induced by the stochastic perturbation. Thus, a density path transported by $\pfmVelocity{v}$ also admits an SDE with the corresponding drift; with smooth coefficients, suitable boundary behavior, unique density evolution, and matching initialization, the two dynamics have the same one-time marginals. Appendix~\ref{app:compatibility} gives the conditions and derivation.

Diffusion provides a familiar instance of this construction. For the reverse SDE in Eq.~\ref{eq:diffusion-sde}, $\pfmDiffusion{a}=\pfmDiffusion{g}(1-t)^2I$ is independent of the state, so its divergence vanishes and Eq.~\ref{eq:correspondence} recovers the probability-flow ODE in Eq.~\ref{eq:diffusion-ode}.
For FM (and PFM), using this correspondence requires a score in addition to the learned velocity. To obtain it without training another model, specialize Eq.~\ref{eq:reference-path} to $X_0\sim\mathcal N(0,I)$; the resulting Gaussian kernel gives~\citep{lipman2024flowmatchingguidecode}
\begin{equation}
 X_t\mid X_1\sim\mathcal N(tX_1,(1-t)^2I),\qquad
 s(x,t)=\frac{t\,\pfmVelocity{v}(x,t)-x}{1-t},\quad t<1.
 \label{eq:velocity-score}
\end{equation}
A velocity prediction therefore supplies the derived score $\widehat s_\eta=(t\,\pfmVelocity{v_\eta}-x)/(1-t)$ needed by the drift correction. Standard FM can then choose a prescribed isotropic tensor $\pfmDiffusion{a}=\pfmDiffusion{\kappa}(t)I$, with $\pfmDiffusion{\kappa}(t)\geq0$, which assigns the same covariance rate in every direction~\citep{albergo2025stochastic}. Its drift follows from Eq.~\ref{eq:correspondence}, its noise amplitude is $\sqrt{\pfmDiffusion{\kappa}(t)}$, and $\pfmDiffusion{\kappa}\equiv0$ recovers the ODE. This construction provides a stochastic FM sampler but does not use posterior information to adapt its noise across states and coordinates; PFM will instead choose the diffusion tensor from learned posterior moments in Section~\ref{sec:sampling}. Appendices~\ref{app:score} and~\ref{app:baseline-sampling} derive the score identity and baseline samplers.

\vspace{-4pt}
\section{Method}
\label{sec:method}

%\reviewadd{To turn FM's transport prediction into a posterior that supports both uncertainty estimation and posterior-driven sampling, }
PFM learns the conditional velocity mean and variance to describe the data endpoints compatible with an intermediate state (Figure~\ref{fig:pfm-method}, with the full overview in Figure~\ref{fig:pfm-method-full}). We first connect posterior fitting to endpoint information and define a Gaussian learning objective. The learned moments then determine ODE and SDE sampling and provide endpoint-posterior readouts for diagnostics.
\begin{figure}[t]
\centering
% Both versions reuse the original vector artwork at the same scale.
\makebox[\linewidth][c]{%
  % The bottom crop ends in the gap between panels B and C.
  \includegraphics[trim=20bp 767bp 20bp 8bp,clip,width=1.05\linewidth]{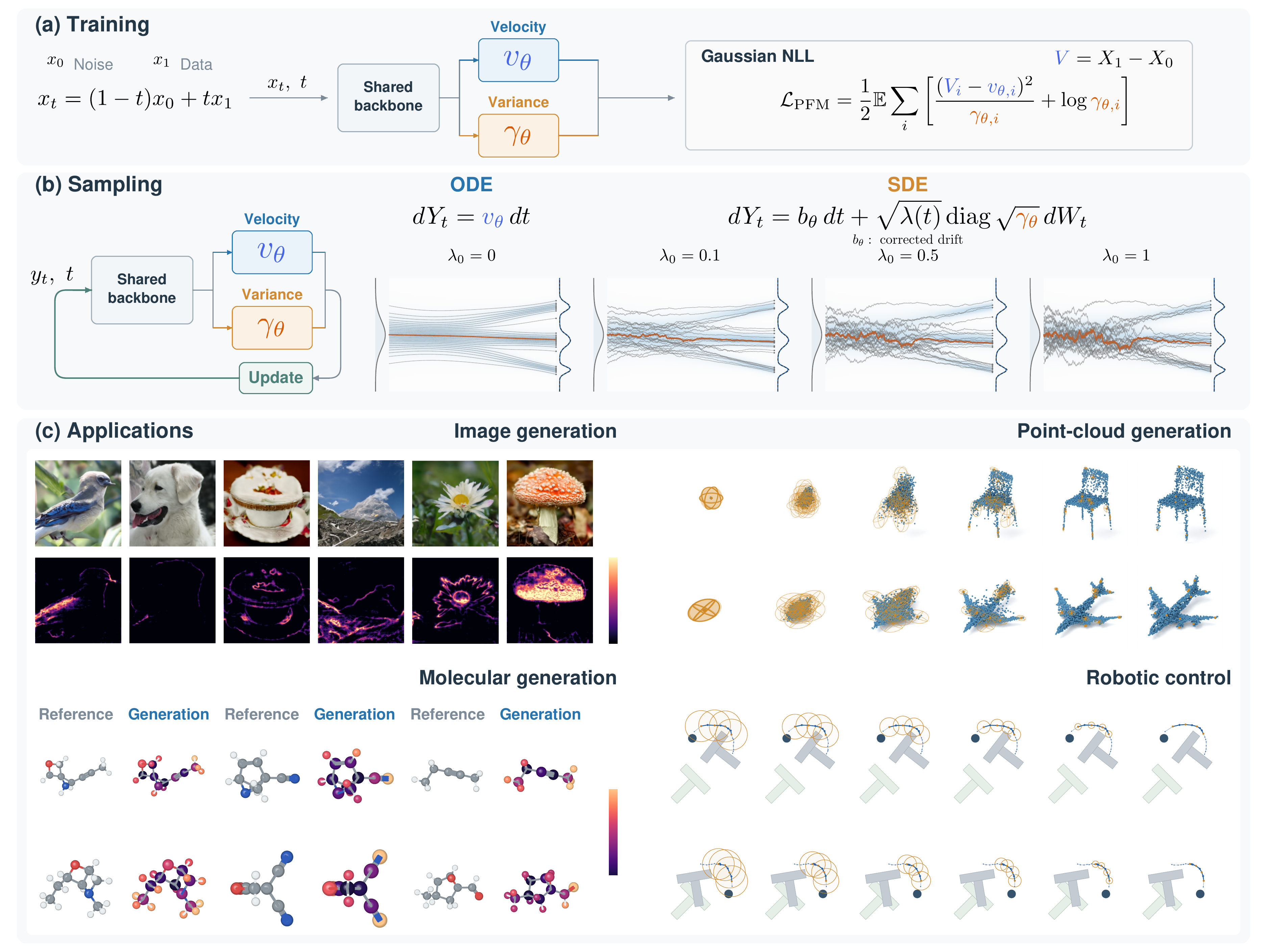}%
}
\par\vspace{-1em}
\caption[Posterior Flow Matching.]{\input{figure/method_caption} See Figure~\ref{fig:pfm-method-full} for the full overview.}
\label{fig:pfm-method}
\vspace{-1em}
\end{figure}

\vspace{-4pt}
\subsection{Learning conditional velocity posteriors}
\label{sec:learning}

The FM mean supplies a transport direction, but leaves the spread of compatible endpoints unspecified. At a fixed time, this remaining uncertainty is described by $p(X_1\mid X_t)$ and its conditional entropy. 
To formalize how much endpoint information the state retains, mutual information (MI)~\citep{xiao2026directions} relates it to the information available in the state through $I(X_t;X_1)=H(X_1)-H(X_1\mid X_t)$, where $H$ denotes differential entropy. For an endpoint-posterior model $q_\theta^x$, the expected log-likelihood gives the variational bound~\citep{barber2003variational,poole2019variational}
\begin{equation}
 I(X_t;X_1)\geq H(X_1)+\mathbb E\log q_\theta^x(X_1\mid X_t,t).
 \label{eq:mi-bound}
\end{equation}
Because the reference path fixes the MI, minimizing the posterior NLL tightens this bound by reducing its gap, the expected Kullback--Leibler (KL) divergence, also called relative entropy, from the true posterior to $q_\theta^x$. Appendix~\ref{app:information-bounds} derives this identity under finite-entropy and integrability conditions. For external generation inputs $C$, the same argument uses conditional mutual information (CMI)~\citep{ma2026cadvae}, developed in Appendix~\ref{app:conditional-learning}.

To fit this posterior while retaining the FM transport parameterization, use the reference-path identity
\begin{equation}
 X_1=X_t+(1-t)\pfmVelocity{V}.
 \label{eq:endpoint-map}
\end{equation}
At fixed $(x,t)$ with $t<1$, this invertible affine map converts a velocity posterior into an endpoint posterior. Their NLLs differ only by $d\log(1-t)$, independently of $\theta$ (Appendix~\ref{app:posterior-map}). We can therefore fit the endpoint posterior in velocity coordinates, where its mean directly supplies transport.

To estimate both moments in one model evaluation, a shared backbone predicts the velocity mean and log variance,
\begin{equation}
 (\pfmVelocity{v_\theta},\pfmDiffusion{\ell_\theta})=F_\theta(x,t),\qquad
 \pfmDiffusion{\gamma_\theta}=\exp\pfmDiffusion{\ell_\theta},\qquad
 \pfmDiffusion{\Sigma_\theta}=\operatorname{diag}\pfmDiffusion{\gamma_\theta},
 \label{eq:heads}
\end{equation}
which define a diagonal-Gaussian approximation to the conditional velocity posterior:
\begin{equation}
 q_\theta(v\mid x,t)=\mathcal N(v;\pfmVelocity{v_\theta}(x,t),\pfmDiffusion{\Sigma_\theta}(x,t)).
 \label{eq:posterior}
\end{equation}
Both heads depend on the full state. Up to a constant, the training objective is
\begin{equation}
 \boxed{\begin{aligned}
 \mathcal L_{\mathrm{PFM}}(\theta)
 &=\frac12\mathbb E\!\left[(\pfmVelocity{V}-\pfmVelocity{v_\theta})^\top\pfmDiffusion{\Sigma_\theta}^{-1}(\pfmVelocity{V}-\pfmVelocity{v_\theta})
 +\log\det\pfmDiffusion{\Sigma_\theta}\right]\\
 &=\frac12\mathbb E\!\left[\sum_{i=1}^d
 \left(\frac{(\pfmVelocity{V_i}-\pfmVelocity{v_{\theta,i}})^2}{\pfmDiffusion{\gamma_{\theta,i}}}
 +\log\pfmDiffusion{\gamma_{\theta,i}}\right)\right].
 \end{aligned}}
 \label{eq:pfm-loss}
\end{equation}
The expectation uses the same reference pairs and time law as FM, with outputs evaluated at $(X_t,t)$. The weighted residual and log-variance terms jointly fit both moments. The proposition below asks whether this objective preserves the FM transport target while recovering conditional dispersion.

\begin{proposition}[Conditional moment recovery; proof in Appendix~\ref{app:moments}]
\label{prop:moments}
Fix $(x,t)$ and suppose $\pfmVelocity{V}\mid X_t=x$ has finite second moments and positive componentwise variances. The expected NLL of $q_\theta(\cdot\mid x,t)$ is uniquely minimized over its mean and positive diagonal variances at
\begin{equation}
 \pfmVelocity{v_\theta}(x,t)=\pfmVelocity{v}(x,t),\qquad
 \pfmDiffusion{\gamma_{\theta,i}}(x,t)=\operatorname{Var}(\pfmVelocity{V_i}\mid X_t=x).
 \label{eq:optimal-moments}
\end{equation}
\end{proposition}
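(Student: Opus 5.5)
We would prove Proposition~\ref{prop:moments} by reducing it to a separate one-dimensional problem for each coordinate. With $\pfmDiffusion{\Sigma_\theta}$ diagonal, the conditional expected NLL at fixed $(x,t)$ is, up to the constant $\tfrac d2\log 2\pi$,
\begin{equation*}
 J(m,\gamma)=\frac12\sum_{i=1}^d\left(\frac{\mathbb E[(\pfmVelocity{V_i}-m_i)^2\mid X_t=x]}{\gamma_i}+\log\gamma_i\right),
 \qquad m\in\mathbb R^d,\ \gamma\in(0,\infty)^d .
\end{equation*}
Here $m$ and $\gamma$ stand for the values $\pfmVelocity{v_\theta}(x,t)$ and $\pfmDiffusion{\gamma_\theta}(x,t)$, treated as free variables. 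Finite second moments make every conditional expectation finite, so $J$ is well defined. Because $J$ is a sum of terms each depending only on $(m_i,\gamma_i)$, it suffices to minimize each summand $J_i(m_i,\gamma_i)$ separately and show that each has a unique minimizer. The joint minimizer of $J$ is then unique as well.

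\textbf{Bias--variance split.} Write $\mu_i=\mathbb E[\pfmVelocity{V_i}\mid X_t=x]$, which is the $i$th component of $\pfmVelocity{v}(x,t)$ by Eq.~\ref{eq:marginal-velocity}, and $\sigma_i^2=\operatorname{Var}(\pfmVelocity{V_i}\mid X_t=x)>0$. Expanding around $\mu_i$, the cross term vanishes, which gives
\begin{equation*}
 \mathbb E[(\pfmVelocity{V_i}-m_i)^2\mid X_t=x]=\sigma_i^2+(\mu_i-m_i)^2 .
\end{equation*}
Hence $J_i(m_i,\gamma_i)=\tfrac12\bigl[(\sigma_i^2+(\mu_i-m_i)^2)/\gamma_i+\log\gamma_i\bigr]$.

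\textbf{Minimizing over the mean.} For every fixed $\gamma_i>0$, the quantity $J_i$ is strictly increasing in $(\mu_i-m_i)^2$. Therefore $J_i(m_i,\gamma_i)\ge J_i(\mu_i,\gamma_i)$, with equality only when $m_i=\mu_i$.

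\textbf{Minimizing over the variance.} It remains to minimize $h(\gamma)=\sigma_i^2/\gamma+\log\gamma$ on $(0,\infty)$. Its derivative is $h'(\gamma)=(\gamma-\sigma_i^2)/\gamma^2$, which is negative below $\sigma_i^2$ and positive above it. So $h$ has a unique global minimizer at $\gamma=\sigma_i^2$, where it takes the value $1+\log\sigma_i^2$. Equivalently, $h(\gamma)-h(\sigma_i^2)=u-1-\log u\ge0$ with $u=\sigma_i^2/\gamma$, and equality holds iff $u=1$.

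\textbf{Conclusion.} Combining the two steps gives $J_i(m_i,\gamma_i)\ge J_i(\mu_i,\gamma_i)\ge J_i(\mu_i,\sigma_i^2)$. Equality in both inequalities forces $m_i=\mu_i$ and $\gamma_i=\sigma_i^2$, which is exactly Eq.~\ref{eq:optimal-moments}.

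The only delicate point is the assumption $\sigma_i^2>0$. If $\sigma_i^2=0$, then $h$ decreases toward $-\infty$ as $\gamma\to0$ and has no minimizer, so positivity is exactly what guarantees that the infimum is attained. The remaining steps, the variance decomposition and the one-dimensional calculus, are routine.
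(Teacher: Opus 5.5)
Your proof is correct and follows the same route as the paper's proof: a per-coordinate bias--variance decomposition of the conditional Gaussian risk, minimization over the mean for each fixed variance, and a sign analysis of the variance derivative at the optimal mean. Your chain $J_i(m_i,\gamma_i)\ge J_i(\mu_i,\gamma_i)\ge J_i(\mu_i,\sigma_i^2)$ spells out the joint uniqueness that the paper leaves implicit. Your remark on why $\sigma_i^2>0$ is needed is a correct addition.
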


Appendix~\ref{app:moments} proves the result. Fixing $\pfmDiffusion{\Sigma_\theta}=I$ recovers the FM objective in Eq.~\ref{eq:cfm-loss}, while learning it adds conditional variance estimation without changing the population mean target. The shared backbone and both heads are trained jointly by Algorithm~\ref{alg:training}.

To make the learned uncertainty useful for prediction and diagnostics, the affine map in Eq.~\ref{eq:endpoint-map} converts these velocity moments into endpoint-posterior readouts:
\begin{equation}
 \pfmVelocity{\mu_\theta}(x,t)=x+(1-t)\pfmVelocity{v_\theta}(x,t),\qquad
 \pfmDiffusion{\Sigma_\theta}^x(x,t)=(1-t)^2\pfmDiffusion{\Sigma_\theta}(x,t).
 \label{eq:endpoint-readout}
\end{equation}
These endpoint-posterior readouts require no additional head. Appendix~\ref{app:posterior-map} derives their moment transformation, and Appendix~\ref{app:endpoint} develops the alternative of predicting the endpoint mean directly.

\vspace{-4pt}
\subsection{Posterior-driven ODE and SDE sampling}
\label{sec:sampling}

PFM uses its learned mean and variance to separate transport from adaptive noise. Both samplers start from $Y_0\!\sim\!\mathcal N(0,I)$. The learned mean defines the PFM ODE and the derived score of Eq.~\ref{eq:velocity-score}:
\begin{equation}
 dY_t=\pfmVelocity{v_\theta}(Y_t,t)dt,\qquad
 \widehat s_\theta(x,t)=\frac{t\,\pfmVelocity{v_\theta}(x,t)-x}{1-t}.
 \label{eq:pfm-ode-score}
\end{equation}
To scale stochastic perturbations by learned uncertainty, we choose a diffusion covariance rate that is isotropic after normalization by the local velocity posterior standard deviations:
\begin{equation}
 \pfmDiffusion{\Sigma_\theta}^{-1/2}\pfmDiffusion{a_\theta}\pfmDiffusion{\Sigma_\theta}^{-1/2}=\lambda(t)I,\qquad \lambda(t)\geq0.
 \label{eq:posterior-normalization}
\end{equation}
This gives the diffusion tensor and a Brownian noise coefficient
\begin{equation}
 \pfmDiffusion{a_\theta}=\lambda(t)\pfmDiffusion{\Sigma_\theta},\qquad
 \pfmDiffusion{B_\theta}=\sqrt{\lambda(t)}\operatorname{diag}\sqrt{\pfmDiffusion{\gamma_\theta}}.
 \label{eq:posterior-diffusion}
\end{equation}
Thus posterior variances set relative noise strengths across states and coordinates, while the inference-time schedule $\lambda$ controls the overall rate. Appendix~\ref{app:diffusion-design} derives the corresponding increment covariance and its locally normalized form.

Adding diffusion changes density evolution, so it must be paired with a drift correction. The correspondence in Eq.~\ref{eq:correspondence} gives the PFM SDE
\begin{equation}
 \boxed{dY_t=\left[\pfmVelocity{v_\theta}+\frac12\nabla_x\!\cdot \pfmDiffusion{a_\theta}
 +\frac12\pfmDiffusion{a_\theta}\widehat s_\theta\right](Y_t,t)dt
 +\pfmDiffusion{B_\theta}(Y_t,t)dW_t.}
 \label{eq:pfm-sde}
\end{equation}
The score correction compensates for diffusion across density gradients, and the divergence correction accounts for spatially varying covariance. At the population mean in Proposition~\ref{prop:moments}, the derived score is exact and these dynamics preserve the reference marginals under the conditions in Appendix~\ref{app:compatibility}. Setting $\lambda=0$ removes diffusion and both corrections, recovering the PFM ODE. Appendix~\ref{app:coordinate-drift} derives the drift, and Appendix~\ref{app:numerics} gives the numerical samplers.

\textbf{Relative-entropy dissipation.}
Beyond preserving the reference marginals, the diffusion also determines how density perturbations evolve. Let $p_t$ be the reference density and let $\widetilde p_t$ start from another density under the same compatible coefficients, with the score in their common drift fixed by $p_t$~\citep{fontbona2016trajectorial,wibisono2017information}.
\begin{proposition}[Relative-entropy dissipation; proof in Appendix~\ref{app:kl-proof}]
\label{prop:kl}
On $[0,T]$ with $T<1$, suppose both densities are positive and smooth, the displayed quantities are finite, and boundary terms vanish under integration by parts. For the SDE with $\pfmDiffusion{a}=\lambda\pfmDiffusion{\Sigma_\theta}$,
\begin{equation}
 \frac{d}{dt}\operatorname{KL}(\widetilde p_t\|p_t)
 =-\frac{\lambda(t)}2\int\widetilde p_t\sum_i\pfmDiffusion{\gamma_{\theta,i}}
 \left(\partial_{x_i}\log\frac{\widetilde p_t}{p_t}\right)^2dx\leq0.
 \label{eq:kl-coordinate}
\end{equation}
When $\lambda\equiv0$, both densities follow the same ODE, and their KL remains constant.
\end{proposition}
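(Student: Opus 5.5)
We write both evolutions as continuity equations with a common structure. The reference density satisfies $\partial_t p_t=-\nabla\cdot(\pfmVelocity{v}p_t)$ by Eq.~\ref{eq:marginal-velocity}. Here $\pfmVelocity{v}$ is the ODE velocity that the PFM SDE corrects through Eq.~\ref{eq:correspondence}, with the score fixed by $p_t$.

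The perturbed density $\widetilde p_t$ evolves under the same SDE, whose drift is $b=\pfmVelocity{v}+\tfrac12(\nabla\cdot\pfmDiffusion{a}+\pfmDiffusion{a}\,\nabla\log p_t)$. Its Fokker--Planck equation is
\[
\partial_t\widetilde p=-\nabla\cdot(b\widetilde p)+\tfrac12\sum_{ij}\partial_i\partial_j(\pfmDiffusion{a_{ij}}\widetilde p).
\]
Using $\tfrac12\sum_j\partial_j(\pfmDiffusion{a_{ij}}\widetilde p)=\tfrac12(\nabla\cdot\pfmDiffusion{a})_i\widetilde p+\tfrac12(\pfmDiffusion{a}\nabla\widetilde p)_i$, the divergence terms cancel. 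This leaves
\[
\partial_t\widetilde p=-\nabla\cdot\big(\widetilde p\,[\pfmVelocity{v}-\tfrac12\pfmDiffusion{a}\nabla\log(\widetilde p/p)]\big).
\]
This is exactly the computation of Appendix~\ref{app:compatibility}, now with $\widetilde p$ in place of $p$. When $\widetilde p=p$ it reduces to the reference continuity equation.

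Next, differentiate $\operatorname{KL}(\widetilde p_t\|p_t)=\int\widetilde p\log(\widetilde p/p)$. Writing $h=\log(\widetilde p/p)$, we get $\tfrac{d}{dt}\mathrm{KL}=\int\partial_t\widetilde p\,h+\int\partial_t\widetilde p-\int(\widetilde p/p)\,\partial_tp$. The middle term vanishes because mass is conserved.

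We substitute both continuity equations and integrate by parts, discarding boundary terms by hypothesis. The first term becomes $\int\widetilde p\,\nabla h\cdot[\pfmVelocity{v}-\tfrac12\pfmDiffusion{a}\nabla h]$. The third becomes $-\int p\,\pfmVelocity{v}\cdot\nabla(\widetilde p/p)=-\int\widetilde p\,\pfmVelocity{v}\cdot\nabla h$. The transport contributions cancel, leaving
\[
\tfrac{d}{dt}\mathrm{KL}=-\tfrac12\int\widetilde p\,\nabla h^\top\pfmDiffusion{a}\nabla h.
\]
Inserting $\pfmDiffusion{a}=\lambda(t)\operatorname{diag}\pfmDiffusion{\gamma_\theta}$ from Eq.~\ref{eq:posterior-diffusion} gives the coordinatewise sum in Eq.~\ref{eq:kl-coordinate}. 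Nonpositivity follows from $\lambda\ge0$ and $\pfmDiffusion{\gamma_{\theta,i}}=\exp\pfmDiffusion{\ell_{\theta,i}}>0$. When $\lambda\equiv0$ the right side is identically zero, so the KL is constant; this matches the observation that both densities then follow the same ODE flow.

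The main obstacle is the bookkeeping that makes the transport terms cancel exactly. This relies on using the same $\pfmVelocity{v}$ for both densities, which is the point of fixing the score in the common drift by $p_t$, together with the identity $\widetilde p\,\nabla h=\nabla\widetilde p-\widetilde p\nabla\log p$. Restricting to $T<1$ keeps the derived score and coefficients finite, so the integrations by parts are justified under the stated smoothness and integrability assumptions.
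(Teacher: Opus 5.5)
Your proposal is correct and follows the paper's proof essentially step for step. You rewrite the Fokker--Planck equation for $\widetilde p_t$ as a continuity equation with velocity $v-\tfrac12 a\nabla\log(\widetilde p_t/p_t)$, differentiate the KL using mass conservation, cancel the transport terms via $\widetilde p_t\nabla\log(\widetilde p_t/p_t)=p_t\nabla(\widetilde p_t/p_t)$, and specialize the remaining quadratic form to $a=\lambda\operatorname{diag}\gamma_\theta$, with the $\lambda\equiv0$ case handled exactly as in the paper.
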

Posterior variances weight coordinatewise dissipation; KL decreases strictly when $\lambda(t)>0$ and the integral above is positive. Appendix~\ref{app:kl-proof} proves the general matrix form, and Appendix~\ref{app:trajectory-dynamics} illustrates one-dimensional KL evolution and cumulative dissipation.

\textbf{Relation to the standard generators.}
To isolate the role of the learned variance, Table~\ref{tab:generative-dynamics} compares FM, diffusion, and PFM under the shared ODE--SDE correspondence. FM and PFM share the reference path and population velocity target. PFM additionally learns the variance that supplies local diffusion scales and endpoint diagnostics. Appendix~\ref{app:baseline-sampling} derives the baseline entries, and Appendix~\ref{app:trajectory-dynamics} visualizes the dynamics.
\begin{table}[!htbp]
\centering
% Keep the compact caption clear of the table rule.
\setlength{\belowcaptionskip}{1em}
\par\vspace{-1em}
\caption{Predicted fields, generation dynamics, and posterior moments. Fields use $(Y_t,t)$, with $f,\pfmDiffusion{g}$ at noising time $1-t$. The tensor $\pfmDiffusion{a}$ is the Brownian covariance rate. Semicolons separate velocity and endpoint readouts. Baseline endpoint means follow Eq.~\ref{eq:baseline-readouts}, with $\dagger$ denoting a Gaussian perturbation kernel. PFM readouts follow Eq.~\ref{eq:endpoint-readout}. Appendix~\ref{app:baseline-sampling} gives the reference paths and derivations.}
\label{tab:generative-dynamics}
\vspace{-1em}
\scriptsize
\setlength{\tabcolsep}{3pt}
\begin{tabular*}{\linewidth}{@{\extracolsep{\fill}}lllll@{}}
\toprule
Model / predicted fields & Sampler & Generation dynamics & $\pfmDiffusion{a}$ & Moment readouts \\
\midrule
FM ($\pfmVelocity{v_\eta}$) & ODE & $dY_t=\pfmVelocity{v_\eta}\,dt$ & $0$ & $\pfmVelocity{v_\eta};\ \pfmVelocity{\mu_\eta}$ \\
FM ($\pfmVelocity{v_\eta}$) & SDE & $dY_t=[\pfmVelocity{v_\eta}+\tfrac12\pfmDiffusion{\kappa}\widehat s_\eta]dt+\sqrt{\pfmDiffusion{\kappa}}\,dW_t$ & $\pfmDiffusion{\kappa} I$ & $\pfmVelocity{v_\eta};\ \pfmVelocity{\mu_\eta}$ \\
\addlinespace
Diffusion ($s_\phi$) & ODE & $dY_t=[-f+\tfrac12\pfmDiffusion{g}^2s_\phi]dt$ & $0$ & $\pfmVelocity{\mu_\phi}^{\dagger}$ \\
Diffusion ($s_\phi$) & SDE & $dY_t=[-f+\pfmDiffusion{g}^2s_\phi]dt+\pfmDiffusion{g}\,dW_t$ & $\pfmDiffusion{g}^2I$ & $\pfmVelocity{\mu_\phi}^{\dagger}$ \\
\addlinespace
PFM ($\pfmVelocity{v_\theta},\pfmDiffusion{\ell_\theta}$) & ODE & $dY_t=\pfmVelocity{v_\theta}\,dt$ & $0$ & $(\pfmVelocity{v_\theta},\pfmDiffusion{\Sigma_\theta});\ (\pfmVelocity{\mu_\theta},\pfmDiffusion{\Sigma_\theta}^x)$ \\
PFM ($\pfmVelocity{v_\theta},\pfmDiffusion{\ell_\theta}$) & SDE & $dY_t=[\pfmVelocity{v_\theta}+\tfrac12\nabla_x\!\cdot \pfmDiffusion{a_\theta}+\tfrac12\pfmDiffusion{a_\theta}\widehat s_\theta]dt+\pfmDiffusion{B_\theta}\,dW_t$ & $\lambda\pfmDiffusion{\Sigma_\theta}$ & $(\pfmVelocity{v_\theta},\pfmDiffusion{\Sigma_\theta});\ (\pfmVelocity{\mu_\theta},\pfmDiffusion{\Sigma_\theta}^x)$ \\
\bottomrule
\end{tabular*}
\par\vspace{-1em}
\end{table}

\vspace{-4pt}
\subsection{Posterior diagnostics}
\label{sec:diagnostics}

The endpoint-posterior readouts in Eq.~\ref{eq:endpoint-readout} make the evolving predictions interpretable under either sampler. At a generated state $Y_t=y$, they approximate the moments of the reference endpoint posterior $p(X_1\mid X_t=y)$. The mean describes a predicted endpoint, and the covariance describes the dispersion of compatible data endpoints, as illustrated across domains in Figures~\ref{fig:exp-image}--\ref{fig:exp-pusht}.

To assess these readouts, we construct reference states by interpolating independent noise and known data endpoints using Eq.~\ref{eq:reference-path}. At matched times, we compare predicted variances with mean squared prediction residuals. We also compare Gaussian NLLs at a shared predicted mean to assess the covariance fit. Appendix~\ref{app:diagnostics} explains these moment and likelihood checks, connecting posterior learning to the quantitative diagnostics in Section~\ref{sec:posterior-diagnostics}.

\vspace{-4pt}
% Favor mixed text/figure pages over sparsely filled float-only pages.
\let\pfmSavedFloatPageFraction\floatpagefraction
\renewcommand{\floatpagefraction}{0.85}
\section{Experiments}
\label{sec:experiments}
\vspace{-6pt}

\textbf{Experimental Setup}.
\label{sec:experimental-setup}
We evaluate PFM on image generation, point-cloud generation, molecular generation, and action generation. Across tasks, PFM learns Gaussian velocity posteriors, with mean and covariance parameterizations adapted to each task. ImageNet~\citep{deng2009imagenet} predicts the endpoint mean directly (Appendix~\ref{app:endpoint}), while molecular generation~\citep{ramakrishnan2014quantum} uses a covariance projected onto the centered coordinate subspace (Appendix~\ref{app:experiment-molecule}). PFM ODE and PFM SDE sampling share the trained model within each task and category.
% External conditions are omitted from the readout notation, following the main-text convention.
% We first examine generation across domains, then interpret the accompanying posterior readouts, and finally vary sampling stochasticity. 
Appendix~\ref{app:experiments} provides the core implementation settings, task-specific adaptations, and supplementary results.

\vspace{-8pt}
\begin{table}[!htbp]
\begin{minipage}[t]{0.32\linewidth}
\vspace{0pt}
\centering
% Keep the compact caption clear of the table rule.
\setlength{\belowcaptionskip}{1em}
\par\vspace{-1em}
\caption{PushT control scores $\uparrow$.}
\label{tab:pusht-table}
\vspace{-1em}
\scriptsize
\setlength{\tabcolsep}{4pt}
\begin{tabular}{@{}lrr@{}}
\toprule
Method & 20 steps & 50 steps \\
\midrule
FM & 0.825 & 0.872 \\
Diffusion Policy & 0.823 & 0.867 \\
PFM ODE & 0.827 & 0.878 \\
PFM SDE & \textbf{0.848} & \textbf{0.914} \\
\bottomrule
\end{tabular}
\par\vspace{-1em}

\end{minipage}\hfill
\begin{minipage}[t]{0.66\linewidth}
\vspace{0pt}
\centering
% Keep the compact caption clear of the table rule.
\setlength{\belowcaptionskip}{1em}
\par\vspace{-1em}
\caption{Molecule generation. R/P: recall/precision.}
\label{tab:molecule-table}
\vspace{-1em}
\scriptsize
\setlength{\tabcolsep}{2.4pt}
\begin{tabular}{@{}llrrrr@{}}
\toprule
Method & Sampler & COV-R (\%) $\uparrow$ & COV-P (\%) $\uparrow$ & MAT-R (\AA{}) $\downarrow$ & MAT-P (\AA{}) $\downarrow$ \\
\midrule
FM (ET-Flow) & ODE & 96.47 & 94.05 & 0.073 & 0.098 \\
FM (ET-Flow--SO(3)) & ODE & 95.98 & 92.10 & 0.076 & 0.110 \\
PFM & ODE & 96.44 & 93.99 & 0.075 & 0.108 \\
PFM & SDE & \textbf{98.37} & \textbf{96.42} & \textbf{0.063} & \textbf{0.081} \\
\bottomrule
\end{tabular}
\par\vspace{-1em}

\end{minipage}

\vspace{1em}
\begin{minipage}[t]{0.28\linewidth}
\vspace{0pt}
\centering
% Keep the compact caption clear of the table rule.
\setlength{\belowcaptionskip}{1em}
\par\vspace{-1em}
\caption{ImageNet-256.}
\label{tab:imagenet-table}
\vspace{-1em}
\scriptsize
\setlength{\tabcolsep}{4pt}
\begin{tabular}{@{}lrr@{}}
\toprule
Method & FID $\downarrow$ & IS $\uparrow$ \\
\midrule
FM & 25.42 & 63.2 \\
Diffusion & 27.38 & 62.1 \\
PFM ODE & 24.45 & 65.44 \\
PFM SDE & \textbf{22.28} & \textbf{68.06} \\
\bottomrule
\end{tabular}
\par\vspace{-1em}

\end{minipage}\hfill
\begin{minipage}[t]{0.70\linewidth}
\vspace{0pt}
\centering
% Keep the compact caption clear of the table rule.
\setlength{\belowcaptionskip}{1em}
\par\vspace{-1em}
\caption{ShapeNet (CD). Ideal 1-NNA: 50\%.}
\label{tab:pointcloud-table}
\vspace{-1em}
\scriptsize
\setlength{\tabcolsep}{2.5pt}
\begin{tabular*}{\linewidth}{@{\extracolsep{\fill}}lrrrrrr@{}}
\toprule
& \multicolumn{3}{c}{Chair} & \multicolumn{3}{c}{Airplane} \\
Method & MMD$\times10^3$ $\downarrow$ & COV (\%) $\uparrow$ & 1-NNA (\%) & MMD$\times10^3$ $\downarrow$ & COV (\%) $\uparrow$ & 1-NNA (\%) \\
\midrule
Diffusion & 12.28 & 48.94 & 60.11 & 3.28 & 48.71 & 64.83 \\
FM & 12.04 & 49.23 & 61.74 & 3.26 & 49.34 & 63.19 \\
PFM ODE & 11.57 & 50.98 & 59.95 & 3.23 & 50.13 & 62.70 \\
PFM SDE & \textbf{10.50} & \textbf{51.58} & \textbf{58.23} & \textbf{3.06} & \textbf{51.80} & \textbf{60.23} \\
\bottomrule
\end{tabular*}
\par\vspace{-1em}

\end{minipage}
\end{table}

\vspace{-4pt}

% \vspace{-4pt}
\subsection{Generation across Domains}
\label{sec:generation-domains}
\vspace{-4pt}

% The four domains instantiate the same continuous-state generative formulation. 
% Performance tables compare its ODE and posterior-driven SDE samplers with task-specific reference methods, while visualizations show representative outputs. 
% Their accompanying variance readouts are examined in Section~\ref{sec:posterior-diagnostics}.

\textbf{Image generation.}
PFM generates ImageNet-256~\citep{deng2009imagenet} images spanning objects and scenes. In Table~\ref{tab:imagenet-table}, PFM SDE improves Fr\'echet inception distance (FID)~\citep{heusel2017gans} from $24.45$ to $22.28$ and inception score (IS)~\citep{salimans2016improved} from $65.44$ to $68.06$ relative to PFM ODE, also outperforming the listed FM and diffusion baselines. Figure~\ref{fig:exp-image} shows representative samples; Appendix~\ref{app:experiment-image} adds CIFAR-10~\citep{Krizhevsky2009LearningML} results.
\begin{figure}[tbp]
\centering
\makebox[\linewidth][c]{\includegraphics[width=\linewidth]{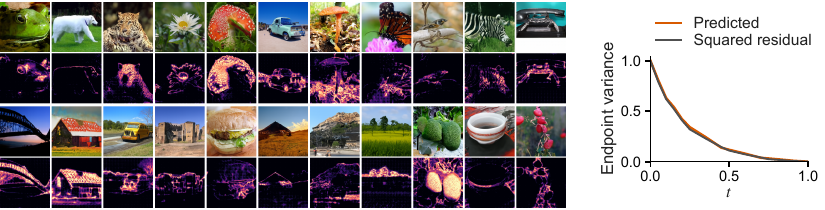}}
\par\vspace{-1em}
\caption{Left: ImageNet-256 endpoint means $\pfmVelocity{\mu_\theta}$, each paired with an earlier-time variance map from $\pfmDiffusion{\Sigma_\theta}^x$ on the same trajectory. Maps average over channels and brighten with increasing variance. Right: comparison of predicted endpoint variance and mean squared prediction residuals over time.}
\label{fig:exp-image}
\vspace{-1em}
\end{figure}

\textbf{3D point-cloud generation.}
PFM generates Chair and Airplane shapes on ShapeNet~\citep{chang2015shapenet}. Using Chamfer distance (CD), we evaluate minimum matching distance (MMD), coverage (COV)~\citep{achlioptas2018learning}, and 1-nearest-neighbor accuracy (1-NNA)~\citep{yang2019pointflow}. MMD averages the distance from each reference shape to its nearest generated shape. In both categories, PFM SDE lowers MMD-CD, increases coverage, and moves 1-NNA-CD closer to its ideal value of $50\%$ relative to PFM ODE and the listed baselines (Table~\ref{tab:pointcloud-table}). These gains span both geometric fidelity and distributional coverage. Figure~\ref{fig:exp-pointcloud} follows the emerging endpoint geometry.
\begin{figure}[tbp]
\centering
\makebox[\linewidth][c]{\includegraphics[width=0.95\linewidth]{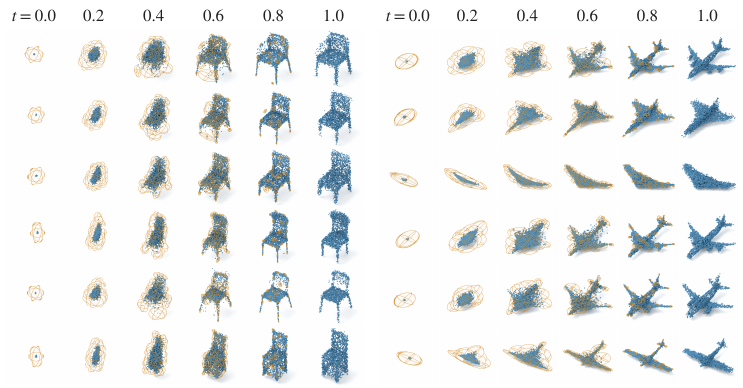}}
\par\vspace{-1em}
\caption{Chair (left) and Airplane (right) endpoint predictions over generation time. Each row follows one example. Blue points show $\pfmVelocity{\mu_\theta}$, and orange ellipsoids show one-standard-deviation contours from $\pfmDiffusion{\Sigma_\theta}^x$ at a fixed subset of points tracked across time.}
\label{fig:exp-pointcloud}
\vspace{-1em}
\end{figure}

\textbf{Molecular conformation generation.}
PFM generates molecular conformations on GEOM-QM9~\citep{axelrod2022geom}. PFM SDE improves all four metrics~\citep{hassan2024etflow} over PFM ODE and the listed reference methods (Table~\ref{tab:molecule-table}): recall and precision coverage (COV-R/P) rise to $98.37\%$ and $96.42\%$, while recall and precision matching distances (MAT-R/P) fall to $0.063$ and $0.081\,\text{\AA}$. Thus the improvement combines broader conformation coverage with more accurate generated conformations. Figure~\ref{fig:exp-molecule} follows the evolving conformations.
\begin{figure}[tbp]
\centering
\makebox[\linewidth][c]{\includegraphics[width=0.95\linewidth]{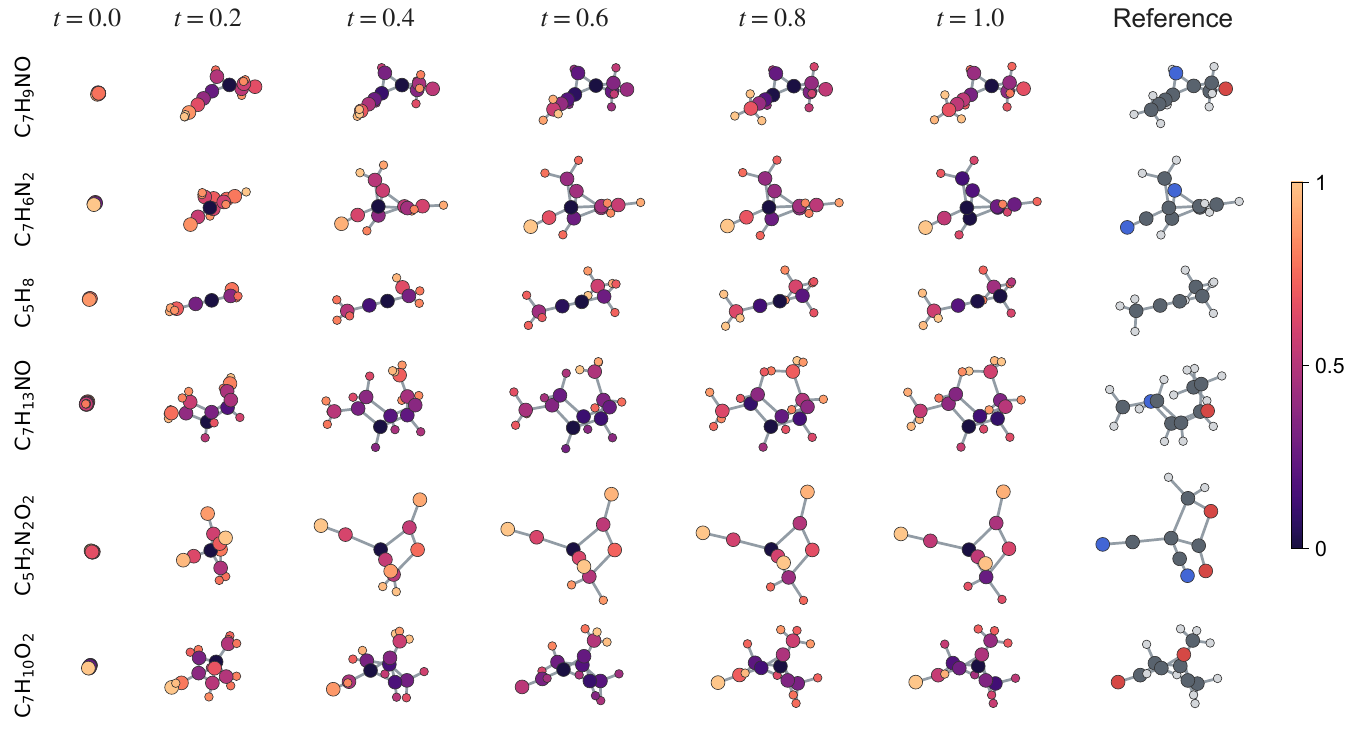}}
\par\vspace{-1em}
\caption{Molecular endpoint means $\pfmVelocity{\mu_\theta}$ over generation time, followed by reference conformations. Colors show atomwise variance from $\pfmDiffusion{\Sigma_\theta}^x$. Reference colors denote elements.}
\label{fig:exp-molecule}
\vspace{-1em}
\end{figure}

\textbf{Action generation on PushT.}
On PushT~\citep{florence2022implicit,chi2025diffusion}, PFM generates action sequences conditioned on observations for use in a control loop. PFM SDE improves the control score from $0.827$ to $0.848$ at 20 steps and from $0.878$ to $0.914$ at 50 steps, exceeding the listed FM and Diffusion Policy~\citep{chi2025diffusion} scores at both budgets (Table~\ref{tab:pusht-table}). Figure~\ref{fig:exp-pusht} links generated actions to execution across planning calls.
\begin{figure}[!htbp]
\centering
\makebox[\linewidth][c]{\includegraphics[width=0.95\linewidth]{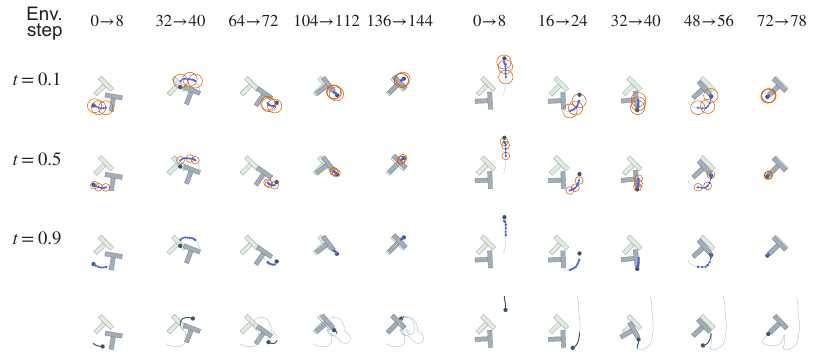}}
\par\vspace{-1em}
\caption{PushT posterior readouts for two episodes. Columns indicate the environment steps from planning to execution, and $t$ denotes generation time. Upper rows show endpoint action means $\pfmVelocity{\mu_\theta}$ (blue) and standard-deviation ellipses from $\pfmDiffusion{\Sigma_\theta}^x$ (orange). The bottom row shows execution.}
\label{fig:exp-pusht}
\vspace{-1em}
\end{figure}

\vspace{-4pt}
\subsection{Posterior Diagnostics}
\label{sec:posterior-diagnostics}
\vspace{-4pt}

\textbf{Visualizing endpoint predictions and variances.}
Alongside endpoint means $\pfmVelocity{\mu_\theta}$, PFM supplies covariances $\pfmDiffusion{\Sigma_\theta}^x$ through Eq.~\ref{eq:endpoint-readout}. Image maps highlight dispersion around object boundaries, while molecular maps show atomwise variance (Figures~\ref{fig:exp-image} and~\ref{fig:exp-molecule}). Point-cloud ellipsoids and action ellipses show endpoint standard deviations around predicted means and contract over time (Figures~\ref{fig:exp-pointcloud} and~\ref{fig:exp-pusht}).

\textbf{Quantitative posterior checks.}
The right panel of Figure~\ref{fig:exp-image} compares predicted endpoint variance with mean squared prediction residuals over time. The curves closely track each other as both decrease, linking the predicted uncertainty to the evolving prediction error.

On reference states, Figure~\ref{fig:exp-posterior_diagnostics}(a,b) shows broad agreement between mean predicted variances and mean squared residuals for Chair and Airplane point clouds, with departures in low-variance and sparse bins. Molecular covariance traces similarly agree with mean squared residual norms (c). With the molecular mean fixed, learned covariance yields lower Gaussian NLL at every evaluated time than a covariance that depends only on time and is fitted on validation data (d). Appendix~\ref{app:experiment-molecule} details this molecular NLL comparison.

\vspace{-6pt}
\begin{figure}[!htbp]
\centering
\includegraphics[width=0.9\linewidth]{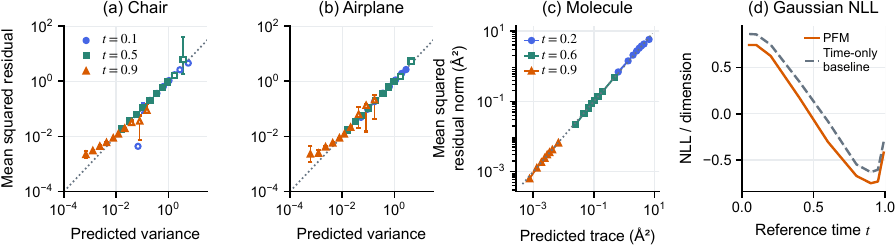}
\par\vspace{-1em}
\caption{Posterior diagnostics grouped at fixed reference times. (a,b) Chair/Airplane endpoint variances versus mean squared residuals. (c) Molecular covariance traces versus mean squared residual norms. (d) Molecular velocity-posterior NLL per subspace dimension, with a shared predicted mean (lower is better). Dotted lines in (a--c) indicate equality. Bars/bands show 95\% bootstrap intervals, and hollow markers indicate sparse bins.}
\label{fig:exp-posterior_diagnostics}
\vspace{-1em}
\end{figure}

\vspace{-4pt}
\subsection{Ablation Studies and Sampling Cost}
\label{sec:ablation-studies}
\vspace{-4pt}

\textbf{Role of posterior covariance.}
In $\pfmDiffusion{a_\theta}=\lambda(t)\pfmDiffusion{\Sigma_\theta}$ (Eq.~\ref{eq:posterior-diffusion}), the posterior covariance determines local noise allocation. The comparison uses a shared mean model and the same scalar noise schedule for the two SDEs. PFM SDE outperforms PFM ODE and PFM + isotropic SDE across all four tasks. Appendix~\ref{app:covariance-ablation} details the comparison.

\textbf{Inference-time stochasticity.}
With this covariance field fixed, $\lambda$ controls the overall stochasticity. Figure~\ref{fig:exp-lambda} illustrates gains at intermediate $\lambda_0$ in $\lambda(t)=\lambda_0(1-t)$ relative to PFM ODE ($\lambda_0=0$), followed by declining quality at larger scales. Thus the same posterior model supports inference-time adjustment: covariance determines local noise allocation, while its overall strength affects sampling quality. Appendix~\ref{app:stochasticity-ablation} connects this adjustment to the one-dimensional generation trajectories.

\vspace{-6pt}
\begin{figure}[!htbp]
\centering
\includegraphics[width=0.8\linewidth]{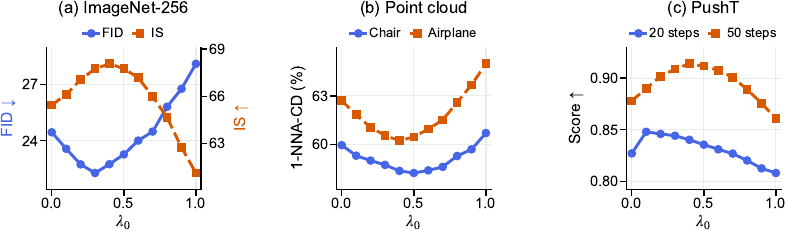}
\par\vspace{-1em}
\caption{Stochasticity sweeps with $\lambda(t)=\lambda_0(1-t)$. (a) ImageNet-256 FID and IS. (b) Chair and Airplane 1-NNA-CD (ideal: $50\%$). (c) PushT control scores at two integration budgets.}
\label{fig:exp-lambda}
\vspace{-1em}
\end{figure}

\textbf{Sampling time.}
These sampling choices also differ in cost. PFM ODE has sampling times close to FM ODE, while PFM SDE takes $1.16$--$1.33\times$ the PFM ODE time. Appendix~\ref{app:sampling-time} reports the per-task times and discusses the additional variance-field derivatives.

\vspace{-4pt}
\section{Conclusion}
\label{sec:conclusion}
\vspace{-4pt}

We introduced Posterior Flow Matching (PFM), a general generative framework that connects data generation with endpoint uncertainty estimation. Across image, point-cloud, molecular, and action generation, PFM improves generation quality while providing uncertainty readouts throughout the generation process.

\clearpage
\section*{AI Use Disclosure}
\label{sec:ai-use}

We used large language models to polish the writing and improve the readability of the manuscript. All AI-assisted edits were reviewed by the authors, who take full responsibility for the final content.

\bibliography{iclr2027_conference}

@inproceedings{lipman2023flow,
  title = {{Flow Matching for Generative Modeling}},
  author = {Lipman, Yaron and Chen, Ricky T. Q. and Ben-Hamu, Heli and Nickel, Maximilian and Le, Matt},
  year = {2023},
  booktitle = {International Conference on Learning Representations},
  url = {https://openreview.net/forum?id=PqvMRDCJT9t}
}

@inproceedings{song2021score,
  title = {{Score-Based Generative Modeling through Stochastic Differential Equations}},
  author = {Song, Yang and Sohl-Dickstein, Jascha and Kingma, Diederik P. and Kumar, Abhishek and Ermon, Stefano and Poole, Ben},
  year = {2021},
  booktitle = {International Conference on Learning Representations},
  url = {https://arxiv.org/abs/2011.13456}
}

@article{albergo2025stochastic,
  title = {{Stochastic Interpolants: A Unifying Framework for Flows and Diffusions}},
  author = {Albergo, Michael S. and Boffi, Nicholas M. and Vanden-Eijnden, Eric},
  year = {2025},
  journal = {Journal of Machine Learning Research},
  volume = {26},
  number = {209},
  pages = {1--80},
  url = {https://jmlr.org/papers/v26/23-1605.html}
}

@misc{han2026uncertainty,
  title = {{Flow Matching with Uncertainty Quantification and Guidance}},
  author = {Han, Juyeop and Beyer, Lukas Lao and Karaman, Sertac},
  year = {2026},
  howpublished = {arXiv:2602.10326},
  eprint = {2602.10326},
  archivePrefix = {arXiv},
  url = {https://arxiv.org/abs/2602.10326}
}

@inproceedings{kou2024bayesdiff,
  title = {{BayesDiff: Estimating Pixel-wise Uncertainty in Diffusion via Bayesian Inference}},
  author = {Kou, Siqi and Gan, Lei and Wang, Dequan and Li, Chongxuan and Deng, Zhijie},
  year = {2024},
  booktitle = {International Conference on Learning Representations},
  url = {https://arxiv.org/abs/2310.11142}
}

@inproceedings{bao2022analytic,
  title = {{Analytic-DPM: an Analytic Estimate of the Optimal Reverse Variance in Diffusion Probabilistic Models}},
  author = {Bao, Fan and Li, Chongxuan and Zhu, Jun and Zhang, Bo},
  year = {2022},
  booktitle = {International Conference on Learning Representations},
  url = {https://openreview.net/forum?id=0xiJLKH-ufZ}
}

@inproceedings{manor2024posterior,
  title = {{On the Posterior Distribution in Denoising: Application to Uncertainty Quantification}},
  author = {Manor, Hila and Michaeli, Tomer},
  year = {2024},
  booktitle = {International Conference on Learning Representations},
  url = {https://openreview.net/forum?id=adSGeugiuj}
}

@misc{stoica2026posterior,
  title = {{Posterior Augmented Flow Matching}},
  author = {Stoica, George and Paul, Sayak and Wallingford, Matthew and Ramanujan, Vivek and Nori, Abhay and Han, Winson and Farhadi, Ali and Krishna, Ranjay and Hoffman, Judy},
  year = {2026},
  howpublished = {arXiv:2605.00825},
  eprint = {2605.00825},
  archivePrefix = {arXiv},
  url = {https://arxiv.org/abs/2605.00825}
}

@misc{dekruiff2026confidence,
  title = {{Flowing with Confidence}},
  author = {de Kruiff, Friso and Coscia, Dario and Welling, Max and Bekkers, Erik},
  year = {2026},
  howpublished = {arXiv:2605.18472},
  eprint = {2605.18472},
  archivePrefix = {arXiv},
  url = {https://arxiv.org/abs/2605.18472}
}

@misc{lv2026singlepass,
  title = {{Uncertainty in a Single Pass: A Closed-Form Identity for One-Step Flow Matching}},
  author = {Lv, Li and Xing, Jiarui and Liang, Xinyan and Wang, Jian},
  year = {2026},
  howpublished = {arXiv:2605.00941v5},
  eprint = {2605.00941v5},
  archivePrefix = {arXiv},
  url = {https://arxiv.org/pdf/2605.00941v5}
}

@inproceedings{devita2024aleatoric,
  title = {{Diffusion Model Guided Sampling with Pixel-Wise Aleatoric Uncertainty Estimation}},
  author = {De Vita, Michele and Belagiannis, Vasileios},
  booktitle = {Proceedings of the Winter Conference on Applications of Computer Vision (WACV)},
  year = {2025},
  pages = {3844--3854},
  url = {https://openaccess.thecvf.com/content/WACV2025/html/De_Vita_Diffusion_Model_Guided_Sampling_with_Pixel-Wise_Aleatoric_Uncertainty_Estimation_WACV_2025_paper.html}
}

@inproceedings{ho2020denoising,
  title = {{Denoising Diffusion Probabilistic Models}},
  author = {Ho, Jonathan and Jain, Ajay and Abbeel, Pieter},
  booktitle = {Advances in Neural Information Processing Systems},
  year = {2020},
  volume = {33},
  pages = {6840--6851},
  publisher = {Curran Associates, Inc.},
  url = {https://proceedings.neurips.cc/paper/2020/hash/4c5bcfec8584af0d967f1ab10179ca4b-Abstract.html}
}

@inproceedings{nichol2021improved,
  title = 	 {Improved Denoising Diffusion Probabilistic Models},
  author =       {Nichol, Alexander Quinn and Dhariwal, Prafulla},
  booktitle = 	 {Proceedings of the 38th International Conference on Machine Learning},
  pages = 	 {8162--8171},
  year = 	 {2021},
  editor = 	 {Meila, Marina and Zhang, Tong},
  volume = 	 {139},
  series = 	 {Proceedings of Machine Learning Research},
  month = 	 {18--24 Jul},
  publisher =    {PMLR},
  url = 	 {https://proceedings.mlr.press/v139/nichol21a.html},
}

@inproceedings{jazbec2025generative,
  title = 	 {Generative Uncertainty in Diffusion Models},
  author =       {Jazbec, Metod and Wong-Toi, Eliot and Xia, Guoxuan and Zhang, Dan and Nalisnick, Eric and Mandt, Stephan},
  booktitle = 	 {Proceedings of the Forty-first Conference on Uncertainty in Artificial Intelligence},
  pages = 	 {1837--1858},
  year = 	 {2025},
  editor = 	 {Chiappa, Silvia and Magliacane, Sara},
  volume = 	 {286},
  series = 	 {Proceedings of Machine Learning Research},
  month = 	 {21--25 Jul},
  publisher =    {PMLR},
  url = 	 {https://proceedings.mlr.press/v286/jazbec25a.html},
}

@inproceedings{ma2024sit,
  title={Sit: Exploring flow and diffusion-based generative models with scalable interpolant transformers},
  author={Ma, Nanye and Goldstein, Mark and Albergo, Michael S and Boffi, Nicholas M and Vanden-Eijnden, Eric and Xie, Saining},
  booktitle={European Conference on Computer Vision},
  pages={23--40},
  year={2024},
  organization={Springer}
}

@inproceedings{
liu2023flow,
title={Flow Straight and Fast: Learning to Generate and Transfer Data with Rectified Flow},
author={Xingchao Liu and Chengyue Gong and qiang liu},
booktitle={The Eleventh International Conference on Learning Representations },
year={2023},
url={https://openreview.net/forum?id=XVjTT1nw5z}
}

@inproceedings{
guo2025variational,
title={Variational Rectified Flow Matching},
author={Pengsheng Guo and Alex Schwing},
booktitle={Forty-second International Conference on Machine Learning},
year={2025},
url={https://openreview.net/forum?id=Rk18ZikrFI}
}

@inproceedings{
albergo2023building,
title={Building Normalizing Flows with Stochastic Interpolants},
author={Michael Samuel Albergo and Eric Vanden-Eijnden},
booktitle={The Eleventh International Conference on Learning Representations },
year={2023},
url={https://openreview.net/forum?id=li7qeBbCR1t}
}

@inproceedings{
esser2024scaling,
title={Scaling Rectified Flow Transformers for High-Resolution Image Synthesis},
author={Patrick Esser and Sumith Kulal and Andreas Blattmann and Rahim Entezari and Jonas M{\"u}ller and Harry Saini and Yam Levi and Dominik Lorenz and Axel Sauer and Frederic Boesel and Dustin Podell and Tim Dockhorn and Zion English and Robin Rombach},
booktitle={Forty-first International Conference on Machine Learning},
year={2024},
url={https://openreview.net/forum?id=FPnUhsQJ5B}
}

@inproceedings{hoogeboom2023simple,
  title={simple diffusion: End-to-end diffusion for high resolution images},
  author={Hoogeboom, Emiel and Heek, Jonathan and Salimans, Tim},
  booktitle={International Conference on Machine Learning},
  pages={13213--13232},
  year={2023},
  organization={PMLR}
}

@inproceedings{pooladian2023multisample,
  title={Multisample Flow Matching: Straightening Flows with Minibatch Couplings},
  author={Pooladian, Aram-Alexandre and Ben-Hamu, Heli and Domingo-Enrich, Carles and Amos, Brandon and Lipman, Yaron and Chen, Ricky TQ},
  booktitle={International Conference on Machine Learning},
  pages={28100--28127},
  year={2023},
  organization={PMLR}
}

@article{ma2025stochastic,
  title={Stochastic Interpolants via Conditional Dependent Coupling},
  author={Ma, Chenrui and Xiao, Xi and Wang, Tianyang and Wang, Xiao and Shen, Yanning},
  journal={arXiv preprint arXiv:2509.23122},
  year={2025},
  url={https://arxiv.org/abs/2509.23122}
}

@inproceedings{albergo2024stochastic,
  title={Stochastic Interpolants with Data-Dependent Couplings},
  author={Albergo, Michael Samuel and Goldstein, Mark and Boffi, Nicholas Matthew and Ranganath, Rajesh and Vanden-Eijnden, Eric},
  booktitle={International Conference on Machine Learning},
  pages={921--937},
  year={2024},
  organization={PMLR}
}

@article{tong2023improving,
  title = {{Improving and Generalizing Flow-Based Generative Models with Minibatch Optimal Transport}},
  author = {Tong, Alexander and Fatras, Kilian and Malkin, Nikolay and Huguet, Guillaume and Zhang, Yanlei and Rector-Brooks, Jarrid and Wolf, Guy and Bengio, Yoshua},
  journal = {Transactions on Machine Learning Research},
  year = {2024},
  url = {https://openreview.net/forum?id=CD9Snc73AW}
}

@inproceedings{
zhang2025towards,
title={Towards Hierarchical Rectified Flow},
author={Yichi Zhang and Yici Yan and Alex Schwing and Zhizhen Zhao},
booktitle={The Thirteenth International Conference on Learning Representations},
year={2025},
url={https://openreview.net/forum?id=6F6qwdycgJ}
}

@article{lee2024improving,
  title={Improving the training of rectified flows},
  author={Lee, Sangyun and Lin, Zinan and Fanti, Giulia},
  journal={Advances in neural information processing systems},
  volume={37},
  pages={63082--63109},
  year={2024}
}

@inproceedings{wang2024rectified,
  title = {{Rectified Diffusion: Straightness Is Not Your Need in Rectified Flow}},
  author = {Wang, Fu-Yun and Yang, Ling and Huang, Zhaoyang and Wang, Mengdi and Li, Hongsheng},
  booktitle = {International Conference on Learning Representations},
  year = {2025},
  pages = {31420--31445},
  url = {https://proceedings.iclr.cc/paper_files/paper/2025/hash/4df9a5e6bad9e64ebcea453e031142bb-Abstract-Conference.html}
}

@article{zhang2025hierarchical,
  title={Hierarchical Rectified Flow Matching with Mini-Batch Couplings},
  author={Zhang, Yichi and Yan, Yici and Schwing, Alex and Zhao, Zhizhen},
  journal={arXiv preprint arXiv:2507.13350},
  year={2025}
}

@article{klein2023equivariant,
  title={Equivariant flow matching},
  author={Klein, Leon and Kr{\"a}mer, Andreas and No{\'e}, Frank},
  journal={Advances in Neural Information Processing Systems},
  volume={36},
  pages={59886--59910},
  year={2023}
}

@inproceedings{kingma2013auto,
  title = {{Auto-Encoding Variational Bayes}},
  author = {Kingma, Diederik P. and Welling, Max},
  booktitle = {International Conference on Learning Representations},
  year = {2014},
  url = {https://arxiv.org/abs/1312.6114}
}

@inproceedings{ma2026cadvae,
  title = {{{CAD}-{VAE}: Leveraging Correlation-Aware Latents for Comprehensive Fair Disentanglement}},
  author = {Ma, Chenrui and Xiao, Xi and Wang, Tianyang and Wang, Xiao and Shen, Yanning},
  booktitle = {Proceedings of the AAAI Conference on Artificial Intelligence},
  year = {2026},
  volume = {40},
  pages = {7744--7754},
  doi = {10.1609/aaai.v40i10.37717},
  url = {https://ojs.aaai.org/index.php/AAAI/article/view/37717}
}

@article{heusel2017gans,
  title={Gans trained by a two time-scale update rule converge to a local nash equilibrium},
  author={Heusel, Martin and Ramsauer, Hubert and Unterthiner, Thomas and Nessler, Bernhard and Hochreiter, Sepp},
  journal={Advances in neural information processing systems},
  volume={30},
  year={2017}
}

@article{ma2025beyond,
  title={Beyond Editing Pairs: Fine-Grained Instructional Image Editing via Multi-Scale Learnable Regions},
  author={Ma, Chenrui and Xiao, Xi and Wang, Tianyang and Shen, Yanning},
  journal={arXiv preprint arXiv:2505.19352},
  year={2025}
}

@inproceedings{
jin2025pyramidal,
title={Pyramidal Flow Matching for Efficient Video Generative Modeling},
author={Yang Jin and Zhicheng Sun and Ningyuan Li and Kun Xu and Kun Xu and Hao Jiang and Nan Zhuang and Quzhe Huang and Yang Song and Yadong MU and Zhouchen Lin},
booktitle={The Thirteenth International Conference on Learning Representations},
year={2025},
url={https://openreview.net/forum?id=66NzcRQuOq}
}

@article{kornilov2024optimal,
  title={Optimal flow matching: Learning straight trajectories in just one step},
  author={Kornilov, Nikita and Mokrov, Petr and Gasnikov, Alexander and Korotin, Aleksandr},
  journal={Advances in Neural Information Processing Systems},
  volume={37},
  pages={104180--104204},
  year={2024}
}

@inproceedings{cai2025diffusion,
  title={Diffusion self-distillation for zero-shot customized image generation},
  author={Cai, Shengqu and Chan, Eric Ryan and Zhang, Yunzhi and Guibas, Leonidas and Wu, Jiajun and Wetzstein, Gordon},
  booktitle={Proceedings of the Computer Vision and Pattern Recognition Conference},
  pages={18434--18443},
  year={2025}
}

@article{karras2022elucidating,
  title={Elucidating the design space of diffusion-based generative models},
  author={Karras, Tero and Aittala, Miika and Aila, Timo and Laine, Samuli},
  journal={Advances in neural information processing systems},
  volume={35},
  pages={26565--26577},
  year={2022}
}

@inproceedings{ma2025learningstraightflowsvariational,
  title = {{Learning Straight Flows: Variational Flow Matching for Efficient Generation}},
  author = {Ma, Chenrui and Xiao, Xi and Wang, Tianyang and Wang, Xiao and Shen, Yanning},
  booktitle = {Proceedings of the IEEE/CVF Conference on Computer Vision and Pattern Recognition (CVPR)},
  year = {2026},
  pages = {38154--38164},
  url = {https://openaccess.thecvf.com/content/CVPR2026/html/Ma_Learning_Straight_Flows_Variational_Flow_Matching_for_Efficient_Generation_CVPR_2026_paper.html}
}

@misc{lai2025principlesdiffusionmodels,
      title={The Principles of Diffusion Models}, 
      author={Chieh-Hsin Lai and Yang Song and Dongjun Kim and Yuki Mitsufuji and Stefano Ermon},
      year={2025},
      eprint={2510.21890},
      archivePrefix={arXiv},
      primaryClass={cs.LG},
      url={https://arxiv.org/abs/2510.21890}, 
}

@misc{lipman2024flowmatchingguidecode,
      title={Flow Matching Guide and Code}, 
      author={Yaron Lipman and Marton Havasi and Peter Holderrieth and Neta Shaul and Matt Le and Brian Karrer and Ricky T. Q. Chen and David Lopez-Paz and Heli Ben-Hamu and Itai Gat},
      year={2024},
      eprint={2412.06264},
      archivePrefix={arXiv},
      primaryClass={cs.LG},
      url={https://arxiv.org/abs/2412.06264}, 
}

@techreport{Krizhevsky2009LearningML,
  title = {{Learning Multiple Layers of Features from Tiny Images}},
  author = {Krizhevsky, Alex},
  institution = {University of Toronto},
  year = {2009},
  url = {https://www.cs.toronto.edu/~kriz/learning-features-2009-TR.pdf}
}

@inproceedings{rombach2022highresolutionimagesynthesislatent,
  title = {{High-Resolution Image Synthesis With Latent Diffusion Models}},
  author = {Rombach, Robin and Blattmann, Andreas and Lorenz, Dominik and Esser, Patrick and Ommer, Bj\"orn},
  booktitle = {Proceedings of the IEEE/CVF Conference on Computer Vision and Pattern Recognition (CVPR)},
  year = {2022},
  pages = {10684--10695},
  url = {https://openaccess.thecvf.com/content/CVPR2022/html/Rombach_High-Resolution_Image_Synthesis_With_Latent_Diffusion_Models_CVPR_2022_paper.html}
}

@inproceedings{dosovitskiy2021imageworth16x16words,
  title = {{An Image is Worth 16x16 Words: Transformers for Image Recognition at Scale}},
  author = {Alexey Dosovitskiy and Lucas Beyer and Alexander Kolesnikov and Dirk Weissenborn and Xiaohua Zhai and Thomas Unterthiner and Mostafa Dehghani and Matthias Minderer and Georg Heigold and Sylvain Gelly and Jakob Uszkoreit and Neil Houlsby},
  booktitle = {International Conference on Learning Representations},
  year = {2021},
  url = {https://openreview.net/forum?id=YicbFdNTTy}
}

@article{ma2026transition,
  title={Transition Flow Matching},
  author={Ma, Chenrui},
  journal={arXiv preprint arXiv:2603.15689},
  year={2026}
}

@inproceedings{yu2024representation,
  title = {{Representation Alignment for Generation: Training Diffusion Transformers Is Easier Than You Think}},
  author = {Yu, Sihyun and Kwak, Sangkyung and Jang, Huiwon and Jeong, Jongheon and Huang, Jonathan and Shin, Jinwoo and Xie, Saining},
  booktitle = {International Conference on Learning Representations},
  year = {2025},
  pages = {87400--87442},
  url = {https://proceedings.iclr.cc/paper_files/paper/2025/hash/d9e42b4d7163931f3689d6d6fbaa11d0-Abstract-Conference.html}
}

@inproceedings{zheng2025diffusion,
  title = {{Diffusion Transformers with Representation Autoencoders}},
  author = {Zheng, Boyang and Ma, Nanye and Tong, Shengbang and Xie, Saining},
  booktitle = {International Conference on Learning Representations},
  year = {2026},
  pages = {35791--35820},
  url = {https://proceedings.iclr.cc/paper_files/paper/2026/hash/3c4141c12660ad3625eb4ae845e0a6f9-Abstract-Conference.html}
}

@article{salimans2016improved,
  title={Improved techniques for training gans},
  author={Salimans, Tim and Goodfellow, Ian and Zaremba, Wojciech and Cheung, Vicki and Radford, Alec and Chen, Xi},
  journal={Advances in neural information processing systems},
  volume={29},
  year={2016}
}

@article{chi2025diffusion,
  title={Diffusion policy: Visuomotor policy learning via action diffusion},
  author={Chi, Cheng and Xu, Zhenjia and Feng, Siyuan and Cousineau, Eric and Du, Yilun and Burchfiel, Benjamin and Tedrake, Russ and Song, Shuran},
  journal={The International Journal of Robotics Research},
  volume={44},
  number={10-11},
  pages={1684--1704},
  year={2025},
  publisher={Sage Publications Sage UK: London, England}
}

@inproceedings{florence2022implicit,
  title={Implicit behavioral cloning},
  author={Florence, Pete and Lynch, Corey and Zeng, Andy and Ramirez, Oscar A and Wahid, Ayzaan and Downs, Laura and Wong, Adrian and Lee, Johnny and Mordatch, Igor and Tompson, Jonathan},
  booktitle={Conference on robot learning},
  pages={158--168},
  year={2022},
  organization={PMLR}
}

@article{rumelhart1986learning,
  title={Learning representations by back-propagating errors},
  author={Rumelhart, David E and Hinton, Geoffrey E and Williams, Ronald J},
  journal={nature},
  volume={323},
  number={6088},
  pages={533--536},
  year={1986},
  publisher={Nature Publishing Group UK London}
}

@inproceedings{li2025backtobasics,
  title = {{Back to Basics: Let Denoising Generative Models Denoise}},
  author = {Li, Tianhong and He, Kaiming},
  booktitle = {Proceedings of the IEEE/CVF Conference on Computer Vision and Pattern Recognition (CVPR)},
  year = {2026},
  pages = {36115--36125},
  url = {https://openaccess.thecvf.com/content/CVPR2026/html/Li_Back_to_Basics_Let_Denoising_Generative_Models_Denoise_CVPR_2026_paper.html}
}

@inproceedings{luo2021diffusion,
  title = {Diffusion Probabilistic Models for {3D} Point Cloud Generation},
  author = {Luo, Shitong and Hu, Wei},
  booktitle = {Proceedings of the IEEE/CVF Conference on Computer Vision and Pattern Recognition},
  pages = {2837--2845},
  year = {2021},
  url = {https://openaccess.thecvf.com/content/CVPR2021/html/Luo_Diffusion_Probabilistic_Models_for_3D_Point_Cloud_Generation_CVPR_2021_paper.html}
}

@inproceedings{tholke2022torchmdnet,
  title = {{TorchMD-NET}: Equivariant Transformers for Neural Network based Molecular Potentials},
  author = {Th{\"o}lke, Philipp and De Fabritiis, Gianni},
  booktitle = {International Conference on Learning Representations},
  year = {2022},
  url = {https://arxiv.org/abs/2202.02541}
}

@inproceedings{hassan2024etflow,
  title = {{ET-Flow: Equivariant Flow-Matching for Molecular Conformer Generation}},
  author = {Hassan, Majdi and Shenoy, Nikhil and Lee, Jungyoon and St\"{a}rk, Hannes and Thaler, Stephan and Beaini, Dominique},
  booktitle = {Advances in Neural Information Processing Systems},
  year = {2024},
  volume = {37},
  pages = {128798--128824},
  publisher = {Curran Associates, Inc.},
  doi = {10.52202/079017-4091},
  url = {https://proceedings.neurips.cc/paper_files/paper/2024/hash/e8bd617e7dd0394ceadf37b4a7773179-Abstract-Conference.html}
}

@inproceedings{barber2003variational,
  title = {The {IM} Algorithm: A Variational Approach to Information Maximization},
  author = {Barber, David and Agakov, Felix V.},
  booktitle = {Advances in Neural Information Processing Systems},
  volume = {16},
  year = {2003},
  url = {https://papers.nips.cc/paper_files/paper/2003/file/a6ea8471c120fe8cc35a2954c9b9c595-Paper.pdf}
}

@inproceedings{poole2019variational,
  title = {On Variational Bounds of Mutual Information},
  author = {Poole, Ben and Ozair, Sherjil and van den Oord, Aaron and Alemi, Alex and Tucker, George},
  booktitle = {Proceedings of the 36th International Conference on Machine Learning},
  volume = {97},
  pages = {5171--5180},
  publisher = {PMLR},
  year = {2019},
  url = {https://proceedings.mlr.press/v97/poole19a.html}
}

@inproceedings{mohamed2015variational,
  title = {Variational Information Maximisation for Intrinsically Motivated Reinforcement Learning},
  author = {Mohamed, Shakir and Rezende, Danilo Jimenez},
  booktitle = {Advances in Neural Information Processing Systems},
  volume = {28},
  year = {2015},
  url = {https://arxiv.org/abs/1509.08731}
}

@article{fontbona2016trajectorial,
  title = {A trajectorial interpretation of the dissipations of entropy and {Fisher} information for stochastic differential equations},
  author = {Fontbona, Joaquin and Jourdain, Benjamin},
  journal = {The Annals of Probability},
  volume = {44},
  number = {1},
  pages = {131--170},
  year = {2016},
  doi = {10.1214/14-AOP969},
  url = {https://arxiv.org/abs/1107.3300}
}

@inproceedings{wibisono2017information,
  title = {Information and estimation in {Fokker--Planck} channels},
  author = {Wibisono, Andre and Jog, Varun and Loh, Po-Ling},
  booktitle = {IEEE International Symposium on Information Theory},
  year = {2017},
  url = {https://arxiv.org/abs/1702.03656}
}

@inproceedings{deng2009imagenet,
  title = {{ImageNet}: A Large-Scale Hierarchical Image Database},
  author = {Deng, Jia and Dong, Wei and Socher, Richard and Li, Li-Jia and Li, Kai and Fei-Fei, Li},
  booktitle = {IEEE Conference on Computer Vision and Pattern Recognition},
  pages = {248--255},
  year = {2009},
  doi = {10.1109/CVPR.2009.5206848},
  url = {https://www.image-net.org/static_files/papers/imagenet_cvpr09.pdf}
}

@article{russakovsky2015imagenet,
  title = {{ImageNet} Large Scale Visual Recognition Challenge},
  author = {Russakovsky, Olga and Deng, Jia and Su, Hao and Krause, Jonathan and Satheesh, Sanjeev and Ma, Sean and Huang, Zhiheng and Karpathy, Andrej and Khosla, Aditya and Bernstein, Michael and Berg, Alexander C. and Fei-Fei, Li},
  journal = {International Journal of Computer Vision},
  volume = {115},
  number = {3},
  pages = {211--252},
  year = {2015},
  doi = {10.1007/s11263-015-0816-y},
  url = {https://arxiv.org/abs/1409.0575}
}

@misc{chang2015shapenet,
  title = {{ShapeNet}: An Information-Rich {3D} Model Repository},
  author = {Chang, Angel X. and Funkhouser, Thomas and Guibas, Leonidas and Hanrahan, Pat and Huang, Qixing and Li, Zimo and Savarese, Silvio and Savva, Manolis and Song, Shuran and Su, Hao and Xiao, Jianxiong and Yi, Li and Yu, Fisher},
  year = {2015},
  eprint = {1512.03012},
  archivePrefix = {arXiv},
  url = {https://arxiv.org/abs/1512.03012}
}

@article{axelrod2022geom,
  title = {{GEOM}, energy-annotated molecular conformations for property prediction and molecular generation},
  author = {Axelrod, Simon and G{\'o}mez-Bombarelli, Rafael},
  journal = {Scientific Data},
  volume = {9},
  pages = {185},
  year = {2022},
  doi = {10.1038/s41597-022-01288-4},
  url = {https://www.nature.com/articles/s41597-022-01288-4}
}

@article{ramakrishnan2014quantum,
  title = {Quantum chemistry structures and properties of 134 kilo molecules},
  author = {Ramakrishnan, Raghunathan and Dral, Pavlo O. and Rupp, Matthias and von Lilienfeld, O. Anatole},
  journal = {Scientific Data},
  volume = {1},
  pages = {140022},
  year = {2014},
  doi = {10.1038/sdata.2014.22},
  url = {https://www.nature.com/articles/sdata201422}
}

@inproceedings{achlioptas2018learning,
  title = {Learning Representations and Generative Models for {3D} Point Clouds},
  author = {Achlioptas, Panos and Diamanti, Olga and Mitliagkas, Ioannis and Guibas, Leonidas},
  booktitle = {Proceedings of the 35th International Conference on Machine Learning},
  volume = {80},
  pages = {40--49},
  publisher = {PMLR},
  year = {2018},
  url = {https://proceedings.mlr.press/v80/achlioptas18a.html}
}

@inproceedings{yang2019pointflow,
  title = {{PointFlow}: {3D} Point Cloud Generation with Continuous Normalizing Flows},
  author = {Yang, Guandao and Huang, Xun and Hao, Zekun and Liu, Ming-Yu and Belongie, Serge and Hariharan, Bharath},
  booktitle = {Proceedings of the IEEE/CVF International Conference on Computer Vision},
  year = {2019},
  url = {https://arxiv.org/abs/1906.12320}
}

@inproceedings{qi2017pointnet,
  title = {{PointNet}: Deep Learning on Point Sets for {3D} Classification and Segmentation},
  author = {Qi, Charles R. and Su, Hao and Mo, Kaichun and Guibas, Leonidas J.},
  booktitle = {Proceedings of the IEEE Conference on Computer Vision and Pattern Recognition},
  pages = {652--660},
  year = {2017},
  url = {https://openaccess.thecvf.com/content_cvpr_2017/html/Qi_PointNet_Deep_Learning_CVPR_2017_paper.html}
}

@inproceedings{kingma2015adam,
  title = {{Adam}: A Method for Stochastic Optimization},
  author = {Kingma, Diederik P. and Ba, Jimmy},
  booktitle = {International Conference on Learning Representations},
  year = {2015},
  url = {https://arxiv.org/abs/1412.6980}
}

@inproceedings{loshchilov2019decoupled,
  title = {Decoupled Weight Decay Regularization},
  author = {Loshchilov, Ilya and Hutter, Frank},
  booktitle = {International Conference on Learning Representations},
  year = {2019},
  url = {https://openreview.net/forum?id=Bkg6RiCqY7}
}

@article{ma2026drift,
  title = {{Drift Flow Matching}},
  author = {Ma, Chenrui and Xiao, Xi and Zhao, Lin and Wang, Tianyang and Fioretto, Ferdinando and Shen, Yanning},
  journal = {arXiv preprint arXiv:2605.17244},
  year = {2026},
  url = {https://arxiv.org/abs/2605.17244}
}

@inproceedings{xiao2026directions,
  title = {{Not All Directions Matter: Towards Structured and Task-Aware Low-Rank Model Adaptation}},
  author = {Xiao, Xi and Ma, Chenrui and Zhang, Yunbei and Liu, Chen and Wang, Zhuxuanzi and Li, Yanshu and Zhao, Lin and Hu, Guosheng and Wang, Tianyang and Xu, Hao},
  booktitle = {Proceedings of the 64th Annual Meeting of the Association for Computational Linguistics (Volume 1: Long Papers)},
  year = {2026},
  pages = {2132--2154},
  publisher = {Association for Computational Linguistics},
  doi = {10.18653/v1/2026.acl-long.97},
  url = {https://aclanthology.org/2026.acl-long.97/}
}
\bibliographystyle{iclr2027_conference}

\clearpage
% Retain the existing float-page settings for the appendix.
\let\floatpagefraction\pfmSavedFloatPageFraction
\appendix
\startcontents[appendix]
\noindent\rule{\textwidth}{0.8pt}
\begingroup
\small
\setlength{\parskip}{0pt}
\hypersetup{hidelinks}
\titlecontents{lsection}[1.5em]
  {\addvspace{3pt}\small\bfseries}{\contentslabel{1.5em}}{}
  {\titlerule*[0.75em]{.}\contentspage}
\titlecontents{lsubsection}[3.8em]
  {\small}{\contentslabel{2.3em}}{}
  {\titlerule*[0.75em]{.}\contentspage}
\printcontents[appendix]{l}{1}{\setcounter{tocdepth}{2}\section*{Appendix Contents}}
\endgroup
\noindent\rule{\textwidth}{0.8pt}
\clearpage

\begin{figure}[!htbp]
\centering
\makebox[\linewidth][c]{%
  \includegraphics[trim=20bp 8bp 20bp 8bp,clip,width=1.05\linewidth]{figure/image/method_overview.pdf}%
}
\par\vspace{-1em}
\begingroup
\pfmMethodApplicationstrue
\caption[Posterior Flow Matching: full overview.]{\input{figure/method_caption}}
\label{fig:pfm-method-full}
\endgroup
\vspace{-1em}
\end{figure}

\section{Notation and Shared Foundations}
\label{app:foundations}

This appendix establishes the foundations used in Section~\ref{sec:preliminaries} and the generator comparison in Section~\ref{sec:sampling}. For FM and PFM, independent endpoints $X_0\sim\mathcal N(0,I)$ and $X_1\sim p_{\mathrm{data}}$ define $X_t=(1-t)X_0+tX_1$ and the velocity target $\pfmVelocity{V}=X_1-X_0$. We first identify its conditional mean as the transport field. We then derive the diffusion score and endpoint readouts, establish the shared ODE--SDE correspondence, and recover the baseline entries of Table~\ref{tab:generative-dynamics}.

\subsection{Notation reference}
\label{app:notation}

Table~\ref{tab:notation} collects the shared symbols. Parameter subscripts identify the model, and $\pfmVelocity{\mu}$ denotes an endpoint-mean readout. Barred processes, densities, and scores refer to diffusion noising time. Appendix~\ref{app:conditional} introduces the external generation condition, and Appendix~\ref{app:endpoint} changes the directly predicted mean coordinate.
\begin{table}[!ht]
\centering
% Keep the compact caption clear of the table rule.
\setlength{\belowcaptionskip}{1em}
\par\vspace{-1em}
\caption{Core notation used in the main text. Local notation is defined in the relevant appendix. Here $d$ is the state dimension and $I$ the identity matrix.}
\label{tab:notation}
\vspace{-1em}
\scriptsize
\begin{tabular}{@{}p{0.27\linewidth}p{0.68\linewidth}@{}}
\toprule
Symbol & Meaning \\
\midrule
$X_0,X_1;\ Z$ & Noise and data endpoints; their pair defining a reference path. \\
$X_t;\ Y_t;\ \bar X_\tau$ & FM/PFM reference state; generation process; diffusion noising process. \\
$t;\ \tau=1-t;\ C$ & Generation time, noising time, and external generation condition. \\
$p_0;\ p_1=p_{\mathrm{data}}$ & Source and data distributions. \\
$p_t;\ \bar p_\tau$ & FM/PFM reference density; diffusion noising density. \\
\addlinespace
$\pfmVelocity{V};\ \pfmVelocity{v}$ & Sampled velocity target; its conditional mean. \\
$\eta;\phi;\theta$ & Parameters of FM, diffusion, and PFM. \\
$\pfmVelocity{v_\eta};\ \pfmVelocity{v_\theta}$ & FM velocity predictor; PFM velocity mean. \\
$q_\theta;\ q_\theta^x$ & Velocity and endpoint posterior models. \\
$F_\theta$ & Network predicting velocity mean and log variance. \\
$\pfmDiffusion{\ell_\theta};\pfmDiffusion{\gamma_\theta};\pfmDiffusion{\Sigma_\theta}$ & Velocity log-variance vector, componentwise variance vector, and diagonal covariance. \\
$\mathcal L_{\mathrm{PFM}};\ \mathcal L_{\mathrm{CFM}}$ & PFM Gaussian negative log-likelihood (NLL) objective; CFM regression objective. \\
$H;\ I(\cdot;\cdot)$ & Differential entropy; mutual information (conditional when given $C$). \\
$r$ & Training time law. \\
\addlinespace
$s;\ \bar s;\ s_\phi$ & FM/PFM score; diffusion noising score; predictor of $\bar s(x,1-t)$ in generation time. \\
$\widehat s_\eta;\widehat s_\theta$ & Scores derived from the corresponding velocity predictions. \\
$b;\ b_\theta$ & General SDE drift; PFM drift computed from posterior outputs. \\
$\pfmDiffusion{B};\ \pfmDiffusion{a}=\pfmDiffusion{B}\pfmDiffusion{B}^\top$ & Brownian noise coefficient; diffusion tensor (covariance rate). \\
$\pfmDiffusion{B_\theta};\ \pfmDiffusion{a_\theta}=\lambda\pfmDiffusion{\Sigma_\theta}$ & PFM Brownian noise coefficient; posterior-driven diffusion tensor. \\
$f;\ \pfmDiffusion{g};\ W$ & Prescribed noising drift, scalar noise coefficient, and standard Brownian motion. \\
$\pfmDiffusion{\kappa}(t);\lambda(t)$ & FM isotropic covariance rate; PFM posterior-covariance scale. \\
$\widetilde p_t$ & Density evolved from a different initial law under the same exact compatible dynamics as $p_t$. \\
$T$ & Terminal integration time. \\
\addlinespace
$\pfmVelocity{\mu_\eta};\ \pfmVelocity{\mu_\phi};\ \pfmVelocity{\mu_\theta}$ & Endpoint-mean readouts of FM, diffusion, and PFM. \\
$\pfmDiffusion{\Sigma_\theta}^x$ & PFM endpoint-posterior diagonal covariance. \\
$\alpha_t;\sigma_t$ & Diffusion kernel signal scale and cumulative noise standard deviation. \\
\bottomrule
\end{tabular}
\par\vspace{-1em}
\end{table}

\subsection{Conditional-to-marginal flow matching}
\label{app:fm}

We prove the continuity identity in Eq.~\ref{eq:marginal-velocity} and the gradient equivalence stated after Eq.~\ref{eq:cfm-loss}~\citep{lipman2023flow,albergo2023building,lipman2024flowmatchingguidecode}. For a smooth compactly supported test function $\psi$, the reference interpolation gives
\begin{equation*}
 \frac{d}{dt}\mathbb E[\psi(X_t)]
 =\mathbb E[\nabla\psi(X_t)^\top \pfmVelocity{V}]
 =\mathbb E[\nabla\psi(X_t)^\top \pfmVelocity{v}(X_t,t)].
\end{equation*}
Finite endpoint second moments imply integrability of $\pfmVelocity{V}$. Together with the bounded gradient of $\psi$, this justifies differentiating along the path. This is the weak form of the continuity equation. A well-defined ODE flow with unique density evolution and initialization $p_0$ follows these marginals.

The marginal regression objective corresponding to CFM is
\begin{equation}
 \mathcal L_{\mathrm{FM}}(\eta)
 =\frac12\mathbb E_{t,X_t}\|\pfmVelocity{v_\eta}(X_t,t)-\pfmVelocity{v}(X_t,t)\|^2.
 \label{eq:marginal-fm-loss}
\end{equation}
Both losses use the same parameter-independent time law $r$ and reference interpolation. Write $D_\eta \pfmVelocity{v_\eta}$ for the parameter Jacobian, with output coordinates indexing rows. Under finite risks and conditions allowing differentiation under expectation,
\begin{equation}
 \begin{aligned}
 \nabla_\eta\mathcal L_{\mathrm{CFM}}
 &=\mathbb E[(D_\eta \pfmVelocity{v_\eta})^\top(\pfmVelocity{v_\eta}-\pfmVelocity{V})]\\
 &=\mathbb E[(D_\eta \pfmVelocity{v_\eta})^\top(\pfmVelocity{v_\eta}-\mathbb E[\pfmVelocity{V}\mid X_t,t])]\\
 &=\mathbb E[(D_\eta \pfmVelocity{v_\eta})^\top(\pfmVelocity{v_\eta}-\pfmVelocity{v})]
 =\nabla_\eta\mathcal L_{\mathrm{FM}}.
 \end{aligned}
 \label{eq:fm-gradient-proof}
\end{equation}
All network quantities are evaluated at $(X_t,t)$. Conditioning on these inputs leaves the prediction and its parameter Jacobian fixed, giving the second equality.

Expanding $\pfmVelocity{V}-\pfmVelocity{v_\eta}=(\pfmVelocity{V}-\pfmVelocity{v})+(\pfmVelocity{v}-\pfmVelocity{v_\eta})$ gives a second view of the same result:
\begin{equation}
 \mathbb E\|\pfmVelocity{V}-\pfmVelocity{v_\eta}(X_t,t)\|^2
 =\mathbb E\|\pfmVelocity{V}-\pfmVelocity{v}(X_t,t)\|^2
 +\mathbb E\|\pfmVelocity{v}(X_t,t)-\pfmVelocity{v_\eta}(X_t,t)\|^2.
 \label{eq:fm-decomposition}
\end{equation}
The cross term vanishes because $\mathbb E[\pfmVelocity{V}-\pfmVelocity{v}\mid X_t,t]=0$. The first term is independent of the predictor, and the second is minimized by the conditional mean. Equations~\ref{eq:fm-gradient-proof} and~\ref{eq:fm-decomposition} justify the sample-based training in Section~\ref{sec:reference-path} and identify the velocity target recovered by Eq.~\ref{eq:fm-recovery}.

\subsection{Diffusion training and conditional scores}
\label{app:diffusion-training}

For the noising process in Section~\ref{sec:diffusion}, pair the data endpoint $X_1$ with $\bar X_0=X_1$ and draw $\bar X_{1-t}$ from its perturbation kernel $\bar p_{1-t}(\cdot\mid X_1)$. Denoising score matching uses~\citep{song2021score}
\begin{equation}
 \mathcal L_{\mathrm{score}}(\phi)=\frac12\mathbb E_{t,X_1,\bar X_{1-t}}
 \left[w(t)\left\|s_\phi(\bar X_{1-t},t)
 -\left.\nabla_x\log\bar p_{1-t}(x\mid X_1)\right|_{x=\bar X_{1-t}}\right\|^2\right],
 \label{eq:score-loss}
\end{equation}
where $X_1\sim p_{\mathrm{data}}$, the time law is fixed, and $w(t)>0$ is a prescribed weight. A Gaussian kernel with signal scale $\alpha_t$ and cumulative noise standard deviation $\sigma_t>0$ gives
\begin{equation}
 \bar p_{1-t}(x\mid x_1)=\mathcal N(x;\alpha_t x_1,\sigma_t^2I),\qquad
 \nabla_x\log\bar p_{1-t}(x\mid x_1)=-\frac{x-\alpha_t x_1}{\sigma_t^2}.
 \label{eq:diffusion-kernel-score}
\end{equation}
The cumulative perturbation scale $\sigma_t$ and instantaneous noise coefficient $\pfmDiffusion{g}(1-t)$ have distinct roles.

For a positive marginal density, differentiation under the mixture integral and Bayes' rule give
\begin{equation}
 \begin{aligned}
 \bar s(x,1-t)
 &=\frac{\int\nabla_x\bar p_{1-t}(x\mid x_1)\,p_1(dx_1)}{\bar p_{1-t}(x)}\\
 &=\mathbb E[\nabla_x\log\bar p_{1-t}(x\mid X_1)\mid\bar X_{1-t}=x].
 \end{aligned}
 \label{eq:score-averaging}
\end{equation}
Assuming these derivatives and expectations exist and the squared risks are finite, the conditional-mean decomposition in Eq.~\ref{eq:fm-decomposition} identifies the unrestricted minimizer as $s_\phi(x,t)=\bar s(x,1-t)$ wherever the training time law has support.

For the Gaussian kernel, substituting Eq.~\ref{eq:diffusion-kernel-score} into Eq.~\ref{eq:score-averaging} yields
\begin{equation}
 \bar s(x,1-t)=\frac{\alpha_t\mathbb E[X_1\mid\bar X_{1-t}=x]-x}{\sigma_t^2},\qquad
 \mathbb E[X_1\mid\bar X_{1-t}=x]=\frac{x+\sigma_t^2\bar s(x,1-t)}{\alpha_t},
 \label{eq:diffusion-endpoint-mean}
\end{equation}
with the second identity requiring $\alpha_t\ne0$. This relation supplies the diffusion endpoint readout used in Table~\ref{tab:generative-dynamics}. The baseline comparison in Appendix~\ref{app:baseline-sampling} collects it together with the FM readout after constructing both samplers.

\subsection{The Gaussian velocity--score identity}
\label{app:score}

We derive Eq.~\ref{eq:velocity-score}, which supplies the scores used by FM and PFM. Return to the independent Gaussian linear interpolation of Section~\ref{sec:reference-path}~\citep{lipman2024flowmatchingguidecode}. Its density is
\begin{equation*}
 p_t(x)=\int\mathcal N(x;tx_1,(1-t)^2I)\,p_1(dx_1).
\end{equation*}
Applying the mixture calculation in Eq.~\ref{eq:score-averaging} gives
\begin{equation}
 s(x,t)=\mathbb E\left[-\frac{x-tX_1}{(1-t)^2}\,\middle|\,X_t=x\right]
 =\frac{t\mathbb E[X_1\mid X_t=x]-x}{(1-t)^2}.
 \label{eq:endpoint-score-exact}
\end{equation}
Taking conditional expectations in Eq.~\ref{eq:endpoint-map} yields $\mathbb E[X_1\mid X_t=x]=x+(1-t)\pfmVelocity{v}(x,t)$. Substitution into Eq.~\ref{eq:endpoint-score-exact} gives
\begin{equation*}
 s(x,t)=\frac{t[x+(1-t)\pfmVelocity{v}(x,t)]-x}{(1-t)^2}
       =\frac{t\pfmVelocity{v}(x,t)-x}{1-t},
\end{equation*}
proving Eq.~\ref{eq:velocity-score}. The corresponding learned score obeys
\begin{equation}
 \widehat s_\theta-s=\frac{t}{1-t}(\pfmVelocity{v_\theta}-\pfmVelocity{v}),
 \label{eq:score-error}
\end{equation}
and the same relation holds with $\eta$ for FM. These derived scores enter the FM SDE in Eq.~\ref{eq:fm-sde-expanded} and the PFM SDE in Eq.~\ref{eq:pfm-sde}.

The same calculation at a fixed external condition gives the score identity in Appendix~\ref{app:conditional-dynamics}.

\subsection{Fokker--Planck correspondence and marginal compatibility}
\label{app:compatibility}

We expand Eq.~\ref{eq:correspondence} in Section~\ref{sec:correspondence}~\citep{song2021score,albergo2025stochastic}. For $dY_t=b\,dt+\pfmDiffusion{B}\,dW_t$, the tensor $\pfmDiffusion{a}=\pfmDiffusion{B}\pfmDiffusion{B}^\top$ is symmetric positive semidefinite. Brownian increments have covariance $hI$, so, under local regularity,
\begin{equation*}
 \mathbb E[Y_{t+h}-Y_t\mid Y_t=x]=h\,b(x,t)+o(h),\qquad
 \operatorname{Cov}(Y_{t+h}-Y_t\mid Y_t=x)=h\,\pfmDiffusion{a}(x,t)+o(h).
\end{equation*}
The density equation is
\begin{equation}
 \partial_t p_t=-\nabla_x\!\cdot(bp_t)
 +\frac12\sum_{i,j}\partial_{x_i}\partial_{x_j}(\pfmDiffusion{a_{ij}}p_t).
 \label{eq:fokker-planck}
\end{equation}
For $s=\nabla_x\log p_t$, the product rule gives
\begin{equation}
 \sum_j\partial_{x_j}(\pfmDiffusion{a_{ij}}p_t)
 =\left[\sum_j\partial_{x_j}\pfmDiffusion{a_{ij}}+\sum_j \pfmDiffusion{a_{ij}}s_j\right]p_t
 =[(\nabla_x\!\cdot \pfmDiffusion{a})+\pfmDiffusion{a}\,s]_i p_t.
 \label{eq:diffusion-product-rule}
\end{equation}
Consequently, Eq.~\ref{eq:fokker-planck} becomes
\begin{equation*}
 \partial_t p_t=-\nabla_x\!\cdot\left[
 \left(b-\tfrac12\nabla_x\!\cdot \pfmDiffusion{a}-\tfrac12\pfmDiffusion{a}\,s\right)p_t\right].
\end{equation*}
The bracket is the probability-current velocity $\pfmVelocity{v}$ in Eq.~\ref{eq:correspondence}. Solving for $b$ gives the converse construction from an ODE to an SDE.

For the compatibility claim, let a positive density path solve the continuity equation with velocity $\pfmVelocity{v}$ on $[0,T]$, $T<1$. Choose $\pfmDiffusion{a}=\pfmDiffusion{B}\pfmDiffusion{B}^\top$ and $b=\pfmVelocity{v}+(\nabla_x\!\cdot \pfmDiffusion{a}+\pfmDiffusion{a}\nabla_x\log p_t)/2$. Assume sufficient coefficient and density regularity for the displayed derivatives, well-posed ODE and SDE dynamics, and unique density evolution with the specified initial law. Decay at infinity or boundary conditions eliminating boundary fluxes complete the density calculation. Substituting Eq.~\ref{eq:diffusion-product-rule} yields
\begin{equation*}
 -\nabla_x\!\cdot(bp_t)
 +\frac12\sum_{i,j}\partial_{x_i}\partial_{x_j}(\pfmDiffusion{a_{ij}}p_t)
 =-\nabla_x\!\cdot(\pfmVelocity{v}p_t).
\end{equation*}
Thus the reference density solves both equations. Matching initialization and uniqueness then identify the SDE marginals with $p_t$. This proves the correspondence used in Section~\ref{sec:correspondence}. It concerns one-time marginals, while the processes can have different joint laws across time.

\subsection{Diffusion and FM as sampling special cases}
\label{app:baseline-sampling}

We now apply the shared density construction to the diffusion and FM baselines. The resulting dynamics and endpoint means justify the comparison with PFM in Section~\ref{sec:sampling} and Table~\ref{tab:generative-dynamics}. Diffusion uses the prescribed noising fields $f,\pfmDiffusion{g}$ and score predictor $s_\phi$. FM uses the velocity predictor $\pfmVelocity{v_\eta}$ on the independent Gaussian reference path.

The noising density $\bar p_\tau$ satisfies
\begin{equation*}
 \partial_\tau\bar p_\tau=-\nabla_x\!\cdot(f\bar p_\tau)
 +\tfrac12\pfmDiffusion{g}(\tau)^2\Delta_x\bar p_\tau.
\end{equation*}
Reversing time and using $\Delta_x\bar p_{1-t}=\nabla_x\!\cdot(\bar s(x,1-t)\bar p_{1-t})$ gives
\begin{equation*}
 \partial_t\bar p_{1-t}
 =\nabla_x\!\cdot(f\bar p_{1-t})-\tfrac12\pfmDiffusion{g}^2\Delta_x\bar p_{1-t}
 =-\nabla_x\!\cdot[(-f+\tfrac12\pfmDiffusion{g}^2\bar s)\bar p_{1-t}],
\end{equation*}
where $f,\pfmDiffusion{g},\bar s$ are evaluated at noising time $1-t$. Applying Appendix~\ref{app:compatibility} to this density path gives the probability-flow ODE~\citep{song2021score}. Choosing $\pfmDiffusion{a}=\pfmDiffusion{g}^2I$ adds $\pfmDiffusion{g}^2\bar s/2$ to its velocity, yielding the reverse SDE. With the exact score and the terminal noising law as initialization, these dynamics reproduce $\bar p_{1-t}$. Replacing $\bar s(x,1-t)$ by $s_\phi(x,t)$ gives the diffusion rows of Table~\ref{tab:generative-dynamics}.

For FM, Appendix~\ref{app:score} supplies $\widehat s_\eta=(t\pfmVelocity{v_\eta}-x)/(1-t)$. Choose $\pfmDiffusion{a}=\pfmDiffusion{\kappa}(t)I$ with $\pfmDiffusion{\kappa}(t)\geq0$~\citep{albergo2025stochastic,lipman2024flowmatchingguidecode}. Then $\nabla_x\!\cdot \pfmDiffusion{a}=0$ and $\pfmDiffusion{B}=\sqrt{\pfmDiffusion{\kappa}} I$, so
\begin{align}
 dY_t&=\pfmVelocity{v_\eta}(Y_t,t)dt,\nonumber\\
 dY_t&=[\pfmVelocity{v_\eta}+\tfrac12\pfmDiffusion{\kappa}\widehat s_\eta](Y_t,t)dt
       +\sqrt{\pfmDiffusion{\kappa}(t)}\,dW_t.
 \label{eq:fm-sde-expanded}
\end{align}
Both start from $\mathcal N(0,I)$. At $\pfmVelocity{v_\eta}=\pfmVelocity{v}$, the score is exact and Appendix~\ref{app:compatibility} gives the reference marginals. Setting $\pfmDiffusion{\kappa}=0$ returns the ODE. Appendix~\ref{app:updates} supplies the discrete SDE step.

For the endpoint means, taking conditional expectations in $X_1=X_t+(1-t)\pfmVelocity{V}$ gives the FM readout. Replacing the exact score in Eq.~\ref{eq:diffusion-endpoint-mean} by $s_\phi$ gives its diffusion counterpart:
\begin{equation}
 \pfmVelocity{\mu_\eta}(x,t)=x+(1-t)\pfmVelocity{v_\eta}(x,t),\qquad
 \pfmVelocity{\mu_\phi}(x,t)=\frac{x+\sigma_t^2s_\phi(x,t)}{\alpha_t},\quad \alpha_t\ne0.
 \label{eq:baseline-readouts}
\end{equation}
Together with the dynamics above, these readouts give the baseline entries of Table~\ref{tab:generative-dynamics}. PFM uses the same correspondence with the posterior-dependent tensor derived in Appendix~\ref{app:diffusion-design}, and its additional variance readouts follow from Appendix~\ref{app:posterior-map}.

\section{Posterior Learning and Coordinate Transformations}
\label{app:posterior}
\label{app:information}

This appendix develops the learning argument in Section~\ref{sec:learning}, from endpoint information to the moments fitted by the PFM objective. Independent $X_0\sim\mathcal N(0,I)$ and $X_1\sim p_{\mathrm{data}}$ define $X_t=(1-t)X_0+tX_1$ and $\pfmVelocity{V}=X_1-X_0$. At fixed $t<1$, we fit the endpoint posterior $p(X_1\mid X_t)$ with $q_\theta^x$. We first derive the information bound in Eq.~\ref{eq:mi-bound}, then express the likelihood in velocity coordinates and prove Proposition~\ref{prop:moments}. The information identities assume conditional densities, finite displayed entropies and cross-entropies, and a model density positive on the posterior support.

\subsection{Endpoint posterior fitting and information}
\label{app:information-bounds}

At fixed time, the available endpoint information and remaining conditional uncertainty satisfy
\begin{equation}
 I(X_t;X_1)=H(X_1)-H(X_1\mid X_t).
 \label{eq:mi}
\end{equation}
To connect this identity to posterior fitting, write $p(\cdot\mid X_t)$ for the true endpoint posterior, suppressing fixed time. We use $\operatorname{KL}(p\|q)=\mathbb E_p[\log(p/q)]$ for the Kullback--Leibler divergence, or relative entropy. Adding and subtracting its log density in the expected NLL gives
\begin{equation}
 \begin{aligned}
 \mathbb E[-\log q_\theta^x(X_1\mid X_t,t)]
 &=-\mathbb E\log p(X_1\mid X_t)
 +\mathbb E\log\frac{p(X_1\mid X_t)}{q_\theta^x(X_1\mid X_t,t)}\\
 &=H(X_1\mid X_t)
 +\mathbb E_{X_t}\operatorname{KL}\big(p(\cdot\mid X_t)\|q_\theta^x(\cdot\mid X_t,t)\big).
 \end{aligned}
 \label{eq:unconditional-cross-entropy}
\end{equation}
Subtracting this identity from $H(X_1)$ and using the nonnegativity of KL gives~\citep{barber2003variational,poole2019variational}
\begin{equation*}
 I(X_t;X_1)\geq H(X_1)+\mathbb E\log q_\theta^x(X_1\mid X_t,t),
\end{equation*}
the bound in Eq.~\ref{eq:mi-bound}. Its gap is the expected posterior KL in Eq.~\ref{eq:unconditional-cross-entropy}.

Equality in Eq.~\ref{eq:mi-bound} holds when the model matches the true posterior almost everywhere. The reference path fixes the MI, while posterior fitting reduces the approximation gap. The affine relation below expresses the same objective in velocity coordinates.

\subsection{Velocity and endpoint coordinates}
\label{app:posterior-map}
\label{app:entropy-map}

The reference path implies $X_1=X_t+(1-t)\pfmVelocity{V}$, the affine identity in Eq.~\ref{eq:endpoint-map}. At fixed $(x,t)$ with $t<1$, the map $x_1=x+(1-t)v$ is invertible and has determinant $(1-t)^d$. The induced endpoint model therefore has density
\begin{equation}
 q_\theta^x(x_1\mid x,t)
 =\frac{q_\theta((x_1-x)/(1-t)\mid x,t)}{(1-t)^d}.
 \label{eq:posterior-density-map}
\end{equation}
Taking negative logarithms yields
\begin{equation}
 -\log q_\theta^x(X_1\mid X_t,t)
 =-\log q_\theta(\pfmVelocity{V}\mid X_t,t)+d\log(1-t).
 \label{eq:nll-map}
\end{equation}
The last term is independent of $\theta$, which justifies the velocity-coordinate objective in Section~\ref{sec:learning}. The same map also determines the endpoint moments in Eq.~\ref{eq:endpoint-readout}.

For any true conditional law with finite second moments, the same affine map gives
\begin{align}
 \mathbb E[X_1\mid X_t=x]&=x+(1-t)\pfmVelocity{v}(x,t),\nonumber\\
 \operatorname{Cov}(X_1\mid X_t=x)
 &=(1-t)^2\operatorname{Cov}(\pfmVelocity{V}\mid X_t=x).
 \label{eq:true-endpoint-moments}
\end{align}
Subtracting the conditional mean from the affine identity multiplies the centered velocity by $1-t$. Taking its outer product therefore proves the full covariance relation. The model retains the diagonal of this covariance at the population optimum in Proposition~\ref{prop:moments}.

The same Jacobian determines the entropy transformation. Assuming finite differential entropies, taking the expected negative log density of the transformed true posterior gives
\begin{equation}
 H(X_1\mid X_t)=H(\pfmVelocity{V}\mid X_t)+d\log(1-t).
 \label{eq:entropy-map}
\end{equation}
The translation by the observed state leaves conditional entropy unchanged, while the scale $1-t$ contributes its log determinant. This connects the information bound to the velocity-coordinate likelihood.

\subsection{Gaussian model and likelihood}
\label{app:gaussian-likelihood}

The model $q_\theta=\mathcal N(\pfmVelocity{v_\theta},\pfmDiffusion{\Sigma_\theta})$ uses $\pfmDiffusion{\Sigma_\theta}=\operatorname{diag}\pfmDiffusion{\gamma_\theta}$ with positive variances. Applying the affine map to this Gaussian gives
\begin{equation*}
 q_\theta^x=\mathcal N(\pfmVelocity{\mu_\theta},\pfmDiffusion{\Sigma_\theta}^x),\qquad
 \pfmVelocity{\mu_\theta}=x+(1-t)\pfmVelocity{v_\theta},\qquad \pfmDiffusion{\Sigma_\theta}^x=(1-t)^2\pfmDiffusion{\Sigma_\theta}.
\end{equation*}
These are the moments of the model distribution in Eq.~\ref{eq:endpoint-readout}.

The complete velocity negative log-likelihood is
\begin{equation}
 -\log q_\theta(\pfmVelocity{V}\mid X_t,t)
 =\frac12\left[(\pfmVelocity{V}-\pfmVelocity{v_\theta})^\top\pfmDiffusion{\Sigma_\theta}^{-1}(\pfmVelocity{V}-\pfmVelocity{v_\theta})
 +\log\det\pfmDiffusion{\Sigma_\theta}\right]+\frac d2\log(2\pi).
 \label{eq:full-velocity-nll}
\end{equation}
Define $\mathcal L_{\mathrm{PFM}}=\mathbb E[-\log q_\theta(\pfmVelocity{V}\mid X_t,t)]-\tfrac d2\log(2\pi)$, which is the objective in Eq.~\ref{eq:pfm-loss}. Combining Eqs.~\ref{eq:posterior-density-map} and~\ref{eq:full-velocity-nll} and averaging under the fixed training law gives
\begin{equation}
 \mathbb E[-\log q_\theta^x(X_1\mid X_t,t)]
 =\mathcal L_{\mathrm{PFM}}
 +d\,\mathbb E_t\log(1-t)+\frac d2\log(2\pi).
 \label{eq:full-endpoint-nll}
\end{equation}
The averaged identity assumes finite expected negative log-likelihoods and integrability of $\log(1-t)$. At each fixed $t<1$, the pointwise density identity already gives the parameter-independent difference. Since the additional terms in Eq.~\ref{eq:full-endpoint-nll} are independent of $\theta$, minimizing $\mathcal L_{\mathrm{PFM}}$ maximizes the variational lower bound in Eq.~\ref{eq:mi-bound} averaged over the training time law $r$.

\paragraph{Diagonal covariance.}
\label{app:diagonal}

At fixed $(x,t)$,
\begin{equation*}
 \operatorname{Cov}_{q_\theta}(\pfmVelocity{V_i},\pfmVelocity{V_j}\mid x,t)=0\quad(i\ne j).
\end{equation*}
The approximation uses $d$ variance outputs instead of the $d(d+1)/2$ entries of a full symmetric covariance. Each output still depends on the complete state, so $\partial_{x_j}\pfmDiffusion{\gamma_{\theta,i}}$ can be nonzero for $i\ne j$. Conditional covariance of the modeled target and the Jacobian of the predicted variance field are different objects: the former is diagonal here, while the latter can be dense. Appendix~\ref{app:derivatives} computes the diagonal of this input Jacobian for the SDE drift.

\subsection{Proof of conditional moment recovery}
\label{app:moments}

We prove Proposition~\ref{prop:moments} and obtain the residual identity used by the diagnostics in Section~\ref{sec:diagnostics}.
\begin{proof}[Proof of Proposition~\ref{prop:moments}]
Fix a state, time, and coordinate $i$. Let $\pfmVelocity{m_i}$ and $\pfmDiffusion{\sigma_i}^2>0$ denote the true conditional mean and variance. For a candidate mean $\pfmVelocity{u_i}$,
\begin{equation*}
 \mathbb E[(\pfmVelocity{V_i}-\pfmVelocity{u_i})^2\mid X_t=x]
 =\mathbb E[(\pfmVelocity{V_i}-\pfmVelocity{m_i})^2\mid X_t=x]+(\pfmVelocity{m_i}-\pfmVelocity{u_i})^2,
\end{equation*}
since the centered cross term has zero expectation. A candidate positive variance $\pfmDiffusion{\gamma_i}$ therefore gives conditional Gaussian risk
\begin{equation}
 R_i(\pfmVelocity{u_i},\pfmDiffusion{\gamma_i})
 =\frac12\left[\frac{\pfmDiffusion{\sigma_i}^2+(\pfmVelocity{m_i}-\pfmVelocity{u_i})^2}{\pfmDiffusion{\gamma_i}}+\log\pfmDiffusion{\gamma_i}\right].
 \label{eq:conditional-risk}
\end{equation}
For any fixed positive $\pfmDiffusion{\gamma_i}$, the unique mean minimizer is $\pfmVelocity{u_i}=\pfmVelocity{m_i}$. At that mean,
\begin{equation*}
 \partial_{\pfmDiffusion{\gamma_i}}R_i=\frac{\pfmDiffusion{\gamma_i}-\pfmDiffusion{\sigma_i}^2}{2\pfmDiffusion{\gamma_i}^2}.
\end{equation*}
It is negative below $\pfmDiffusion{\sigma_i}^2$ and positive above, giving the unique variance minimum. Summing the coordinate risks proves the result. Only the true conditional first and second moments enter this calculation.
\end{proof}
For a fixed biased mean and a positive residual second moment, minimizing Eq.~\ref{eq:conditional-risk} over variance gives
\begin{equation}
 \pfmDiffusion{\gamma_i}=\mathbb E[(\pfmVelocity{V_i}-\pfmVelocity{u_i})^2\mid X_t=x]
 =\pfmDiffusion{\sigma_i}^2+(\pfmVelocity{m_i}-\pfmVelocity{u_i})^2.
 \label{eq:bias-variance}
\end{equation}
This supplies the residual interpretation in Appendix~\ref{app:diagnostic-calibration}.

The implementations in Appendix~\ref{app:implementation-details} bound each log variance between fixed values $\pfmDiffusion{\ell_-}<\pfmDiffusion{\ell_+}$. For the diagonal covariance model, the same risk gives the optimum over the closed interval $[e^{\pfmDiffusion{\ell_-}},e^{\pfmDiffusion{\ell_+}}]$. For a fixed mean $\pfmVelocity{u_i}$, its variance derivative has the sign of $\pfmDiffusion{\gamma_i}-\mathbb E[(\pfmVelocity{V_i}-\pfmVelocity{u_i})^2\mid X_t=x]$, so
\begin{equation}
 \pfmDiffusion{\gamma_i^\star}(\pfmVelocity{u_i})
 =\operatorname{clip}\!\left(\mathbb E[(\pfmVelocity{V_i}-\pfmVelocity{u_i})^2\mid X_t=x],
 e^{\pfmDiffusion{\ell_-}},e^{\pfmDiffusion{\ell_+}}\right).
 \label{eq:bounded-variance-optimum}
\end{equation}
Here $\operatorname{clip}(r,a,b)=\min\{b,\max\{a,r\}\}$. With a freely optimized mean at each $(x,t)$, the mean optimum remains $\pfmVelocity{u_i^\star}=\pfmVelocity{m_i}$ and Eq.~\ref{eq:bounded-variance-optimum} becomes the clipped conditional variance $\operatorname{clip}(\pfmDiffusion{\sigma_i}^2,e^{\pfmDiffusion{\ell_-}},e^{\pfmDiffusion{\ell_+}})$. Thus the bounds retain conditional moment recovery within the variance interval and set its limiting scales outside it. Appendix~\ref{app:training} gives the smooth parameterization used to enforce these bounds.

Fixing all variances to one in Eq.~\ref{eq:pfm-loss} removes likelihood weighting and the log determinant, leaving
\begin{equation}
 \mathcal L_{\mathrm{PFM}}=\frac12\mathbb E\|\pfmVelocity{V}-\pfmVelocity{v_\theta}(X_t,t)\|^2.
 \label{eq:fm-recovery}
\end{equation}
This is the FM specialization stated in Section~\ref{sec:learning}. Together, the likelihood transformation and moment recovery show how one PFM objective fits the velocity moments used for generation and the endpoint moments used for diagnostics.

\section{Posterior-Driven Dynamics}
\label{app:dynamics}

This appendix derives the PFM SDE in Section~\ref{sec:sampling} and proves its relative-entropy result. Let $p_t$ be a reference density transported by $\pfmVelocity{v}$, with score $s=\nabla_x\log p_t$. PFM predicts $\pfmVelocity{v_\theta}$ and the positive diagonal covariance $\pfmDiffusion{\Sigma_\theta}=\operatorname{diag}\pfmDiffusion{\gamma_\theta}$. We use the covariance to set local diffusion scales, pair the resulting tensor with a compatible drift, and then derive the dissipation identity for two densities evolving under those same coefficients.

\subsection{From posterior normalization to the diffusion tensor}
\label{app:diffusion-design}

The design principle in Eq.~\ref{eq:posterior-normalization} makes the noise isotropic after dividing each noise coordinate by the predicted velocity posterior standard deviation at the current state: $\pfmDiffusion{\Sigma_\theta}^{-1/2}\pfmDiffusion{a_\theta}\pfmDiffusion{\Sigma_\theta}^{-1/2}=\lambda(t)I$ with $\lambda(t)\geq0$. At fixed $(x,t)$, the positive diagonal covariance is invertible. Multiplying both sides by $\pfmDiffusion{\Sigma_\theta}^{1/2}$ yields $\pfmDiffusion{a_\theta}=\lambda\pfmDiffusion{\Sigma_\theta}$. A square root is $\pfmDiffusion{B_\theta}=\sqrt\lambda\operatorname{diag}\sqrt{\pfmDiffusion{\gamma_\theta}}$, for which $\pfmDiffusion{B_\theta} \pfmDiffusion{B_\theta}^\top=\pfmDiffusion{a_\theta}$.

The normalization in Eq.~\ref{eq:posterior-normalization} is a local matrix calculation at the current state. For a Brownian increment $\Delta W\sim\mathcal N(0,hI)$, freezing the coefficients at this state gives
\begin{equation*}
 \operatorname{Cov}(\pfmDiffusion{B_\theta}\Delta W\mid x,t)=h\lambda\pfmDiffusion{\Sigma_\theta},
 \qquad
 \operatorname{Cov}(\pfmDiffusion{\Sigma_\theta}^{-1/2}\pfmDiffusion{B_\theta}\Delta W\mid x,t)=h\lambda I.
\end{equation*}
Thus coordinate $i$ receives noise variance $h\lambda\pfmDiffusion{\gamma_{\theta,i}}$. The posterior controls relative strengths, and $\lambda$ converts velocity covariance into a state-increment covariance rate. This explains the componentwise noise allocation in Section~\ref{sec:sampling}.

\subsection{Compatible drift and its coordinate form}
\label{app:coordinate-drift}

For the independent Gaussian reference path, the mean defines $\widehat s_\theta=(t\pfmVelocity{v_\theta}-x)/(1-t)$, as derived in Appendix~\ref{app:score}. Pairing the chosen diffusion tensor with this score and the mean gives the drift in Eq.~\ref{eq:pfm-sde}:
\begin{equation}
 b_\theta=\pfmVelocity{v_\theta}+\tfrac12\nabla_x\!\cdot \pfmDiffusion{a_\theta}
                   +\tfrac12\pfmDiffusion{a_\theta}\widehat s_\theta.
 \label{eq:pfm-drift-expanded}
\end{equation}
The positive correction follows from solving the shared correspondence for the SDE drift. Since $\lambda$ depends only on time,
\begin{equation*}
 (\nabla_x\!\cdot \pfmDiffusion{a_\theta})_i
 =\sum_j\partial_{x_j}\big[\lambda(t)\pfmDiffusion{\gamma_{\theta,i}}\delta_{ij}\big]
 =\lambda(t)\partial_{x_i}\pfmDiffusion{\gamma_{\theta,i}},
\end{equation*}
where $\delta_{ij}$ is the Kronecker delta. Defining $D_i=\partial_{x_i}\pfmDiffusion{\gamma_{\theta,i}}$ gives
\begin{equation}
 b_{\theta,i}=\pfmVelocity{v_{\theta,i}}+\frac{\lambda(t)}2
 \left[D_i+\pfmDiffusion{\gamma_{\theta,i}}\frac{t\pfmVelocity{v_{\theta,i}}-x_i}{1-t}\right].
 \label{eq:coordinate-drift}
\end{equation}
The three contributions are transport, correction for spatially varying covariance, and correction along the density score. Equations~\ref{eq:pfm-drift-expanded} and~\ref{eq:coordinate-drift} are the vector and coordinate forms of the sampler in Section~\ref{sec:sampling}.

At the population mean $\pfmVelocity{v_\theta}=\pfmVelocity{v}$, Eq.~\ref{eq:score-error} gives $\widehat s_\theta=s$. The conditions in Appendix~\ref{app:compatibility} then establish the reference marginals for the chosen $\pfmDiffusion{a_\theta}$. Posterior fitting determines which diffusion tensor is used within this compatible family. Setting $\lambda=0$ eliminates both corrections and the Brownian term, recovering Eq.~\ref{eq:pfm-ode-score}. Appendix~\ref{app:updates} turns these coefficients into numerical updates.

\subsection{Proof of relative-entropy dissipation}
\label{app:kl-proof}

Proposition~\ref{prop:kl} describes an initial density perturbation under exact compatible dynamics. To prove it, take any diffusion tensor $\pfmDiffusion{a}=\pfmDiffusion{B}\pfmDiffusion{B}^\top$ and fix the drift $b=\pfmVelocity{v}+(\nabla_x\!\cdot\pfmDiffusion{a}+\pfmDiffusion{a}\nabla_x\log p_t)/2$. The reference density $p_t$ and a second density $\widetilde p_t$ evolve under these same coefficients from their respective initial laws. Under the smoothness, finiteness, and vanishing-boundary conditions of the proposition, we first establish the general identity~\citep{fontbona2016trajectorial,wibisono2017information}
\begin{equation}
 \frac{d}{dt}\operatorname{KL}(\widetilde p_t\|p_t)
 =-\frac12\int\widetilde p_t
 \left(\nabla_x\log\frac{\widetilde p_t}{p_t}\right)^\top
 \pfmDiffusion{a}\left(\nabla_x\log\frac{\widetilde p_t}{p_t}\right)dx\leq0.
 \label{eq:kl-dissipation}
\end{equation}
\begin{proof}[Proof of Proposition~\ref{prop:kl}]
With the common coefficients fixed as above, applying Eq.~\ref{eq:diffusion-product-rule} to the diffusion term for $\widetilde p_t$ gives
\begin{equation*}
 \sum_j\partial_{x_j}(\pfmDiffusion{a_{ij}}\widetilde p_t)
 =[(\nabla_x\!\cdot \pfmDiffusion{a})+\pfmDiffusion{a}\nabla_x\log\widetilde p_t]_i\widetilde p_t.
\end{equation*}
Substitution of the common drift into the Fokker--Planck equation therefore yields
\begin{equation}
 \partial_t\widetilde p_t=-\nabla_x\!\cdot(\pfmVelocity{v}\widetilde p_t)
 +\frac12\nabla_x\!\cdot\left[\pfmDiffusion{a}\widetilde p_t
 \nabla_x\log\frac{\widetilde p_t}{p_t}\right].
 \label{eq:perturbed-density}
\end{equation}
The reference satisfies $\partial_t p_t=-\nabla_x\!\cdot(\pfmVelocity{v}p_t)$. Mass conservation removes the integral of $\partial_t\widetilde p_t$ when differentiating KL, leaving
\begin{equation}
 \frac{d}{dt}\operatorname{KL}(\widetilde p_t\|p_t)
 =\int\log\frac{\widetilde p_t}{p_t}\,\partial_t\widetilde p_t\,dx
 -\int\frac{\widetilde p_t}{p_t}\,\partial_t p_t\,dx.
 \label{eq:kl-derivative-expanded}
\end{equation}
All integrals below are over $\mathbb R^d$. Under the regularity and vanishing-boundary assumptions in Proposition~\ref{prop:kl}, integration by parts turns the two transport contributions into
\begin{equation*}
 \int \pfmVelocity{v}^\top\widetilde p_t\nabla_x\log\frac{\widetilde p_t}{p_t}\,dx
 -\int \pfmVelocity{v}^\top p_t\nabla_x\frac{\widetilde p_t}{p_t}\,dx=0,
\end{equation*}
because $\widetilde p_t\nabla_x\log(\widetilde p_t/p_t)=p_t\nabla_x(\widetilde p_t/p_t)$. The remaining term is
\begin{equation*}
 \begin{aligned}
 &\frac12\int\log\frac{\widetilde p_t}{p_t}\,
 \nabla_x\!\cdot\left[\pfmDiffusion{a}\widetilde p_t\nabla_x\log\frac{\widetilde p_t}{p_t}\right]dx\\
 &\qquad=-\frac12\int\widetilde p_t
 \left(\nabla_x\log\frac{\widetilde p_t}{p_t}\right)^\top
 \pfmDiffusion{a}\left(\nabla_x\log\frac{\widetilde p_t}{p_t}\right)dx.
 \end{aligned}
\end{equation*}
This proves Eq.~\ref{eq:kl-dissipation}, with the sign following from positive semidefiniteness. Substituting $\pfmDiffusion{a}=\lambda\operatorname{diag}\pfmDiffusion{\gamma_\theta}$ gives the coordinate sum in Eq.~\ref{eq:kl-coordinate}. Taking $\pfmDiffusion{a}=0$ leaves only the canceling transport terms and proves the ODE statement in Proposition~\ref{prop:kl}.
\end{proof}
For the isotropic FM choice $\pfmDiffusion{a}=\pfmDiffusion{\kappa}I$, all coordinate weights are $\pfmDiffusion{\kappa}(t)$. PFM uses $\lambda(t)\pfmDiffusion{\gamma_{\theta,i}}(x,t)$, so the posterior covariance in Eq.~\ref{eq:posterior-diffusion} determines the coordinatewise weights in relative-entropy dissipation.

When the dissipation is integrable in time, Eq.~\ref{eq:kl-dissipation} gives the cumulative identity
\begin{equation}
 \begin{aligned}
 &\operatorname{KL}(\widetilde p_T\|p_T)\\
 &\quad+\frac12\int_0^T\!\int\widetilde p_t
 \left(\nabla_x\log\frac{\widetilde p_t}{p_t}\right)^\top
 \pfmDiffusion{a}\left(\nabla_x\log\frac{\widetilde p_t}{p_t}\right)dx\,dt
 =\operatorname{KL}(\widetilde p_0\|p_0).
 \end{aligned}
 \label{eq:kl-cumulative}
\end{equation}
\section{Posterior Diagnostics}
\label{app:diagnostics}

This appendix gives the statistical basis for Section~\ref{sec:diagnostics}. Independent noise and data endpoints define $X_t=(1-t)X_0+tX_1$ and $\pfmVelocity{V}=X_1-X_0$. A PFM model predicts velocity mean $\pfmVelocity{v_\theta}$ and covariance $\pfmDiffusion{\Sigma_\theta}$. Their endpoint readouts are $\pfmVelocity{\mu_\theta}=x+(1-t)\pfmVelocity{v_\theta}$ and $\pfmDiffusion{\Sigma_\theta}^x=(1-t)^2\pfmDiffusion{\Sigma_\theta}$. We derive how prediction residuals transform between these coordinates, explain moment checks and Gaussian NLL comparisons, and finally interpret the readouts at states visited by a sampler.

\subsection{Posterior readouts and residual transformations}
\label{app:diagnostic-residuals}

We develop Eq.~\ref{eq:endpoint-readout} for reference pairs. The model distribution in Eq.~\ref{eq:posterior} has exact moments
\begin{equation*}
 \mathbb E_{q_\theta}[\pfmVelocity{V}\mid x,t]=\pfmVelocity{v_\theta}(x,t),\qquad
 \operatorname{Cov}_{q_\theta}(\pfmVelocity{V}\mid x,t)=\pfmDiffusion{\Sigma_\theta}(x,t).
\end{equation*}
Its induced endpoint moments follow from Appendix~\ref{app:posterior-map}. Proposition~\ref{prop:moments} identifies their true population targets.

For evaluation, pair a data endpoint $X_1$ not used in training with independent noise $X_0$, and construct the reference state $X_t$ and target velocity $\pfmVelocity{V}$ by Eq.~\ref{eq:reference-path}. The known endpoint then supplies coordinate residuals
\begin{equation*}
 e_i^v=\pfmVelocity{V_i}-\pfmVelocity{v_{\theta,i}}(X_t,t),\qquad
 e_i^x=X_{1,i}-\pfmVelocity{\mu_{\theta,i}}(X_t,t).
\end{equation*}
Substituting $X_1=X_t+(1-t)\pfmVelocity{V}$ and $\pfmVelocity{\mu_\theta}=X_t+(1-t)\pfmVelocity{v_\theta}$ gives $e_i^x=(1-t)e_i^v$, and hence
\begin{equation}
 (X_{1,i}-\pfmVelocity{\mu_{\theta,i}}(X_t,t))^2
 =(1-t)^2(\pfmVelocity{V_i}-\pfmVelocity{v_{\theta,i}}(X_t,t))^2.
 \label{eq:residual-map}
\end{equation}
This exact relation holds for every reference pair and predicted mean. This relation expresses the same residual comparison in velocity or endpoint coordinates.

\subsection{Moment and likelihood checks}
\label{app:diagnostic-calibration}

We explain how the residuals in Eq.~\ref{eq:residual-map} assess the moments predicted in Section~\ref{sec:diagnostics}. At fixed $(x,t)$, let $\pfmVelocity{m_i}$ and $\pfmDiffusion{\sigma_i}^2$ be the true velocity mean and variance. Equation~\ref{eq:bias-variance} gives
\begin{equation*}
 \mathbb E[e_i^v\mid X_t=x]=\pfmVelocity{m_i}-\pfmVelocity{v_{\theta,i}},\qquad
 \mathbb E[(e_i^v)^2\mid X_t=x]
 =\pfmDiffusion{\sigma_i}^2+(\pfmVelocity{m_i}-\pfmVelocity{v_{\theta,i}})^2.
\end{equation*}
Thus squared residuals measure conditional variance at an accurate mean and also include squared mean bias otherwise.

A practical check groups reference states at matched times by their inputs or predicted variances. If the predicted moments equal their conditional targets, averaging the preceding identities within a group gives zero mean residual and agreement between mean predicted variance and mean squared residual. These comparisons assess the fitted moments without requiring repeated observations at an identical continuous state. Equation~\ref{eq:residual-map} gives the corresponding endpoint comparison by multiplying both variance and squared residual by $(1-t)^2$. This is the basis of the variance--residual plots in Section~\ref{sec:posterior-diagnostics}.

A complementary check compares Gaussian NLLs at a shared predicted mean. Evaluating covariance models on the same reference states keeps the prediction residuals fixed, so the covariance-weighted residual term and log determinant in Eq.~\ref{eq:full-velocity-nll} jointly assess how each covariance fits those residuals. Appendix~\ref{app:experiment-molecule} applies this comparison within the centered coordinate subspace, using the learned molecular covariance and a covariance that depends only on time and is fitted on validation data.

\subsection{Readouts along generated trajectories}
\label{app:diagnostic-reporting}

At a visited state $Y_t=y$, the model approximates the reference endpoint posterior $p(X_1\mid X_t=y)$. Its mean $\pfmVelocity{\mu_\theta}(y,t)=y+(1-t)\pfmVelocity{v_\theta}(y,t)$ predicts the endpoint, and its covariance $\pfmDiffusion{\Sigma_\theta}^x(y,t)=(1-t)^2\pfmDiffusion{\Sigma_\theta}(y,t)$ describes the dispersion of compatible data endpoints. Both readouts are available under ODE and SDE sampling. Variance maps and standard-deviation contours around the predicted means display these readouts throughout generation, as in Section~\ref{sec:posterior-diagnostics}.

The same velocity covariance also determines the SDE diffusion tensor $\pfmDiffusion{a_\theta}=\lambda(t)\pfmDiffusion{\Sigma_\theta}$. Thus the endpoint factor $(1-t)^2$ converts velocity dispersion into endpoint dispersion, while $\lambda(t)$ scales the stochastic perturbations used for sampling. Appendix~\ref{app:diffusion-design} develops this noise construction. The moment and likelihood checks above assess the posterior readouts on reference states with known endpoints.

\section{Conditional Generation}
\label{app:conditional}

This appendix extends the posterior learning, sampling, and diagnostics of Sections~\ref{sec:learning}--\ref{sec:diagnostics} to an external generation input $C$. After defining the conditional reference distribution, we develop posterior learning and endpoint readouts, construct ODE and SDE sampling, and assess the readouts through posterior diagnostics. The condition can specify a class, observation, or reference structure~\citep{ma2025beyond}. It is held fixed within each generation trajectory, while the sampled endpoint pair $Z=(X_0,X_1)$ specifies the reference interpolation.

\subsection{Conditional reference construction}
\label{app:conditional-reference}

Let $C\sim p_C$ and $X_1\mid C=c\sim p_{\mathrm{data}}(\cdot\mid c)$. Draw an independent source $X_0\sim p_0=\mathcal N(0,I)$, so the joint law is
\begin{equation}
 p_C(dc)\,p_0(dx_0)\,p_{\mathrm{data}}(dx_1\mid c).
 \label{eq:conditional-reference-law}
\end{equation}
For each pair, retain the reference interpolation and velocity target~\citep{lipman2023flow,lipman2024flowmatchingguidecode}
\begin{equation}
 X_t=(1-t)X_0+tX_1,\qquad \pfmVelocity{V}=X_1-X_0,\qquad
 p_t(\cdot\mid c)=\operatorname{Law}(X_t\mid C=c).
 \label{eq:conditional-reference-path}
\end{equation}
At fixed $c$, the marginal velocity is
\begin{equation}
 \pfmVelocity{v}(x,t,c)=\mathbb E[\pfmVelocity{V}\mid X_t=x,C=c].
 \label{eq:conditional-velocity}
\end{equation}
Applying the conditional-expectation calculation in Appendix~\ref{app:fm} within each $c$ gives
$\partial_t p_t(\cdot\mid c)+\nabla_x\!\cdot(p_t(\cdot\mid c)\pfmVelocity{v}(\cdot,t,c))=0$.
The source density at $t=0$ is $p_0$ for every condition. The affine identity $X_1=X_t+(1-t)\pfmVelocity{V}$ holds for each reference pair and provides the posterior correspondence used next.

\subsection{Learning conditional velocity posteriors}
\label{app:conditional-learning}

At fixed $t<1$, the information supplied by the reference state after observing $C$ is
\begin{equation}
 I(X_t;X_1\mid C)=H(X_1\mid C)-H(X_1\mid X_t,C).
 \label{eq:cmi}
\end{equation}
Model the remaining endpoint posterior $p(x_1\mid x,t,c)$ by $q_\theta^x(x_1\mid x,t,c)$. Under finite conditional entropies and cross-entropies and a model density positive on the posterior support, the decomposition in Appendix~\ref{app:information-bounds} applies conditionally and gives
\begin{equation}
 \begin{aligned}
 \mathbb E[-\log q_\theta^x(X_1\mid X_t,t,C)]
 &=H(X_1\mid X_t,C)\\
 &\quad+\mathbb E_{X_t,C}\operatorname{KL}\!\left(
 p(\cdot\mid X_t,C)\,\|\,q_\theta^x(\cdot\mid X_t,t,C)\right).
 \end{aligned}
 \label{eq:conditional-cross-entropy-expanded}
\end{equation}
Subtracting this identity from $H(X_1\mid C)$ gives the conditional counterpart of the variational lower bound in Eq.~\ref{eq:mi-bound}~\citep{barber2003variational,poole2019variational,mohamed2015variational}:
\begin{equation}
 I(X_t;X_1\mid C)\geq H(X_1\mid C)
 +\mathbb E\log q_\theta^x(X_1\mid X_t,t,C).
 \label{eq:cmi-bound}
\end{equation}
The reference construction fixes the CMI. Minimizing the conditional posterior NLL reduces the expected posterior KL and tightens this bound, as in Appendix~\ref{app:information-bounds}.

Holding $(x,t,c)$ fixed, the affine change of variables has the same Jacobian as in Appendix~\ref{app:posterior-map}:
\begin{equation}
 q_\theta^x(x_1\mid x,t,c)
 =\frac{q_\theta((x_1-x)/(1-t)\mid x,t,c)}{(1-t)^d}.
 \label{eq:conditional-density-map}
\end{equation}
Consequently the endpoint and velocity NLLs differ by $d\log(1-t)$, and their true conditional entropies obey
$H(X_1\mid X_t,C)=H(\pfmVelocity{V}\mid X_t,C)+d\log(1-t)$.
The conditional velocity model is
\begin{equation}
 \begin{aligned}
 (\pfmVelocity{v_\theta},\pfmDiffusion{\ell_\theta})&=F_\theta(x,t,c),\qquad
 \pfmDiffusion{\gamma_\theta}=\exp\pfmDiffusion{\ell_\theta},\qquad
 \pfmDiffusion{\Sigma_\theta}=\operatorname{diag}\pfmDiffusion{\gamma_\theta},\\
 q_\theta(v\mid x,t,c)&=\mathcal N\!\left(v;\pfmVelocity{v_\theta}(x,t,c),\pfmDiffusion{\Sigma_\theta}(x,t,c)\right).
 \end{aligned}
 \label{eq:conditional-model}
\end{equation}
With $t\sim r$ independent of $(C,X_0,X_1)$, the training objective becomes
\begin{equation}
 \begin{aligned}
 \mathcal L_{\mathrm{PFM}}^C
 &=\frac12\mathbb E\left[
 (\pfmVelocity{V}-\pfmVelocity{v_\theta})^\top\pfmDiffusion{\Sigma_\theta}^{-1}
 (\pfmVelocity{V}-\pfmVelocity{v_\theta})+\log\det\pfmDiffusion{\Sigma_\theta}\right]\\
 &=\frac12\mathbb E\left[\sum_{i=1}^d
 \left(\frac{(\pfmVelocity{V_i}-\pfmVelocity{v_{\theta,i}})^2}{\pfmDiffusion{\gamma_{\theta,i}}}
 +\log\pfmDiffusion{\gamma_{\theta,i}}\right)\right].
 \end{aligned}
 \label{eq:conditional-pfm-loss}
\end{equation}
where all outputs are evaluated at $(X_t,t,C)$. Under the finite-expectation conditions of Appendix~\ref{app:gaussian-likelihood}, the complete endpoint NLL is
\begin{equation}
 \mathbb E[-\log q_\theta^x(X_1\mid X_t,t,C)]
 =\mathcal L_{\mathrm{PFM}}^C+d\,\mathbb E_t\log(1-t)+\frac d2\log(2\pi).
 \label{eq:conditional-full-nll}
\end{equation}

The conditional risk in Appendix~\ref{app:moments} applies at each $(x,t,c)$. With freely varying means and positive diagonal variances and finite, strictly positive true coordinate variances, its optimum satisfies
\begin{equation}
 \pfmVelocity{v_\theta}(x,t,c)=\pfmVelocity{v}(x,t,c),\qquad
 \pfmDiffusion{\gamma_{\theta,i}}(x,t,c)=\operatorname{Var}(\pfmVelocity{V_i}\mid X_t=x,C=c).
 \label{eq:conditional-moments}
\end{equation}
Fixing $\pfmDiffusion{\Sigma_\theta}=I$ reduces Eq.~\ref{eq:conditional-pfm-loss} to $\tfrac12\mathbb E\|\pfmVelocity{V}-\pfmVelocity{v_\theta}(X_t,t,C)\|^2$, the FM objective for conditional generation. Algorithm~\ref{alg:training} implements this formulation by sampling $(C,X_1)$ jointly, drawing independent $X_0,t$, and passing $C$ to both posterior heads in the same NLL update.

The affine map in Eq.~\ref{eq:conditional-density-map} converts these velocity moments into the conditional endpoint-posterior readouts corresponding to Eq.~\ref{eq:endpoint-readout}:
\begin{equation}
 \begin{aligned}
 \pfmVelocity{\mu_\theta}(x,t,c)&=x+(1-t)\pfmVelocity{v_\theta}(x,t,c),\\
 \pfmDiffusion{\Sigma_\theta}^x(x,t,c)&=(1-t)^2\pfmDiffusion{\Sigma_\theta}(x,t,c).
 \end{aligned}
 \label{eq:conditional-readouts}
\end{equation}
These readouts require no additional head and are available under both samplers below.

\subsection{Posterior-driven ODE and SDE sampling}
\label{app:conditional-dynamics}

Fix $C=c$ throughout a generation trajectory. For the independent Gaussian source, Appendix~\ref{app:score} applied to $p_t(\cdot\mid c)$ gives
\begin{equation}
 s(x,t,c)=\nabla_x\log p_t(x\mid c)
 =\frac{t\pfmVelocity{v}(x,t,c)-x}{1-t},\qquad
 \widehat s_\theta(x,t,c)=\frac{t\pfmVelocity{v_\theta}(x,t,c)-x}{1-t}.
 \label{eq:conditional-score}
\end{equation}
The mean defines the ODE, initialized from $Y_0\sim\mathcal N(0,I)$:
\begin{equation}
 dY_t=\pfmVelocity{v_\theta}(Y_t,t,c)\,dt.
 \label{eq:conditional-ode}
\end{equation}
To scale stochastic perturbations by learned uncertainty, apply the posterior normalization principle of Eq.~\ref{eq:posterior-normalization}. At fixed $c$, requiring isotropic noise after the same normalization gives the diffusion tensor and its Brownian coefficient,
\begin{equation}
 \pfmDiffusion{a_\theta}(x,t,c)=\lambda(t)\pfmDiffusion{\Sigma_\theta}(x,t,c),\qquad
 \pfmDiffusion{B_\theta}(x,t,c)=\sqrt{\lambda(t)}\operatorname{diag}\sqrt{\pfmDiffusion{\gamma_\theta}(x,t,c)}.
 \label{eq:conditional-diffusion}
\end{equation}
Appendix~\ref{app:diffusion-design} derives this construction. Adding diffusion requires the corresponding drift correction, so the shared ODE--SDE correspondence determines the full drift and sampler:
\begin{align}
 b_\theta(x,t,c)
 &=\pfmVelocity{v_\theta}(x,t,c)
 +\tfrac12\nabla_x\!\cdot\pfmDiffusion{a_\theta}(x,t,c)
 +\tfrac12\pfmDiffusion{a_\theta}(x,t,c)\widehat s_\theta(x,t,c),\nonumber\\
 dY_t&=b_\theta(Y_t,t,c)\,dt+\pfmDiffusion{B_\theta}(Y_t,t,c)\,dW_t.
 \label{eq:conditional-sde}
\end{align}
All spatial derivatives act on $x$, with $c$ held fixed. At $\lambda\equiv0$ the SDE reduces to Eq.~\ref{eq:conditional-ode}. Algorithms~\ref{alg:ode} and~\ref{alg:sde} evaluate every field and readout at the same input $c$ throughout sampling.

With the exact conditional velocity and score, the compatibility proof in Appendix~\ref{app:compatibility} applies to $p_t(\cdot\mid c)$ under its stated regularity assumptions. The resulting ODE and SDE have these conditional marginals. Likewise, the proof of Proposition~\ref{prop:kl} in Appendix~\ref{app:kl-proof} applies at each fixed $c$: for two densities evolving under the same exact conditional coefficients, the relative-entropy dissipation is weighted by $\pfmDiffusion{a_\theta}(\cdot,t,c)$. This is the conditional counterpart of the dynamics developed in Appendix~\ref{app:dynamics}.

\subsection{Posterior diagnostics}
\label{app:conditional-diagnostics}

At a generated state $Y_t=y$, the endpoint readouts in Eq.~\ref{eq:conditional-readouts} approximate the moments of $p(X_1\mid X_t=y,C=c)$. They describe the data endpoints compatible with that reference state and generation condition, following the interpretation in Section~\ref{sec:diagnostics}.

For a conditional reference pair, construct $X_t$ from independent noise and a data endpoint with condition $C$, as in Appendix~\ref{app:conditional-reference}. Define $e_i^v=\pfmVelocity{V_i}-\pfmVelocity{v_{\theta,i}}(X_t,t,C)$ and $e_i^x=X_{1,i}-\pfmVelocity{\mu_{\theta,i}}(X_t,t,C)$. The identity $e_i^x=(1-t)e_i^v$ holds pointwise. Conditioning the residual identities of Appendix~\ref{app:diagnostic-calibration} on both $X_t$ and $C$ gives the same mean-residual and variance--residual checks at matched times. Evaluation uses data endpoints not seen during training, as in Appendix~\ref{app:diagnostic-residuals}. To assess covariance fit, compare conditional Gaussian NLLs on the same reference inputs $(X_t,t,C)$ with a shared predicted mean, following Appendix~\ref{app:diagnostic-calibration}. Appendix~\ref{app:experiment-molecule} applies this comparison to molecular conformations. This completes the conditional formulation of posterior learning, generation, and diagnostics in Sections~\ref{sec:learning}--\ref{sec:diagnostics}. Appendix~\ref{app:experiments} specifies how each task supplies its generation condition.

\section{Endpoint-Mean Parameterization}
\label{app:endpoint}

This appendix develops direct endpoint-mean prediction as an alternative parameterization of Section~\ref{sec:learning}. On the same reference path, $X_1=X_t+(1-t)\pfmVelocity{V}$ makes endpoint and velocity means interchangeable for $t<1$~\citep{lipman2024flowmatchingguidecode}. The variance head continues to predict velocity log variance. We define this model and its likelihood, identify its conditional moment targets, and construct the fields used by the samplers and diagnostics in Sections~\ref{sec:sampling}--\ref{sec:diagnostics}.

\subsection{Direct endpoint-mean model}
\label{app:endpoint-training}

We develop the alternative to direct velocity prediction introduced in Section~\ref{sec:learning}. An endpoint network $F_\theta^x$ outputs
\begin{equation}
 (\pfmVelocity{\mu_\theta},\pfmDiffusion{\ell_\theta})=F_\theta^x(x,t),\qquad
 \pfmDiffusion{\gamma_\theta}=\exp\pfmDiffusion{\ell_\theta},\qquad
 q_\theta^x=\mathcal N(\pfmVelocity{\mu_\theta},(1-t)^2\operatorname{diag}\pfmDiffusion{\gamma_\theta}).
 \label{eq:direct-endpoint-model}
\end{equation}
Here $\pfmDiffusion{\ell_\theta}$ is the velocity log variance. Inverting Eq.~\ref{eq:endpoint-map} induces
\begin{equation}
 \pfmVelocity{v_\theta}(x,t)=\frac{\pfmVelocity{\mu_\theta}(x,t)-x}{1-t},\qquad
 \pfmVelocity{V}-\pfmVelocity{v_\theta}(X_t,t)=\frac{X_1-\pfmVelocity{\mu_\theta}(X_t,t)}{1-t}.
 \label{eq:endpoint-induced-velocity}
\end{equation}
\subsection{Likelihood and conditional moments}
\label{app:endpoint-likelihood}

Substituting the residual in Eq.~\ref{eq:endpoint-induced-velocity} into Eq.~\ref{eq:pfm-loss} gives
\begin{equation}
 \mathcal L_{\mathrm{PFM}}^x=\frac12\mathbb E\sum_i
 \left[\frac{(X_{1,i}-\pfmVelocity{\mu_{\theta,i}})^2}{(1-t)^2\pfmDiffusion{\gamma_{\theta,i}}}
 +\log\pfmDiffusion{\gamma_{\theta,i}}\right].
 \label{eq:endpoint-loss}
\end{equation}
All outputs are evaluated at $(X_t,t)$. Algorithm~\ref{alg:training} applies with the endpoint network and this integrand. Under the finite-expectation conditions in Appendix~\ref{app:gaussian-likelihood}, the complete objective is
\begin{equation}
 \mathbb E[-\log q_\theta^x(X_1\mid X_t,t)]
 =\mathcal L_{\mathrm{PFM}}^x+d\,\mathbb E_t\log(1-t)+\frac d2\log(2\pi).
 \label{eq:direct-endpoint-full-nll}
\end{equation}
Thus Eqs.~\ref{eq:full-endpoint-nll} and~\ref{eq:direct-endpoint-full-nll} express the same change of variables for the two mean parameterizations. Setting $\pfmDiffusion{\gamma_\theta}=\mathbf1$ in Eq.~\ref{eq:endpoint-loss} yields endpoint-parameterized FM with squared-error weight $(1-t)^{-2}$, corresponding to Eq.~\ref{eq:fm-recovery} in velocity coordinates.

We connect the endpoint network to Proposition~\ref{prop:moments}. At fixed $(x,t)$ with $t<1$, the transformation between candidate endpoint and velocity means is bijective. Under the moment conditions of Proposition~\ref{prop:moments}, the conditional population optimum of Eq.~\ref{eq:endpoint-loss} has
\begin{equation*}
 \pfmVelocity{\mu_\theta}=\mathbb E[X_1\mid X_t=x],\qquad
 \pfmDiffusion{\gamma_{\theta,i}}=\operatorname{Var}(\pfmVelocity{V_i}\mid X_t=x).
\end{equation*}
Equation~\ref{eq:true-endpoint-moments} maps this velocity variance to the diagonal endpoint covariance.

\subsection{Generation dynamics}
\label{app:endpoint-generation}

Substitution of Eq.~\ref{eq:endpoint-induced-velocity} into Eq.~\ref{eq:pfm-ode-score} gives
\begin{equation}
 \widehat s_\theta=\frac{t(\pfmVelocity{\mu_\theta}-x)/(1-t)-x}{1-t}
 =\frac{t\pfmVelocity{\mu_\theta}-x}{(1-t)^2},\qquad
 \pfmDiffusion{a_\theta}=\lambda\operatorname{diag}\pfmDiffusion{\gamma_\theta}
 =\frac{\lambda(t)}{(1-t)^2}\pfmDiffusion{\Sigma_\theta}^x.
 \label{eq:endpoint-sampling-fields}
\end{equation}
With these fields and $\pfmDiffusion{B_\theta}=\sqrt\lambda\operatorname{diag}\sqrt{\pfmDiffusion{\gamma_\theta}}$, the samplers are
\begin{align*}
 dY_t&=\frac{\pfmVelocity{\mu_\theta}(Y_t,t)-Y_t}{1-t}\,dt,\\
 dY_t&=\left[\frac{\pfmVelocity{\mu_\theta}-x}{1-t}
 +\tfrac12\nabla_x\!\cdot \pfmDiffusion{a_\theta}+\tfrac12\pfmDiffusion{a_\theta}\widehat s_\theta\right](Y_t,t)dt
 +\pfmDiffusion{B_\theta}(Y_t,t)dW_t.
\end{align*}
Both initialize from $\mathcal N(0,I)$. Their discrete versions use Algorithms~\ref{alg:ode} and~\ref{alg:sde} after constructing the fields in Eq.~\ref{eq:endpoint-sampling-fields}.

\subsection{Readouts and parameterization correspondence}
\label{app:endpoint-correspondence}

The endpoint mean $\pfmVelocity{\mu_\theta}$ is predicted directly, while $\pfmVelocity{v_\theta}=(\pfmVelocity{\mu_\theta}-x)/(1-t)$ is derived. The variance outputs represent $\pfmDiffusion{\Sigma_\theta}=\operatorname{diag}\exp\pfmDiffusion{\ell_\theta}$ in velocity coordinates and $\pfmDiffusion{\Sigma_\theta}^x=(1-t)^2\pfmDiffusion{\Sigma_\theta}$ in endpoint coordinates. Thus both parameterizations supply the same types of posterior readouts in Section~\ref{sec:diagnostics}.

When the two mean parameterizations satisfy $\pfmVelocity{\mu_\theta}(x,t)=x+(1-t)\pfmVelocity{v_\theta}(x,t)$ and use the same velocity covariance field and schedule $\lambda(t)$, they yield the same velocity, score, diffusion tensor, and drift. The likelihood relation follows from Appendix~\ref{app:gaussian-likelihood}, and the residual relation follows from Appendix~\ref{app:diagnostic-residuals}. These field and likelihood correspondences explain the two mean parameterizations: the main text develops direct velocity prediction, while the ImageNet implementation in Appendix~\ref{app:experiment-image} uses direct endpoint-mean prediction.

For conditional generation, evaluate $F_\theta^x(x,t,c)$ under the reference construction in Appendix~\ref{app:conditional-reference}. Holding $c$ fixed leaves the affine transformation unchanged. The likelihood averages over $(C,X_1,X_0,t)$, and the fields above enter the conditional samplers and diagnostics of Appendices~\ref{app:conditional-dynamics}--\ref{app:conditional-diagnostics}.

\section{Training and Numerical Sampling}
\label{app:numerics}

This appendix turns the objective and continuous dynamics of Sections~\ref{sec:learning}--\ref{sec:sampling} into training and sampling algorithms. The network $F_\theta(x,t)$ outputs a velocity mean $\pfmVelocity{v_\theta}$ and log variance $\pfmDiffusion{\ell_\theta}$, with $\pfmDiffusion{\gamma_\theta}=\exp\pfmDiffusion{\ell_\theta}$ and $\pfmDiffusion{\Sigma_\theta}=\operatorname{diag}\pfmDiffusion{\gamma_\theta}$. We specify the joint training update, compute the variance derivatives required by the SDE drift, and then discretize both samplers. The final two subsections derive the effect of randomized derivatives on a sampling step and explain the terminal time and endpoint output. Multiplication $\odot$, exponentials, and square roots of vectors are elementwise.

Figure~\ref{fig:pfm-method-full} expands Figure~\ref{fig:pfm-method} with the application readouts, connecting the algorithms below to the task implementations in Appendix~\ref{app:experiments}.

\subsection{Training update}
\label{app:training}

We expand the single-loss training procedure in Section~\ref{sec:learning}. Since $\pfmDiffusion{\Sigma_\theta}$ is diagonal, its inverse has entries $\exp(-\pfmDiffusion{\ell_{\theta,i}})$ and its log determinant is $\sum_i\pfmDiffusion{\ell_{\theta,i}}$. Equation~\ref{eq:pfm-loss} is therefore
\begin{equation}
 \mathcal L_{\mathrm{PFM}}=\frac12\mathbb E\sum_i
 \left[\exp(-\pfmDiffusion{\ell_{\theta,i}})(\pfmVelocity{V_i}-\pfmVelocity{v_{\theta,i}})^2+\pfmDiffusion{\ell_{\theta,i}}\right].
 \label{eq:coordinate-loss}
\end{equation}
The variance head produces an unconstrained value $\rho_{\theta,i}$, which is mapped to the log variance used in this loss by
\begin{equation}
 \pfmDiffusion{\ell_{\theta,i}}
 =\pfmDiffusion{\ell_-}+(\pfmDiffusion{\ell_+}-\pfmDiffusion{\ell_-})\operatorname{sigmoid}(\rho_{\theta,i}),
 \qquad \pfmDiffusion{\gamma_{\theta,i}}=\exp\pfmDiffusion{\ell_{\theta,i}}.
 \label{eq:bounded-log-variance}
\end{equation}
For finite $\rho_{\theta,i}$, this gives an open variance interval. Interior optima in Eq.~\ref{eq:bounded-variance-optimum} are attained, while boundary optima are approached as $\rho_{\theta,i}\to\pm\infty$. The same bounded variance supplies the likelihood, diffusion tensor, and endpoint readouts. Table~\ref{tab:implementation-details} lists the task-specific bounds. Algorithm~\ref{alg:training} averages the loss over a minibatch and differentiates through this mapping and the complete shared model.

\noindent\begin{minipage}{\linewidth}
\begin{pfmalgorithm}[Joint posterior training]
\label{alg:training}
\mbox{}\par
\textbf{Input:} network $F_\theta$, data law, parameter-independent time law $r$, and an optimizer.\par
\textbf{Repeat} until the specified training budget is reached:
\begin{enumerate}
 \item Draw a minibatch of endpoints $X_1\sim p_{\mathrm{data}}$. Independently draw $X_0\sim\mathcal N(0,I)$ and $t\sim r$ for each item.
 \item Form $X_t=(1-t)X_0+tX_1$ and $\pfmVelocity{V}=X_1-X_0$.
 \item Evaluate $(\pfmVelocity{v_\theta},\pfmDiffusion{\ell_\theta})=F_\theta(X_t,t)$ once and average the integrand of Eq.~\ref{eq:coordinate-loss} over the minibatch.
 \item Backpropagate this loss through both heads and the shared backbone~\citep{rumelhart1986learning}, and update $\theta$ with the optimizer.
\end{enumerate}
\textbf{Output:} trained parameters $\theta$.
\end{pfmalgorithm}
\end{minipage}\par\medskip
The resulting model supplies the mean, variance, and derived score used by the samplers below. Appendix~\ref{app:conditional-learning} gives its conditional-generation extension.

\subsection{Variance derivatives for the SDE drift}
\label{app:derivatives}

For a trained model, the ODE uses $\pfmVelocity{v_\theta}$ directly. The SDE also uses $\widehat s_\theta=(t\pfmVelocity{v_\theta}-x)/(1-t)$ and $\pfmDiffusion{a_\theta}=\lambda(t)\operatorname{diag}\pfmDiffusion{\gamma_\theta}$ for a chosen nonnegative schedule $\lambda$. The compatible drift derived in Appendix~\ref{app:coordinate-drift} is
\begin{equation*}
 b_\theta=\pfmVelocity{v_\theta}+\frac{\lambda(t)}2
 \left(D+\pfmDiffusion{\gamma_\theta}\odot\widehat s_\theta\right),
 \qquad D_i=\partial_{x_i}\pfmDiffusion{\gamma_{\theta,i}}.
\end{equation*}
To compute the required derivatives at fixed time, define
\begin{equation}
 (J_\gamma)_{ij}=\partial_{x_j}\pfmDiffusion{\gamma_{\theta,i}},\qquad
 D=\operatorname{diag}J_\gamma,\qquad
 D_i=\pfmDiffusion{\gamma_{\theta,i}}\partial_{x_i}\pfmDiffusion{\ell_{\theta,i}}.
 \label{eq:variance-jacobian}
\end{equation}
Rows index variance outputs and columns index input coordinates. Here $\operatorname{diag}$ extracts a matrix diagonal. Applied to a vector, as in Eq.~\ref{eq:heads}, it instead constructs a diagonal matrix. Appendix~\ref{app:diagonal} explains why $J_\gamma$ can be dense.

\paragraph{Exact derivatives.}
A Jacobian--vector product (JVP) computes $J_\gamma \zeta$ for a direction $\zeta$. A vector--Jacobian product (VJP), written as a column vector, computes $J_\gamma^\top \zeta$. For the standard basis vector $e_i$,
\begin{equation}
 D_i=(J_\gamma e_i)_i=(J_\gamma^\top e_i)_i.
 \label{eq:exact-variance-diagonal}
\end{equation}
Using the $d$ basis directions gives the exact diagonal without storing the full Jacobian.

\paragraph{Randomized derivatives.}
Let $\zeta^{(1)},\ldots,\zeta^{(m)}$ be independent Rademacher vectors, whose coordinates independently take $+1$ and $-1$ with equal probability. The JVP estimator~\citep{han2026uncertainty} is
\begin{equation}
 \widehat D=\frac1m\sum_{k=1}^m \zeta^{(k)}\odot J_\gamma \zeta^{(k)}.
 \label{eq:derivative-estimator}
\end{equation}
The estimator is unbiased, with coordinate variance
\begin{equation}
 \mathbb E[\widehat D\mid x,t]=D,\qquad
 \operatorname{Var}(\widehat D_i\mid x,t)
 =\frac1m\sum_{j\ne i}(J_\gamma)_{ij}^2.
 \label{eq:derivative-moments}
\end{equation}
Replacing $J_\gamma \zeta$ in Eq.~\ref{eq:derivative-estimator} by $J_\gamma^\top \zeta$ gives the VJP estimator. It remains unbiased, with coordinate variance $m^{-1}\sum_{j\ne i}(J_\gamma)_{ji}^2$. Appendix~\ref{app:derivative-step-error} derives these moments and their effect on the sampling update.

Model parameters remain fixed during sampling, while differentiation with respect to the current state retains the full variance-field dependence. The SDE therefore requires these derivative evaluations in addition to network forwards.

\subsection{Numerical sampling}
\label{app:updates}

We discretize the samplers in Eqs.~\ref{eq:pfm-ode-score} and~\ref{eq:pfm-sde}. Let $0=t_0<\cdots<t_K=T$ and $h_k=t_{k+1}-t_k$. Initialize $y_0\sim\mathcal N(0,I)$. Euler ODE sampling uses~\citep{lipman2024flowmatchingguidecode}
\begin{equation}
 y_{k+1}=y_k+h_k\pfmVelocity{v_\theta}(y_k,t_k).
 \label{eq:euler}
\end{equation}
This update evaluates the mean at the beginning of each interval. Appendix~\ref{app:terminal} explains the terminal-time convention.

\noindent\begin{minipage}{\linewidth}
\begin{pfmalgorithm}[PFM ODE sampling]
\label{alg:ode}
\mbox{}\par
\textbf{Input:} trained $F_\theta$, a grid ending at $T\leq1$.\par
Draw $y_0\sim\mathcal N(0,I)$.\par
\textbf{For} $k=0,\ldots,K-1$:
\begin{enumerate}
 \item Set $h_k=t_{k+1}-t_k$ and evaluate $\pfmVelocity{v_\theta}(y_k,t_k)$.
 \item If posterior diagnostics are requested, also record $\pfmDiffusion{\Sigma_\theta}$, $\pfmVelocity{\mu_\theta}$, and $\pfmDiffusion{\Sigma_\theta}^x$ using Eq.~\ref{eq:endpoint-readout} at this state.
 \item Apply Eq.~\ref{eq:euler}.
\end{enumerate}
\textbf{Output:} $y_K$, with any recorded posterior readouts.
\end{pfmalgorithm}
\end{minipage}\par\medskip

For SDE sampling, let $\xi_k\sim\mathcal N(0,I)$ be independent across steps and independent of initialization. The Brownian increment is represented by $\sqrt{h_k}\xi_k$. Euler--Maruyama gives~\citep{song2021score,albergo2025stochastic}
\begin{equation}
 y_{k+1}=y_k+h_kb_\theta(y_k,t_k)
 +\sqrt{h_k\lambda(t_k)\pfmDiffusion{\gamma_\theta}(y_k,t_k)}\odot\xi_k.
 \label{eq:em}
\end{equation}
Its Brownian step covariance is $h_k \pfmDiffusion{a_\theta}(y_k,t_k)$. Algorithm~\ref{alg:sde} computes the drift from the same posterior outputs, using exact or randomized derivatives as detailed in Appendix~\ref{app:derivatives}.

\noindent\begin{minipage}{\linewidth}
\begin{pfmalgorithm}[PFM SDE sampling]
\label{alg:sde}
\mbox{}\par
\textbf{Input:} trained $F_\theta$, schedule $\lambda$, derivative procedure, and a grid ending at $T<1$.\par
Draw $y_0\sim\mathcal N(0,I)$.\par
\textbf{For} $k=0,\ldots,K-1$:
\begin{enumerate}
 \item Set $h_k=t_{k+1}-t_k$ and evaluate $(\pfmVelocity{v_\theta},\pfmDiffusion{\ell_\theta})=F_\theta(y_k,t_k)$.
 \item Compute $\pfmDiffusion{\gamma_\theta}=\exp\pfmDiffusion{\ell_\theta}$ and $\widehat s_\theta=(t_k\pfmVelocity{v_\theta}-y_k)/(1-t_k)$.
 \item Compute $D_i=\partial_{y_i}\pfmDiffusion{\gamma_{\theta,i}}$, or its estimate $\widehat D_i$ from fresh probes. Form $b_\theta=\pfmVelocity{v_\theta}+\lambda(t_k)(D+\pfmDiffusion{\gamma_\theta}\odot\widehat s_\theta)/2$, substituting $\widehat D$ when used.
 \item Draw an independent $\xi_k\sim\mathcal N(0,I)$ and apply Eq.~\ref{eq:em}. Posterior readouts at the evaluated state follow from Eq.~\ref{eq:endpoint-readout}.
\end{enumerate}
\textbf{Output:} $y_K$, with any recorded posterior readouts. At steps with $\lambda(t_k)=0$, use Eq.~\ref{eq:euler} directly.
\end{pfmalgorithm}
\end{minipage}\par\medskip
Setting $\lambda\equiv0$ removes the Brownian term and both drift corrections, so Algorithm~\ref{alg:sde} reduces to Algorithm~\ref{alg:ode} without variance derivatives. Both implement the same trained model from Section~\ref{sec:learning}.

\paragraph{FM sampling.}

For the FM rows of Table~\ref{tab:generative-dynamics}, apply Euler--Maruyama to Eq.~\ref{eq:fm-sde-expanded} on the same grid:
\begin{equation}
 \begin{aligned}
 y_{k+1}={}&y_k+h_k\left[\pfmVelocity{v_\eta}(y_k,t_k)
 +\tfrac12\pfmDiffusion{\kappa}(t_k)\widehat s_\eta(y_k,t_k)\right]\\
 &+\sqrt{h_k\pfmDiffusion{\kappa}(t_k)}\xi_k.
 \end{aligned}
 \label{eq:fm-em}
\end{equation}
Each step uses the velocity and its derived score from Appendix~\ref{app:score}, with independent $\xi_k\sim\mathcal N(0,I)$. The prescribed diffusion is isotropic and has no spatial derivative term. Choosing $\pfmDiffusion{\kappa}=0$ gives Euler sampling of the FM ODE, completing the discrete baseline construction from Appendix~\ref{app:baseline-sampling}.

\subsection{Effect of derivative estimation}
\label{app:derivative-step-error}

We first derive the estimator moments in Eq.~\ref{eq:derivative-moments}, then use them to quantify the additional randomness in Algorithm~\ref{alg:sde}. For one probe at fixed $(x,t)$, the error in coordinate $i$ expands as
\begin{equation*}
 \zeta_i(J_\gamma \zeta)_i-D_i=\sum_{j\ne i}(J_\gamma)_{ij}\zeta_i \zeta_j.
\end{equation*}
Each summand has zero mean. Since $\zeta_i^2=1$ and $\mathbb E[\zeta_j\zeta_l]=0$ for distinct $j,l$, the second moment is
\begin{equation*}
 \mathbb E\left[\left(\sum_{j\ne i}(J_\gamma)_{ij}\zeta_i \zeta_j\right)^2\right]
 =\sum_{j\ne i}(J_\gamma)_{ij}^2.
\end{equation*}
Averaging independent probes divides this variance by $m$, giving Eq.~\ref{eq:derivative-moments}. For VJP probes, the same calculation exchanges rows and columns.

Let $\mathcal F_k$ contain initialization and all randomness preceding step $k$, so $y_k$ is measurable with respect to $\mathcal F_k$. Draw the current probes independently of this history and the current Brownian vector $\xi_k$. Replacing $D$ by $\widehat D$ in Eq.~\ref{eq:em} adds
\begin{equation}
 \varepsilon_k=\frac{h_k\lambda(t_k)}2\big[\widehat D(y_k,t_k)-D(y_k,t_k)\big].
 \label{eq:derivative-update-error}
\end{equation}
By Eq.~\ref{eq:derivative-moments}, the JVP version satisfies
\begin{equation}
 \mathbb E[\varepsilon_k\mid\mathcal F_k]=0,\qquad
 \operatorname{Var}(\varepsilon_{k,i}\mid\mathcal F_k)
 =\frac{h_k^2\lambda(t_k)^2}{4m}
 \sum_{j\ne i}(J_\gamma(y_k,t_k))_{ij}^2.
 \label{eq:derivative-step-moments}
\end{equation}
For VJP probes the sum instead uses $(J_\gamma)_{ji}^2$. This identifies the additional step randomness introduced by the derivative procedure in Algorithm~\ref{alg:sde}, separately from its Brownian increment.

\subsection{Terminal time and endpoint output}
\label{app:terminal}

This subsection specifies the terminal convention for the samplers in Section~\ref{sec:sampling} and Algorithms~\ref{alg:ode}--\ref{alg:sde}. Euler ODE sampling may end at $T\leq1$, while all coefficient evaluations occur at $t_k<1$. The SDE construction evaluates the score containing $(1-t)^{-1}$ and uses $T<1$. Both algorithms return $y_K$ at their chosen stopping time $T$. At $T<1$, Eq.~\ref{eq:endpoint-readout} also supplies the endpoint-mean readout $\pfmVelocity{\mu_\theta}(y_K,T)$.

For the continuous-time endpoint statement, suppose a single compatible process is defined on $[0,1)$ and satisfies Appendix~\ref{app:compatibility} on every $[0,T]$ with $T<1$. Its law is then $p_t$ for every $t<1$. The reference interpolation satisfies
\begin{equation*}
 X_t-X_1=(1-t)(X_0-X_1)\longrightarrow0
 \quad\text{almost surely as }t\uparrow1.
\end{equation*}
For every bounded continuous test function $\psi$, dominated convergence gives
\begin{equation*}
 \int\psi\,dp_t=\mathbb E[\psi(X_t)]
 \longrightarrow\mathbb E[\psi(X_1)]=\int\psi\,dp_{\mathrm{data}}.
\end{equation*}
This proves weak convergence of probability measures,
\begin{equation}
 \operatorname{Law}(Y_t)=p_t\Rightarrow p_{\mathrm{data}}
 \quad\text{as }t\uparrow1.
 \label{eq:terminal-law-limit}
\end{equation}

\clearpage
\section{Supplementary Experiments}
\label{app:experiments}

This appendix specifies how the formulation in Sections~\ref{sec:learning}--\ref{sec:diagnostics} is instantiated for the four tasks in Section~\ref{sec:experiments}. The task sections explain the state and conditioning choices, extend the image and action examples and point-cloud moment checks, and detail the molecular covariance comparison. Appendix~\ref{app:trajectory-dynamics} connects one-dimensional sampling trajectories to the evolution of density perturbations. Appendix~\ref{app:sampling-analysis} then examines the covariance choice, stochasticity scale, and sampling cost introduced in Section~\ref{sec:ablation-studies}. Diagnostic intervals describe held-out-data bootstrap resampling, and displayed trajectory times $t$ are rounded to one decimal.

\subsection{Implementation details}
\label{app:implementation-details}
Table~\ref{tab:implementation-details} summarizes the model and training settings. Its log-variance bounds specify $(\pfmDiffusion{\ell_-},\pfmDiffusion{\ell_+})$ in Eq.~\ref{eq:bounded-log-variance}, whose effect on diagonal Gaussian moment recovery is derived in Appendix~\ref{app:moments}. All four models jointly learn mean and variance through the PFM NLL, with the task-specific parameterizations described below; the ShapeNet latent model also uses a KL regularizer. Conditions follow Appendix~\ref{app:conditional} and remain implicit in the readout notation.
\begin{table}[!htbp]
\centering
\caption{Core implementation settings for the four generation tasks in Section~\ref{sec:experiments}. Parameter counts include both PFM output heads and trainable conditioning networks. Batch sizes are effective global batches. Maximum learning rates are reached after warmup, where used. The bounds apply to velocity log-variance outputs, including the pre-projection atomwise scales for molecules.}
\label{tab:implementation-details}
\scriptsize
\setlength{\tabcolsep}{3pt}
\renewcommand{\arraystretch}{1.2}
\begin{tabular}{@{}>{\raggedright\arraybackslash}p{0.14\textwidth}>{\raggedright\arraybackslash}p{0.195\textwidth}>{\raggedright\arraybackslash}p{0.21\textwidth}>{\raggedright\arraybackslash}p{0.195\textwidth}>{\raggedright\arraybackslash}p{0.20\textwidth}@{}}
\toprule
Setting & ImageNet-256 & ShapeNet & GEOM-QM9 & PushT \\
\midrule
Generated state & $256\times256$ RGB image & 2,048 3D points & Centered atom coordinates & 16 two-dimensional actions \\
Condition & Class label & Shape latent & Molecular graph and atom/bond features & Two image/agent-position observations \\
Backbone & JiT-B/16~\citep{li2025backtobasics} & Pointwise network with PointNet encoder and latent prior~\citep{luo2021diffusion} & SO(3)-equivariant TorchMD Transformer~\citep{tholke2022torchmdnet,hassan2024etflow} & Image encoder and conditional 1D U-Net~\citep{chi2025diffusion} \\
PFM parameters (M) & 132.90 & 3.88 & 9.09 & 262.71 \\
Model dimensions & 12 blocks; width 768; 12 heads; patch 16 & Hidden widths 128, 256, 512, 256, 128 & 20 layers; width 160; 8 heads & U-Net channels 512, 1,024, 2,048 \\
Optimizer & AdamW~\citep{loshchilov2019decoupled} & Adam~\citep{kingma2015adam} & AdamW & AdamW \\
Maximum learning rate & $2\times10^{-4}$ & $2\times10^{-3}$ & $7\times10^{-4}$ & $10^{-4}$ \\
Warmup & 5 epochs & None & 2 epochs per 200-epoch stage & 500 updates \\
Learning-rate decay & Constant after warmup & Linear to $10^{-4}$ over updates 200k--400k, then constant & Cosine to $10^{-8}$ in each 200-epoch stage & Cosine to zero \\
Effective batch & 1,024 & 128 & 512 & 64 \\
Training length & 600 epochs & 800k updates & 400 epochs & 3,050 epochs \\
Log-variance bounds & $\pfmDiffusion{[-1,1]}$ & $\pfmDiffusion{[-3,3]}$ & $\pfmDiffusion{[-4,4]}$ & $\pfmDiffusion{[-3,3]}$ \\
\bottomrule
\end{tabular}
\end{table}

PFM SDE sampling uses the linear schedule $\lambda(t)=\lambda_0(1-t)$. The isotropic SDE ablation in Appendix~\ref{app:covariance-ablation} uses the same temporal form, $\pfmDiffusion{\kappa}(t)=\pfmDiffusion{\kappa_0}(1-t)$. For each covariance-ablation comparison, we set $\pfmDiffusion{\kappa_0}=\lambda_0$, so the two SDEs share the same scalar noise schedule.

\subsection{Image generation}
\label{app:experiment-image}
For ImageNet~\citep{deng2009imagenet,russakovsky2015imagenet}, the class label supplies $C$, and the JiT-B/16 model~\citep{li2025backtobasics,dosovitskiy2021imageworth16x16words} generates images in pixel space. Its mean head predicts the endpoint directly using the parameterization in Appendix~\ref{app:endpoint}. Its second head predicts velocity log-variance, so $\pfmVelocity{v_\theta}=(\pfmVelocity{\mu_\theta}-x)/(1-t)$ and $\pfmDiffusion{\Sigma_\theta}^x=(1-t)^2\pfmDiffusion{\Sigma_\theta}$ give the same generation fields and diagnostics as the main formulation.

CIFAR-10~\citep{Krizhevsky2009LearningML} extends the image-generation comparison in Section~\ref{sec:generation-domains}, with a lower FID for the evaluated SDE (Table~\ref{tab:cifar10-table}). Figure~\ref{fig:exp-cifar10} shows generated images together with spatial variance maps, following the main ImageNet presentation. Image metrics use 50,000 class-balanced samples per setting, with 250 integration steps for ImageNet and 50 for CIFAR-10. The CIFAR-10 SDE uses $\lambda_0=0.5$. Images show endpoint-mean readouts near the end of sampling; their variance maps come from an earlier state on the same trajectory and share a contrast transform.
\begin{figure}[!htbp]
\centering
\begin{minipage}[t]{0.7\linewidth}
\vspace{0pt}
\centering
\includegraphics[width=\linewidth]{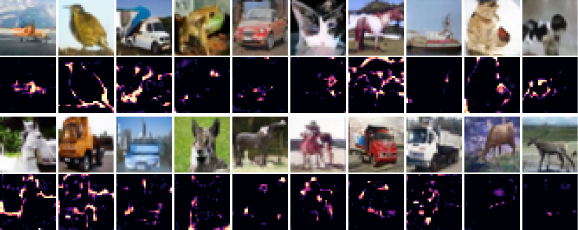}
\par\vspace{-1em}
\caption{CIFAR-10 generation with posterior readouts. Twenty endpoint means are arranged in two rows of ten, each paired with its variance map directly below, as in Figure~\ref{fig:exp-image}.}
\label{fig:exp-cifar10}
\vspace{-1em}
\end{minipage}%
\begin{minipage}[t]{0.3\linewidth}
\vspace{0pt}
\centering
\begin{minipage}[t]{0.92\linewidth}
\makeatletter
\def\@captype{table}
\makeatother
\centering
% Keep the compact caption clear of the table rule.
\setlength{\belowcaptionskip}{1em}
\par\vspace{-1em}
\caption{CIFAR-10 FID.}
\label{tab:cifar10-table}
\vspace{-1em}
\scriptsize
\setlength{\tabcolsep}{4pt}
\begin{tabular}{@{}lr@{}}
\toprule
Method & FID $\downarrow$ \\
\midrule
FM & 3.75 \\
Diffusion & 4.12 \\
PFM ODE & 3.49 \\
PFM SDE & 3.03 \\
\bottomrule
\end{tabular}
\par\vspace{-1em}

\end{minipage}
\end{minipage}
\end{figure}

\subsection{Point-cloud generation}
\label{app:experiment-pointcloud}
Each category uses a pointwise mean/variance network conditioned on a shape latent $C$~\citep{luo2021diffusion}. The PointNet encoder~\citep{qi2017pointnet}, latent prior, and PFM network are jointly trained with the PFM NLL and latent KL regularization of weight $10^{-3}$. The KL term aligns the encoder distribution with the prior used to supply generation conditions~\citep{kingma2013auto}. The NLL is averaged over shapes, points, and coordinates, whereas the encoder-to-prior KL uses the full latent vector and is averaged over shapes. For a given encoder, the joint distribution of shape latents and data defines the conditional reference law in Appendix~\ref{app:conditional-reference}. Generation samples $C$ from the learned latent prior, shared by the ODE and SDE samplers of that category. Reference diagnostics condition on encodings of held-out shapes.

The generation visualizations in Figure~\ref{fig:exp-pointcloud} track the same points over time with a fixed camera and scale. The variance--residual correspondence persists across Chair and Airplane at six reference times, with more visible departures in low-variance and sparsely populated bins (Figure~\ref{fig:exp-pointcloud_full_diagnostics}). This extends the selected-time comparisons in Section~\ref{sec:posterior-diagnostics}.
\begin{figure}[tbp]
\centering
\includegraphics[width=\linewidth]{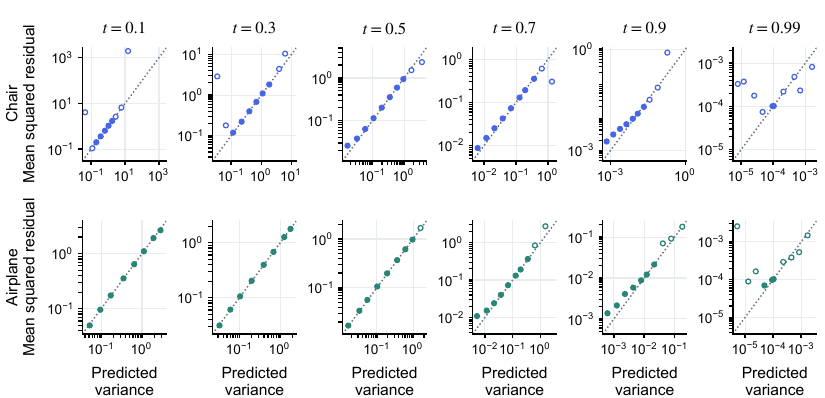}
\par\vspace{-1em}
\caption{Point-cloud posterior diagnostics over six reference times: Chair (top) and Airplane (bottom). Axes compare bin-averaged endpoint coordinate variance and mean squared residual. Dotted lines indicate equality. Hollow markers indicate sparse bins.}
\label{fig:exp-pointcloud_full_diagnostics}
\vspace{-1em}
\end{figure}

\subsection{Molecular conformation generation}
\label{app:experiment-molecule}
The molecular graph and atom/bond features supply $C$; generation predicts atom coordinates with their centroid fixed at zero. For $N$ atoms, let $\mathcal H$ be this $3(N-1)$-dimensional subspace and $P$ its orthogonal projector. The equivariant network~\citep{tholke2022torchmdnet,hassan2024etflow} predicts a centered velocity and one positive, rotation-invariant variance scale $\pfmDiffusion{\gamma_{\theta,i}^{\mathrm{atom}}}$ per atom. Writing the pre-projection covariance as $\pfmDiffusion{G_\theta}$ gives
\begin{equation}
 \begin{aligned}
 P&=\left(I_N-\frac{\mathbf1\mathbf1^\top}{N}\right)\otimes I_3,
 &\pfmDiffusion{G_\theta}&=\operatorname{diag}_{i=1}^N\!\left(\pfmDiffusion{\gamma_{\theta,i}^{\mathrm{atom}}}I_3\right),\\
 \pfmDiffusion{\Sigma_\theta}&=P\pfmDiffusion{G_\theta}P,
 &\pfmDiffusion{\Sigma_\theta}^x&=(1-t)^2\pfmDiffusion{\Sigma_\theta}.
 \end{aligned}
 \label{eq:molecular-covariance}
\end{equation}
This projected covariance replaces the ambient diagonal covariance in Eq.~\ref{eq:posterior}. Because centering couples atom coordinates, the projection introduces correlations between atoms. The reference noise is a standard Gaussian projected by $P$, and the NLL uses the covariance inverse and determinant restricted to $\mathcal H$, normalized by $3(N-1)$. The same posterior-driven construction gives
\begin{equation}
 \begin{aligned}
 \pfmDiffusion{a_\theta}&=\lambda\pfmDiffusion{\Sigma_\theta},\qquad
 \pfmDiffusion{B_\theta}=\sqrt{\lambda}\,P\pfmDiffusion{G_\theta}^{1/2},\\
 b_\theta&=\pfmVelocity{v_\theta}+\tfrac12\left(\nabla_{\mathcal H}\!\cdot\pfmDiffusion{a_\theta}+\pfmDiffusion{a_\theta}\widehat s_\theta\right),\qquad
 \widehat s_\theta=\frac{t\,\pfmVelocity{v_\theta}-x}{1-t}\in\mathcal H.
 \end{aligned}
 \label{eq:molecular-dynamics}
\end{equation}
Here $\nabla_{\mathcal H}$ denotes differentiation within the centered subspace. Both drift and noise stay in $\mathcal H$; numerical updates are recentered. Equations~\ref{eq:molecular-covariance}--\ref{eq:molecular-dynamics} apply the posterior readouts and compatible drift of Sections~\ref{sec:sampling}--\ref{sec:diagnostics} to the molecular geometry.

To evaluate the divergence in Eq.~\ref{eq:molecular-dynamics}, let $\pfmDiffusion{\gamma_\theta^{\mathrm{pre}}}=\operatorname{diag}\pfmDiffusion{G_\theta}\in\mathbb R^{3N}$ collect the pre-projection scales, repeating each atom's scale over its three coordinates. Let $J_{\mathrm{pre}}$ be its input Jacobian at fixed time and molecular condition. Since $P$ is constant,
\begin{equation*}
 \nabla_{\mathcal H}\!\cdot\pfmDiffusion{a_\theta}
 =\lambda(t)P\operatorname{diag}(J_{\mathrm{pre}}P).
\end{equation*}
The diagonal-extraction procedures in Appendix~\ref{app:derivatives} apply to $J_{\mathrm{pre}}P$, whose JVP and VJP are $J_{\mathrm{pre}}(P\zeta)$ and $P J_{\mathrm{pre}}^\top\zeta$, respectively. Multiplying the exact diagonal or its randomized estimate by $\lambda(t)P$ gives the divergence used in the drift.

GEOM-QM9 coverage uses a $0.5\,\text{\AA}$ matching threshold. Performance metrics use chirality correction without force-field relaxation~\citep{hassan2024etflow}. The visualizations and posterior checks use model readouts before this correction. For molecular sampling, $\lambda_0=0.3$ is selected by precision matching distance in the evaluated sweep.

To assess the learned covariance at a shared mean, Figure~\ref{fig:exp-posterior_diagnostics}(d) compares it with a time-only baseline that is isotropic on $\mathcal H$. At each reference time, the baseline's scalar velocity variance is fitted by averaging the squared velocity residual norm divided by $3(N-1)$ over validation molecules, with equal weight per molecule. Both models use PFM's predicted mean. Gaussian NLL is evaluated in velocity coordinates, omits common constants, and is normalized by the subspace dimension $3(N-1)$. The learned covariance gives lower NLL at every evaluated time, supporting a better conditional Gaussian fit than a single variance scale per time. This comparison complements the variance--residual checks in Section~\ref{sec:posterior-diagnostics} by evaluating covariance allocation while holding the mean fixed.

\subsection{Action generation on PushT}
\label{app:experiment-pusht}
The policy conditions on two observations, each containing a $96\times96$ image and the two-dimensional agent position; the image encoder uses $84\times84$ crops. Within each planning call, this condition $C$ is fixed while the conditional 1D U-Net~\citep{chi2025diffusion} generates a 16-step sequence of two-dimensional action targets. The controller executes eight steps and replans from new observations~\citep{chi2025diffusion}. This instantiates Appendix~\ref{app:conditional} within the control loop shown in Section~\ref{sec:generation-domains}. Control scores average the episode maximum normalized overlap over 50 test initial states; the table reports ODE and SDE scores at 20 and 50 integration steps. Appendix~\ref{app:stochasticity-ablation} describes the noise-scale comparison at these two budgets. Figures~\ref{fig:exp-pusht_appendix_06} and~\ref{fig:exp-pusht_appendix_35} complement the two main examples with four additional episodes. Ellipses show endpoint action standard deviations in workspace coordinates, using the same scale as the actions.
\begin{figure}[!htbp]
\centering
\includegraphics[width=\linewidth]{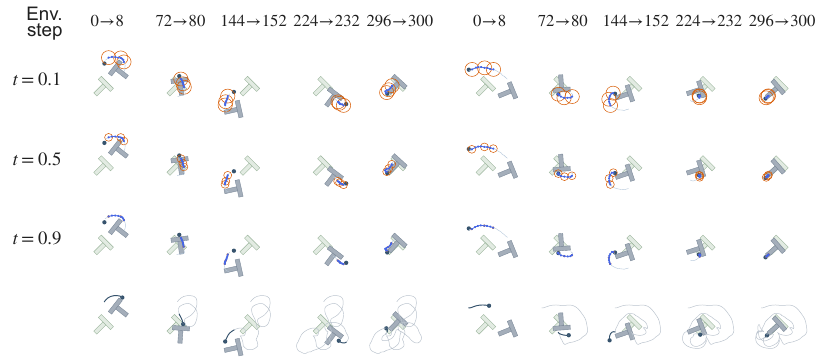}
\par\vspace{-1em}
\caption{PushT posterior readouts for two additional episodes, following Figure~\ref{fig:exp-pusht}. Columns indicate the environment steps from planning to execution, and $t$ denotes generation time. Upper rows show endpoint action means $\pfmVelocity{\mu_\theta}$ (blue) and standard-deviation ellipses from $\pfmDiffusion{\Sigma_\theta}^x$ (orange). The bottom row shows execution.}
\label{fig:exp-pusht_appendix_06}
\vspace{-1em}
\end{figure}

\begin{figure}[!htbp]
\centering
\includegraphics[width=\linewidth]{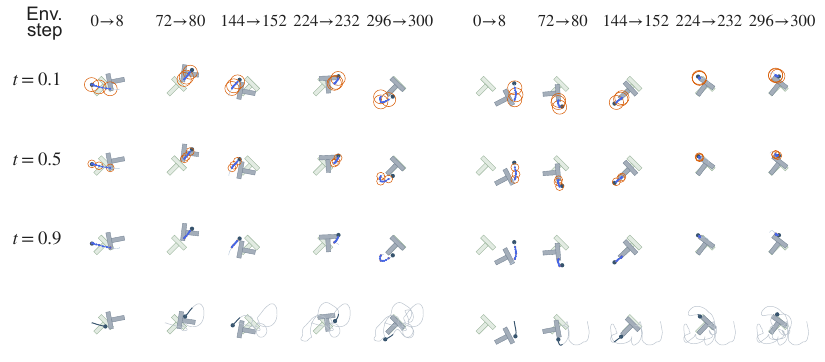}
\par\vspace{-1em}
\caption{PushT posterior readouts for two additional episodes, following Figure~\ref{fig:exp-pusht}. Columns indicate the environment steps from planning to execution, and $t$ denotes generation time. Upper rows show endpoint action means $\pfmVelocity{\mu_\theta}$ (blue) and standard-deviation ellipses from $\pfmDiffusion{\Sigma_\theta}^x$ (orange). The bottom row shows execution.}
\label{fig:exp-pusht_appendix_35}
\vspace{-1em}
\end{figure}

\clearpage
\subsection{One-dimensional generation dynamics}
\label{app:trajectory-dynamics}

This section develops three views of the sampling construction in Section~\ref{sec:sampling}: transport along individual trajectories, adjustment of stochasticity within one posterior model, and the evolution of density perturbations. A one-dimensional Gaussian mixture provides the common target, with component weights $(0.25,0.45,0.30)$, means $(-2.5,-0.4,2.0)$, and standard deviations $(0.22,0.40,0.28)$. Sampling starts from $\mathcal N(0,1)$. PFM uses the linear reference path of Eq.~\ref{eq:reference-path}, while diffusion uses a variance-preserving noising process with $f(x,\tau)=-6x$ and $\pfmDiffusion{g}=\sqrt{12}$. Both networks have three hidden layers of width 128 with SiLU activations and are trained for 60,000 updates. PFM uses Eq.~\ref{eq:pfm-loss} with velocity log-variance bounded in $\pfmDiffusion{[-6,4]}$, and diffusion uses denoising score matching as in Appendix~\ref{app:diffusion-training}.

Figure~\ref{fig:trajectory-dynamics} compares diffusion and PFM under ODE and SDE sampling. Backgrounds show the respective analytic reference densities, while trajectories follow the learned fields. For PFM, the posterior mean supplies deterministic transport. The SDE pairs posterior-scaled noise with the drift correction in Eq.~\ref{eq:pfm-sde}, so both terms change as stochasticity is introduced. The right panels show all four samplers resolving the three target modes. Sampling uses Euler or Euler--Maruyama with 2,000 uniform steps to $T=0.999$, including the variance derivative in the full PFM drift. Generated densities use 20,000 samples.
% Source: pfm-trajectory-visualization/runs/update_060k/main/figures/
% comparison_four_dynamics.pdf (exact update-60k EMA, training seed 42).
\begin{figure}[p]
\centering
\includegraphics[width=\linewidth]{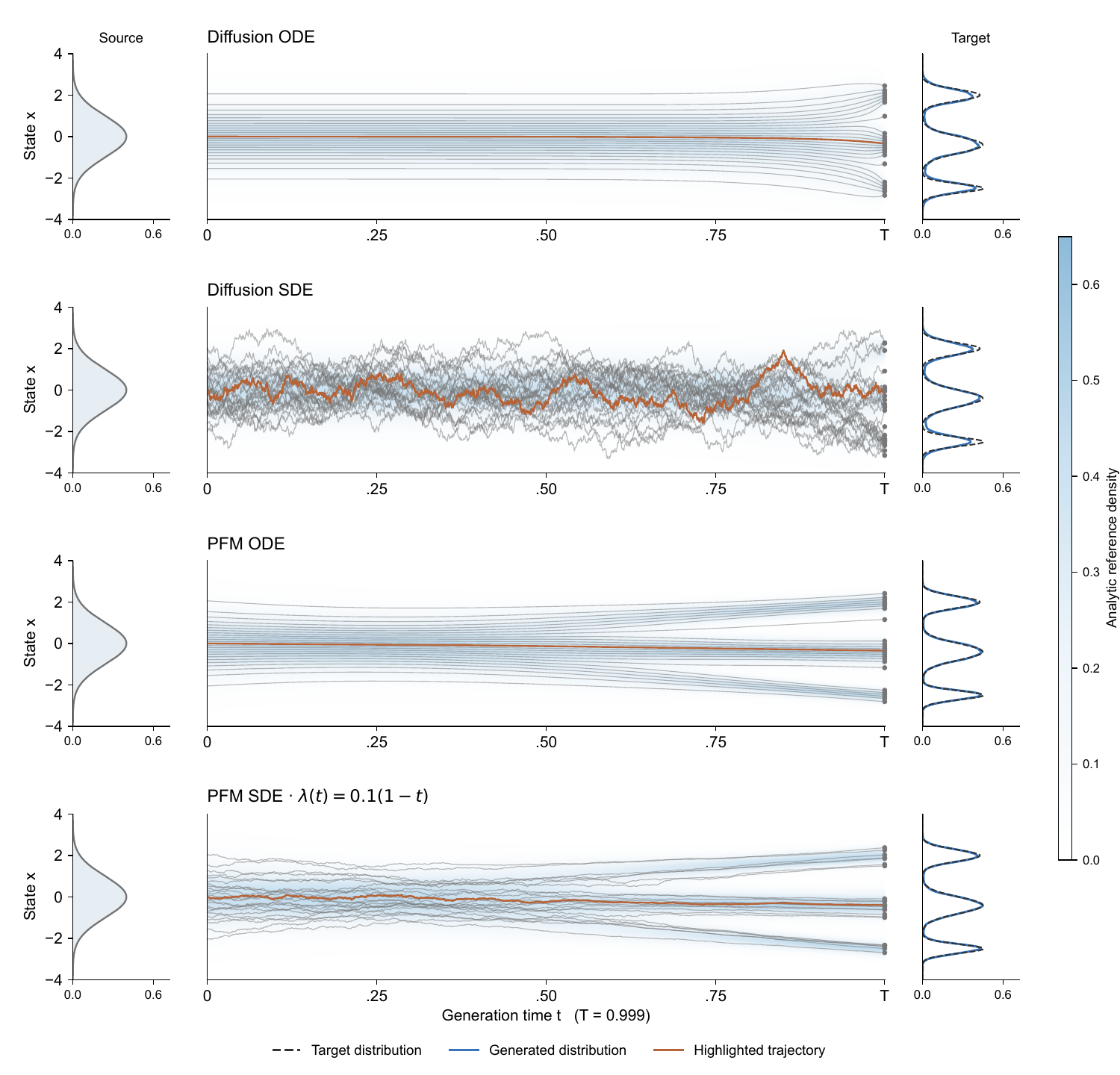}
\par\vspace{-1em}
\caption{Learned diffusion and PFM dynamics on a one-dimensional Gaussian mixture. Gray curves are sampled trajectories, with one highlighted in orange. Backgrounds show each model family's analytic reference density. Right panels compare generated densities (blue) with the target (dashed). The PFM SDE uses $\lambda(t)=0.1(1-t)$. All rows use 2,000 steps to $T=0.999$.}
\label{fig:trajectory-dynamics}
\vspace{-1em}
\end{figure}

To separate the learned noise allocation from its overall strength, Figure~\ref{fig:trajectory-stochasticity} holds the PFM checkpoint fixed and varies $\lambda_0$ in $\lambda(t)=\lambda_0(1-t)$. In Eq.~\ref{eq:posterior-diffusion}, $\pfmDiffusion{\Sigma_\theta}(x,t)$ supplies the local covariance and $\lambda(t)$ scales it. All settings share initial particles and paired Brownian increments on an 8,000-step grid. The six scales connect deterministic transport at $\lambda_0=0$ to increasingly stochastic motion under the same learned covariance field, evaluated along each sampler's trajectory. This gives a trajectory view of the adjustment studied in Section~\ref{sec:ablation-studies}.
% Source: pfm-trajectory-visualization/runs/update_060k/lambda_sweep/figures/
% comparison_pfm_lambda_sweep.pdf (same exact update-60k PFM EMA, seed 42).
\begin{figure}[p]
\centering
\includegraphics[width=\linewidth]{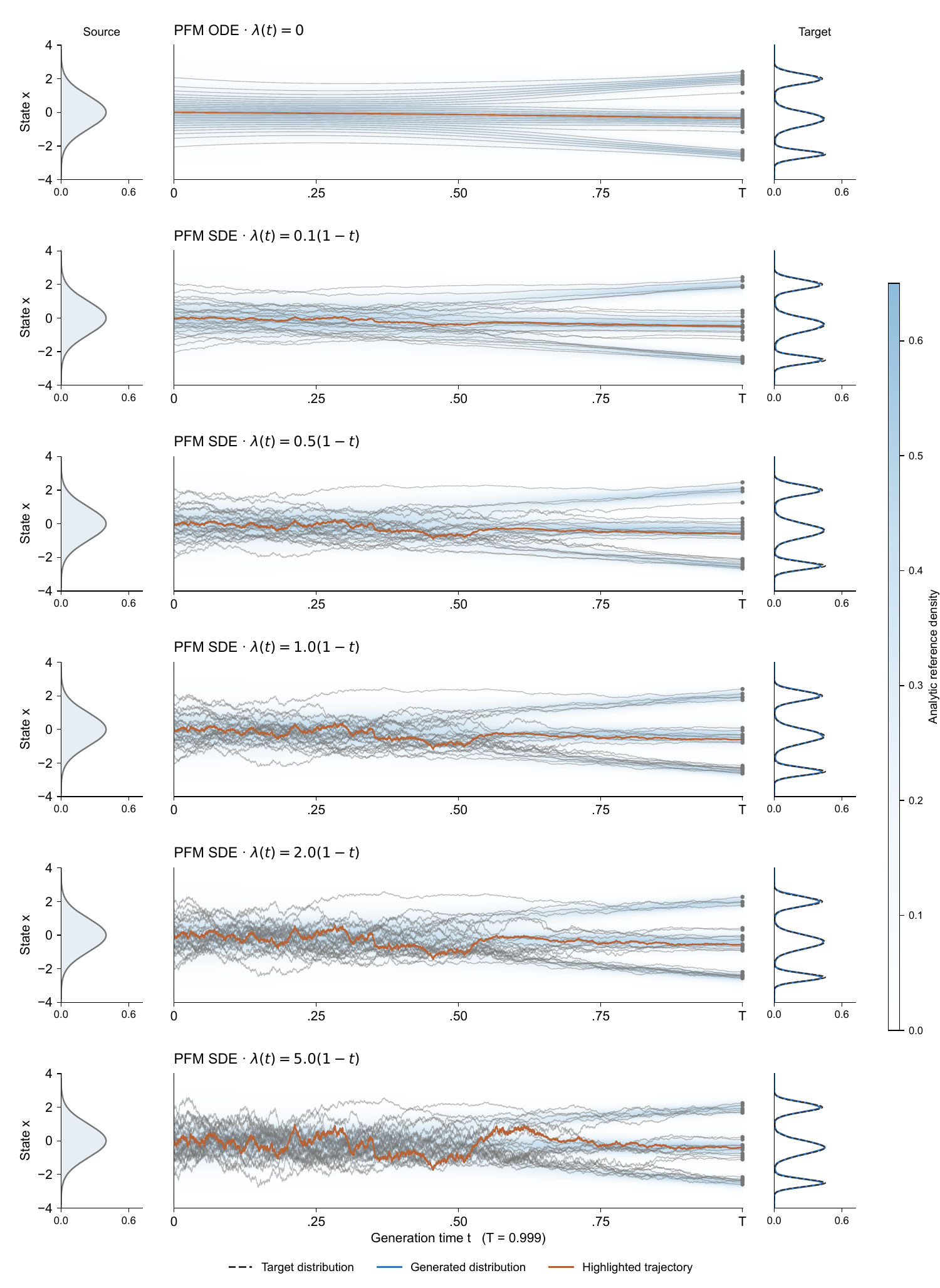}
\par\vspace{-1em}
\caption{PFM trajectories for $\lambda(t)=\lambda_0(1-t)$, using one checkpoint, shared initial particles, and paired Brownian increments. Rows use $\lambda_0\in\{0,0.1,0.5,1,2,5\}$ with 8,000 steps to $T=0.999$. The zero-noise row is the PFM ODE. Colors and density curves follow Figure~\ref{fig:trajectory-dynamics}.}
\label{fig:trajectory-stochasticity}
\vspace{-1em}
\end{figure}

\clearpage

The trajectory views lead to a distributional question: how does an initial density perturbation evolve under the compatible dynamics? To connect this question to Proposition~\ref{prop:kl}, we now use the mixture's analytic conditional velocity moments and score on the same PFM reference path, with $\lambda(t)=1-t$. For each sampler, $p_t$ and $\widetilde p_t$ share these coefficients and start from $p_0=\mathcal N(0,1)$ and $\widetilde p_0=\mathcal N(0.5,1)$, giving initial KL $0.125$. The shifted initialization makes the density discrepancy visible while keeping the dynamics fixed. We evaluate the SDE density evolution through Eq.~\ref{eq:perturbed-density} up to $T=0.999$ and use KL conservation for the ODE.

Figure~\ref{fig:trajectory-kl-dissipation}(a,b) visualizes these dynamics from the same initial samples drawn from $\widetilde p_0$, against the reference density $p_t$. The ODE transports each sample deterministically, while the PFM SDE adds fluctuations scaled by the posterior variance. Panel (c) quantifies the resulting distributional difference. The ODE preserves KL at $0.125$ even as the distributions change shape. Under the PFM SDE, KL decreases to $0.02128$, showing contraction of the initial density perturbation. Panel (d) connects this decrease to the integrated posterior-weighted dissipation in Eq.~\ref{eq:kl-cumulative}: both reach approximately $0.10373$ at $T$. Thus the trajectory and density views link stochastic motion along individual paths to the relative-entropy dissipation established in Proposition~\ref{prop:kl}.
% Sources: pfm-trajectory-visualization/runs/analytic_kl_20260921/ and its transport_review/ subdirectory.
\begin{figure}[!htbp]
\centering
\includegraphics[width=\linewidth]{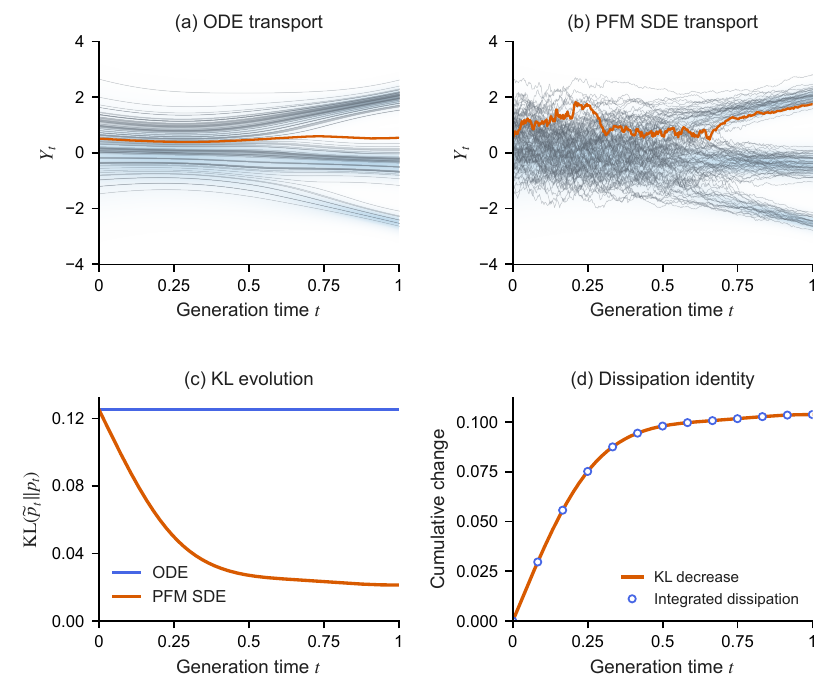}
\par\vspace{-1em}
\caption{Density perturbation under analytic PFM dynamics on the Gaussian-mixture reference path of Figures~\ref{fig:trajectory-dynamics}--\ref{fig:trajectory-stochasticity}, with $p_0=\mathcal N(0,1)$ and $\widetilde p_0=\mathcal N(0.5,1)$. (a,b) ODE and PFM SDE trajectories from shared samples of $\widetilde p_0$, with $\lambda(t)=1-t$ for the SDE. Backgrounds show $p_t$, and orange highlights one trajectory. (c) Density-based KL evolution. (d) SDE KL decrease and integrated dissipation from Eq.~\ref{eq:kl-cumulative}.}
\label{fig:trajectory-kl-dissipation}
\vspace{-1em}
\end{figure}

\clearpage
\subsection{Ablation Studies and Sampling Cost}
\label{app:sampling-analysis}

This section expands Section~\ref{sec:ablation-studies} around the two factors in $\pfmDiffusion{a_\theta}=\lambda(t)\pfmDiffusion{\Sigma_\theta}$. The covariance field determines local noise allocation, and the schedule controls its overall strength. We first compare covariance choices at a shared mean model, then vary stochasticity within the posterior-driven sampler, and finally compare sampling times.

\subsubsection{Role of posterior covariance}
\label{app:covariance-ablation}

PFM ODE and PFM SDE use the same learned velocity $\pfmVelocity{v_\theta}$, with the SDE additionally using $\pfmDiffusion{\Sigma_\theta}$ to allocate noise. An isotropic alternative allows us to examine this allocation while keeping the mean model fixed. We replace the posterior-driven diffusion tensor by $\pfmDiffusion{a_{\mathrm{iso}}}(t)=\pfmDiffusion{\kappa}(t)I$ with the shared scalar schedule $\pfmDiffusion{\kappa}(t)=\lambda(t)$ specified in Appendix~\ref{app:implementation-details}. Using the same derived score $\widehat s_\theta$ from Eq.~\ref{eq:pfm-ode-score}, the correspondence in Eq.~\ref{eq:correspondence} gives
\begin{equation}
 dY_t=\left[\pfmVelocity{v_\theta}(Y_t,t)
 +\frac{\pfmDiffusion{\kappa}(t)}{2}\widehat s_\theta(Y_t,t)\right]dt
 +\sqrt{\pfmDiffusion{\kappa}(t)}\,dW_t.
 \label{eq:pfm-isotropic-ablation}
\end{equation}
The spatial divergence of this time-only tensor is zero. For molecular conformations, isotropy is defined within the centered subspace $\mathcal H$ of Appendix~\ref{app:experiment-molecule}: the ambient tensor is $\pfmDiffusion{\kappa}(t)P$ and the noise is $\sqrt{\pfmDiffusion{\kappa}(t)}P\,dW_t$. All three samplers use the same trained PFM model, with conditions fixed as in Appendix~\ref{app:conditional}. The isotropic variant thus changes how the PFM model is sampled, rather than using the separately trained diffusion model.

Table~\ref{tab:posterior-covariance-ablation} organizes this comparison using the task metrics from Section~\ref{sec:generation-domains}. PFM ODE provides the deterministic reference, while the two SDEs compare prescribed isotropic noise with posterior-dependent noise and their corresponding drift corrections. The PFM ODE and PFM SDE entries reproduce the main generation results. Relative to isotropic sampling, PFM SDE reduces ImageNet FID from $24.31$ to $22.28$ and raises the PushT score from $0.882$ to $0.914$ at 50 steps. The isotropic alternative gives smaller gains on ImageNet, Chair, and PushT, but lowers molecular coverage and worsens Airplane MMD and 1-NNA relative to PFM ODE. With the trained PFM model fixed, posterior-driven SDE sampling improves generation quality over isotropic SDE sampling across all four tasks.
\begin{table}[!htbp]
\centering
\caption{Covariance choice for sampling a shared PFM model. PFM ODE and PFM SDE entries reproduce Tables~\ref{tab:imagenet-table}, \ref{tab:pointcloud-table}, \ref{tab:molecule-table}, and~\ref{tab:pusht-table}. PFM + isotropic SDE uses Eq.~\ref{eq:pfm-isotropic-ablation}. Metrics follow the corresponding main tables. Best values are bold.}
\label{tab:posterior-covariance-ablation}
\small
\setlength{\tabcolsep}{4pt}
\begin{minipage}[t]{0.48\linewidth}
\centering
\textbf{ImageNet-256}\par\smallskip
\begin{tabular}{@{}lrr@{}}
\toprule
Sampler & FID $\downarrow$ & IS $\uparrow$ \\
\midrule
PFM ODE & 24.45 & 65.44 \\
PFM + isotropic SDE & 24.31 & 65.53 \\
PFM SDE & \textbf{22.28} & \textbf{68.06} \\
\bottomrule
\end{tabular}
\end{minipage}\hfill
\begin{minipage}[t]{0.48\linewidth}
\centering
\textbf{PushT}\par\smallskip
\begin{tabular}{@{}lrr@{}}
\toprule
Sampler & 20 steps $\uparrow$ & 50 steps $\uparrow$ \\
\midrule
PFM ODE & 0.827 & 0.878 \\
PFM + isotropic SDE & 0.833 & 0.882 \\
PFM SDE & \textbf{0.848} & \textbf{0.914} \\
\bottomrule
\end{tabular}
\end{minipage}

\medskip
\textbf{ShapeNet (CD). Ideal 1-NNA: $50\%$.}\par\smallskip
{\scriptsize
\setlength{\tabcolsep}{2.5pt}
\begin{tabular*}{\linewidth}{@{\extracolsep{\fill}}lrrrrrr@{}}
\toprule
& \multicolumn{3}{c}{Chair} & \multicolumn{3}{c}{Airplane} \\
Sampler & MMD$\times10^3$ $\downarrow$ & COV (\%) $\uparrow$ & 1-NNA (\%) & MMD$\times10^3$ $\downarrow$ & COV (\%) $\uparrow$ & 1-NNA (\%) \\
\midrule
PFM ODE & 11.57 & 50.98 & 59.95 & 3.23 & 50.13 & 62.70 \\
PFM + isotropic SDE & 11.46 & 51.34 & 59.68 & 3.32 & 50.32 & 62.78 \\
PFM SDE & \textbf{10.50} & \textbf{51.58} & \textbf{58.23} & \textbf{3.06} & \textbf{51.80} & \textbf{60.23} \\
\bottomrule
\end{tabular*}\par}

\medskip
\textbf{GEOM-QM9}\par\smallskip
\begin{tabular}{@{}lrrrr@{}}
\toprule
Sampler & COV-R (\%) $\uparrow$ & COV-P (\%) $\uparrow$ & MAT-R (\AA{}) $\downarrow$ & MAT-P (\AA{}) $\downarrow$ \\
\midrule
PFM ODE & 96.44 & 93.99 & 0.075 & 0.108 \\
PFM + isotropic SDE & 96.14 & 93.58 & 0.078 & 0.123 \\
PFM SDE & \textbf{98.37} & \textbf{96.42} & \textbf{0.063} & \textbf{0.081} \\
\bottomrule
\end{tabular}
\end{table}

\subsubsection{Inference-time stochasticity}
\label{app:stochasticity-ablation}

The stochasticity adjustment in Section~\ref{sec:ablation-studies} varies $\lambda_0$ in the schedule of Appendix~\ref{app:implementation-details} while keeping the learned posterior covariance field fixed. Since $\lambda$ depends only on time, the drift correction in Eq.~\ref{eq:pfm-sde} scales linearly with $\lambda_0$, while the Brownian coefficient scales with $\sqrt{\lambda_0}$. The adjustment therefore changes the full compatible dynamics, with covariance retaining its role in local noise allocation.

The quality trends in Figure~\ref{fig:exp-lambda} distinguish this adjustable strength from the learned covariance: increasing the scale does not give a uniform improvement, and the two PushT integration budgets favor different ranges. Thus the scale provides a sampling choice within one trained model whose effect depends on the task and integration budget. Appendix~\ref{app:trajectory-dynamics} gives the corresponding trajectory view: its six settings show how changing this scale alters individual paths under the same learned covariance field.

\subsubsection{Sampling time}
\label{app:sampling-time}

Table~\ref{tab:sampling-time} expands the cost comparison in Section~\ref{sec:ablation-studies} across the four generation tasks. FM ODE and Diffusion SDE pair each reference model with its named sampler, while PFM ODE and PFM SDE use the same trained posterior model. Relative to PFM ODE, PFM SDE takes $1.16\times$ as long on ImageNet-256, $1.31\times$ on ShapeNet, $1.33\times$ on GEOM-QM9, and $1.17\times$ on PushT. PFM ODE and FM ODE times differ by at most $0.05$ ms per sample. Together with the generation results in Section~\ref{sec:generation-domains}, these measurements describe the quality and time tradeoff between the two PFM samplers.

A PFM forward evaluation jointly returns the mean and variance. PFM ODE uses the mean for transport, while the full PFM SDE also evaluates the variance-field derivatives required by its drift correction. Appendix~\ref{app:derivatives} describes their exact and randomized computation.
\begin{table}[!htbp]
\centering
\caption{Sampling time in milliseconds per generated sample on NVIDIA B200. Columns distinguish each model and its sampler. PushT timings use 50 integration steps.}
\label{tab:sampling-time}
\small
\setlength{\tabcolsep}{8pt}
% Split task rows by category or integration budget when measurements require it.
\begin{tabular*}{\linewidth}{@{\extracolsep{\fill}}lrrrr@{}}
\toprule
Task & FM ODE & Diffusion SDE & PFM ODE & PFM SDE \\
\midrule
ImageNet-256 & 3.43 & 3.78 & 3.42 & 3.98 \\
ShapeNet & 1.87 & 2.11 & 1.88 & 2.47 \\
GEOM-QM9 & 2.18 & 2.42 & 2.17 & 2.89 \\
PushT & 4.25 & 4.49 & 4.30 & 5.04 \\
\bottomrule
\end{tabular*}
\end{table}

\end{document}